\newcommand{\TheTitle}{A Tale of Two Bases: Local-Nonlocal Regularization on Image Patches with Convolution Framelets} 
\newcommand{\TheRunningTitle}{Local-Nonlocal Regularization with Convolution Framelets} 
\newcommand{\TheAuthors}{R. Yin, T. Gao, Y. M. Lu, and I. Daubechies}
\headers{\TheRunningTitle}{\TheAuthors}
\title{{\TheTitle}\thanks{Submitted to the editors \today.
}}
\author{
  Rujie Yin\thanks{Department of Mathematics, Duke University, Durham, NC 27708 (\email{rujie.yin@duke.edu}.)}
  \and
  Tingran Gao\thanks{Department of Mathematics, Duke University, Durham, NC 27708 (\email{trgao10@math.duke.edu}.)}
  \and
  Yue M. Lu\thanks{John A. Paulson School of Engineering and Applied Sciences, Harvard University, Cambridge, MA 02138 (\email{yuelu@seas.harvard.edu}.)}
  \and
  Ingrid Daubechies\thanks{Department of Mathematics, Duke University, Durham, NC 27708 (\email{ingrid@math.duke.edu}.)}
}
\DeclareMathOperator{\rank}{rank}
\DeclareMathOperator{\range}{range}
\DeclareMathOperator*{\argmin}{arg\,min}
\DeclareMathOperator*{\argmax}{arg\,max}
\newcommand{\embed}{\mathcal{E}}
\newcommand{\tx}{\tilde{x}}
\newcommand{\tX}{\widetilde{X}}
\newcommand{\tv}{\tilde{v}}
\newcommand{\tV}{\widetilde{V}}
\newcommand{\vone}{\mathbbm{1}}
\renewcommand{\t}[1]{\widetilde{#1}}
\newcommand{\new}[1]{#1}
\newcommand{\newsiamnonumberremark}[2]{
  \theoremstyle{plain}
  \theoremheaderfont{\normalfont\itshape}
  \theorembodyfont{\normalfont}
  \theoremseparator{.}
  \theoremsymbol{}
  \newtheorem*{#1}[theorem]{#2}
}
\crefname{hypothesis}{Hypothesis}{Hypotheses}
\begin{document}

\maketitle

\begin{abstract}
We propose an image representation scheme combining the local and nonlocal characterization of patches in an image. Our representation scheme can be shown to be equivalent to a tight frame constructed from convolving local bases (e.g. wavelet frames, discrete cosine transforms, etc.) with nonlocal bases (e.g. spectral basis induced by nonlinear dimension reduction on patches), and we call the resulting frame elements {\it convolution framelets}. Insight gained from analyzing the proposed representation leads to a novel interpretation of a recent high-performance patch-based image inpainting algorithm using Point Integral Method (PIM) and Low Dimension Manifold Model (LDMM) [Osher, Shi and Zhu, 2016]. In particular, we show that LDMM is a weighted $\ell_2$-regularization on the coefficients obtained by decomposing images into linear combinations of convolution framelets; based on this understanding, we extend the original LDMM to a reweighted version that yields further improved inpainting results. In addition, we establish the energy concentration property of convolution framelet coefficients for the setting where the local basis is constructed from a given nonlocal basis via a linear reconstruction framework; a generalization of this framework to unions of local embeddings can provide a natural setting for interpreting BM3D, one of the state-of-the-art image denoising algorithms.
\end{abstract}

\begin{keywords}
  image patches, convolution framelets, regularization, nonlocal methods, inpainting
\end{keywords}

\begin{AMS}
  68U10, 68Q25
\end{AMS}

\begin{table}[htbp]
\caption{Notations used throughout this paper}
\begin{center}
\begin{tabular}{c c p{10cm} }
\toprule
$I_k$ & & identity matrix in $R^{k\times k}$\\
$\vone_k$ & & vector with all one entries in $R^k$\\
$\left\| \cdot \right\|_{\mathrm{F}}$ & & matrix Frobenius norm\\
$\ell$ & & dimension of \emph{ambient space}, i.e. number of pixels in a patch\\
$p$ & & dimension of \emph{embedding space}, i.e. number of coordinate functions in the nonlocal embedding\\
$N$ & & number of data points (patches)\\
$X$ & & data matrix in $\mathbb{R}^{N\times \ell}$\\
$\t{X}$ & & embedded data matrix in $\mathbb{R}^{N\times p}$, ``orthogonalized" s.t. $\tX = \Phi_\embed\, C_\embed$\\
$X^i\, (\textrm{resp. }\t{X}^i)$ & & the $i$th coordinate in ambient space (resp. embedding space)\\
$\Phi_\embed$ & & normalized graph bases in $\mathbb{R}^{N\times p}$ from $\embed$ and $\Phi_\embed^{\top}\,\Phi_\embed = I_p$ \\
$\Phi$ & & full orthonormal nonlocal bases extended from $\Phi_\embed$\\
$C_\embed$ & & diagonal matrix with entries $\Vert \tX^i\Vert$\\
$x_i$ & & the $i$th row of $X$, i.e. the $i$th data point \\
$\tx_i$ & & embedding of $x_i$\\
$\embed$ &  & embedding function from $\mathbb{R}^{\ell}$ to $\mathbb{R}^p$\\
$\embed_x$ &  & affine approximation of $\embed$ at point $x$\\
$V_0$ & & patch bases in $\mathbb{R}^{\ell\times p}$,\, s.t. $V_0^{\top}V_0 = I_p$.\\
$V$ & & full orthonormal local bases in $O(\ell)$\\
$C$ & & coefficient matrix\\
$f$ & & 1-D or 2-D signal, e.g. an image\\
$F$ & & patch matrix in $\mathbb{R}^{N\times \ell}$ generated from $f$, a special type of data matrix\\
$F_i$ & & $i$th patch\\
$F^i$ & & $i$th coordinate in patch space, e.g. $i$th pixel in all patches\\
$\t{F}$ & & embedded patch matrix \\
$\Psi $ & & bases in $\mathbb{R}^{N\times \ell}$ from $\phi_i\,v_j^{\top}$\\
$\psi $ & & bases in $\mathbb{R}^N$ from $\phi_i\,*v_j(-\cdot)$\\
$W$ & & affinity matrix of diffusion graph with Gaussian kernel\\
$D$ & & degree matrix from $W$\\
$L$ & & normalized graph diffusion Laplacian\\
$R_L$ & & graph operator in LDMM\\
\bottomrule
\end{tabular}
\end{center}
\label{tab:notation}
\end{table}

\section{Introduction}
\label{sec:introduction}
In the past decades, patch-based techniques such as Non-Local Means (NLM) and Block-Matching with 3-D Collaborative Filtering (BM3D) have been successfully applied to image denoising and other image processing tasks \cite{BCM2005CVPR,BCM2005,BM3D2007,ChatterjeeMilanfar2012,talebi2014global,kheradmand2014general}. These methods can be viewed as instances of graph-based adaptive filtering, with similarity between pixels determined not solely by their pixel values or spatial adjacency, but also by the (weighted) $\ell^{2}$-distance between their neighborhoods, or \emph{patches} containing them. The effectiveness of patch-based algorithms can be understood from several different angles. On the one hand, patches from an image often enjoy sparse representations with respect to certain redundant families of vectors, or unions of bases, which motivated several dictionary- and sparsity-based approaches \cite{EladAharon2006,ChatterjeeMilanfar2009,NLSM2009}; on the other hand, the nonlocal characteristics of patch-based methods can be used to build highly data-adaptive representations, accounting for nonlinear and self-similar structures in the space of image patches \cite{IzbickiLee2015,LeeIzbicki2016SpectralSeries}. Combined with adaptive thresholding, these constructions have connections to classical wavelet-based and total variation algorithms \cite{SMC2008,Peyre2008}. Additionally, the patch representation of signals has specific structures that can be exploited in regularization; for example, the inpainting algorithm ALOHA \cite{JY2015} utilized the low-rank block Hankel structure of certain matrix representation of image patches.

Among the many theoretical frameworks built to understand these patch-based algorithms, manifold models have recently drawn increased attention and have provided valuable insights in the design of novel image processing algorithms. Along with the development of manifold learning algorithms and topological data analysis, it is hypothesized that high-contrast patches are likely to concentrate in clusters and along low-dimensional non-linear manifolds; this phenomenon is very clear for cartoon images, see e.g. \cite{Peyre2008,Peyre2011Review}. This intuition was made precise in \cite{LPM2003} and followed-up by more specific Klein bottle models \cite{CIdSZ2008,PereaCarlsson2014} on both cartoon and texture images. Adopting a point of view from diffusion geometry, \cite{SSN2009} interprets the non-local mean filter as a diffusion process on the ``patch manifold,'' relating denoising iterations to the spectral properties of the infinitesimal generator of that diffusion process; similar diffusion-geometric intuitions can also be found in \cite{SMC2008,QiWu2015} which combined patch-based methods with manifold learning algorithms.

Recently, a new method called Low-Dimensional Manifold Model (LDMM) was proposed in \cite{LDMM}, with strong results. LDMM is a direct regularization on the dimension of the patch manifold in a variational argument for patch-based image inpainting and denoising.  The novelty of \cite{LDMM} includes 1) an identity relating the dimension of a manifold with $L^2$-integrals of ambient coordinate functions; and 2) a new graph operator (which we study below) on the nonlocal patch graph obtained via the Point Integral Method (PIM) \cite{LiShiSun2014PIM,ShiSun2014PIMNeumann,ShiSun2013PIMDirichelt}.
The current paper is motivated by our wish to better understand the embedding of image patches in general, and the LDMM construction in particular.

Typically, given an original signal $f\in\mathbb{R}^N$, patch-based methods start with explicitly building for $f$ a redundant representation consisting of \emph{patches} of $f$. The patches either start with or are centered at each pixel\footnote{Possibly with a \emph{stride} larger than $1$ in many applications. In this paper though, we assume that the stride is always equal to $1$ to demonstrate the key ideas. Periodic boundary condition is assumed throughout this paper.} in the domain of $f$, and are of constant length $\ell$, for $1 < \ell < N$. Reshaped into row vectors stacked vertically in the natural order, these patches constitute a Hankel\footnote{For 2-D images, the patch matrix is indeed block Hankel; see e.g. \cite{JY2015}.} matrix $F\in\mathbb{R}^{N\times \ell}$, which we refer to as a \emph{patch matrix} (see Fig~\ref{fig:patch-mat} in Section~\ref{sec:revisit-ldmm} below.) It is the patch matrix $F$, rather than the signal $f$ itself, that constitutes the object of main interest in nonlocal image processing and in particular LDMM; each single pixel of image $f$ is represented in $F$ exactly $\ell$ times, a redundancy that is often beneficially exploited in signal processing tasks. (As will be made clear in \cref{prop:frame}, representing $f\in\mathbb{R}^N$ as $F\in\mathbb{R}^{N\times \ell}$ incurs an ``$\ell$-fold redundancy'' in the sense of frame bounds.) For comparison, earlier image processing models based on total variation \cite{ROF1992} or nonlocal regularization \cite{GilboaOsher2007,GilboaOsher2008} focus on regularizing the signal $f$ directly, whereas more recent state-of-the-art image inpainting techniques such as LDMM and low-rank Hankel matrix completion \cite{JY2015} build upon variational frameworks for the patch matrix $F$, and do not convert $F$ back to $f$ until the optimization step terminates. To our knowledge, the mechanism of these regularization strategies on patch matrices has not been fully investigated.

From an approximation point of view, the patch matrix $F\in\mathbb{R}^{N\times \ell}$ has more flexibility than the original signal $f\in\mathbb{R}^N$ since one can search for efficient representation of the matrix $F$ either in its row space or column space. The idea of learning sparse and redundant representations for rows of $F$, or the patches of $f$, has been pursued in a sequence of works (see e.g. \cite{OlshausenField1997,EASH1999,KRB2000,KMRELS2003,LGBB2005,AEB2006,EladAharon2006} and the references therein); this amounts to learning a redundant dictionary $\mathcal{D}\in\mathbb{R}^{\ell\times m}$, $m\geq \ell$, such that $F=A\mathcal{D}^{\top}$ where the rows of $A\in \mathbb{R}^{N\times m}$ are sparse. In the meanwhile, each column of $F$ can be viewed as a ``coordinate function'' (adopting the geometric intuition in \cite{LDMM}) defined on the dataset of patches, and can thus be efficiently encoded using spectral bases adapted to this dataset: for example, let $\Phi:\mathbb{R}\rightarrow [0, +\infty)$ be a non-negative smooth \emph{kernel function} with exponential decay at infinity, and construct the following positive semi-definite kernel matrix for the dataset of patches of $f$:
\begin{equation*}
  \Phi_{\epsilon} \left( ij \right) = \Phi \left( \frac{\left\| F_i-F_j \right\|_2^2}{\epsilon} \right), \qquad 0\leq i,j \leq N-1
\end{equation*}
where $F_i, F_j$ are the $i$th and $j$th row of the patch matrix $F$, respectively, and $\epsilon>0$ is a bandwidth parameter representing our confidence in the similarity between patches of $f$ (e.g. how small $L^2$-distances should be to reflect the geometric similarity between patches; this is influenced, for example, by the noise level in image denoising tasks). By Mercer's Theorem, $\Phi_{\epsilon}$ admits an eigen-decomposition
\begin{equation*}
  \Phi_{\epsilon} = \sum_{k=1}^N \lambda_k \phi_k\phi_k^{\top}
\end{equation*}
where for each $1\leq k\leq N$ the column vector $\phi_k\in\mathbb{R}^N$ is the eigenvector associated with non-negative real eigenvalue $\lambda_k\in\mathbb{R}$.
These eigenvectors constitute a basis for $\mathbb{R}^N$, with respect to which each column of the patch matrix $F$ can be expanded as a linear combination.
Though such expansions are not sparse in general, they are highly data-adaptive and result in efficient approximations when the eigenvalues have fast decay; see \cite{LeeIzbicki2016SpectralSeries,ABK2015} for theoretical bounds of the approximation error, \cite{Peyre2008} for empirical evidence, and \cite{SMC2008} for applications in semi-supervised learning and image denoising. By construction, the sparse representation for the rows of $F$ relies heavily on the \emph{local} properties of the signal $f$, whereas the spectral expansion for the columns of $F$ captures more \emph{nonlocal} information in $f$. We remark here that many other orthonormal or overcomplete systems can be used to produce different representations for the row and column spaces of the patch matrix $F$: for instance, wavelets or discrete cosine transform can be used in place of a dictionary $\mathcal{D}$, while any linear/nonlinear embedding methods, dimension reduction algorithms (e.g. PCA \cite{Pearson1901PCA}, MDS \cite{MDS1952,MDS1962}, Autoencoder \cite{Autoencoder2006}, t-SNE \cite{tSNE2008}), or Reproducing Kernel Hilbert Space techniques \cite{ScholkopfSmola2002} can work just as well as the kernel $\Phi$; nevertheless, the different choices for the row (resp. column) space of $F$ primarily read off local (resp. nonlocal) information of $f$. These observations motivate us to seek new representations for the patch matrix $F$ that could reflect both local and nonlocal behavior of the signal $f$. This methodology is already implicit in BM3D \cite{BM3D2007}, one of the state-of-the-art image denoising algorithms (see Section~\ref{sec:extension-union-local-embed} for details); we point out in this paper that such a paradigm is much more universal for a wide range of patch-based image processing tasks, and propose a regularization scheme for a signal $f$ based on its coefficients with respect to \emph{convolution framelets} (to be defined in Section~\ref{sec:revisit-ldmm}), a type of signal-adaptive tight frames generated from the adaptive representation of the patch matrix $F$.

As a first attempt at understanding the theoretical guarantees of convolution framelets, we consider the problem of determining ``optimal'' local basis, in the sense of minimum linear reconstruction error, with respect to a fixed nonlocal basis (interpreted as embedding coordinate functions of the patches); convolution framelets constructed from such an ``optimal'' pair of local and nonlocal bases are guaranteed to have an ``energy compaction property'' that can be exploited to design regularization techniques in image processing. In particular, we show that when the nonlocal basis comes from Multi-Dimensional Scaling (MDS), right singular vectors\footnote{Since the singular value decomposition of a patch matrix is not known \emph{a priori} in image reconstruction tasks, the algorithms we propose in this paper are all of iterative nature, with the SVD basis updated in each iteration; similar strategies have previously been utilized in nonlocal image processing algorithms, see e.g. \cite{GilboaOsher2007,GilboaOsher2008}.} of the patch matrix $F$ constitute the corresponding optimal local basis. The linear reconstruction framework itself --- of which LDMM can be viewed as an instantiation --- is general and uses variational functionals associated with nonlinear embeddings, via a linearization. This insight allows us to generalize LDMM by reformulating the manifold dimension minimization in \cite{LDMM} as an equivalent weighted $\ell_2$-minimization on coefficients of such a convolution frame and by using more adaptive weights; for some types of images this proposed scheme leads to markedly improved results. Finally, we note that our framework is widely applicable and can be adapted to different settings, including BM3D~\cite{BM3D2007} (in which case the framework needs to be extended to describe unions of local embeddings, as is done in Section~\ref{sec:extension-union-local-embed} below).

The rest of the paper is organized as follows. In Section~\ref{sec:conv-framelets} we present convolution framelets as a data-adaptive redundant representation combining local and nonlocal bases for signal patch matrices. Section~\ref{sec:approximation-conv-framelets} motivates the energy compaction property of convolution framelets and establishes a guarantee for energy concentration through a linear reconstruction procedure
 related to (nonlinear) dimension reduction \cite{lorenzoIS16}. Section~\ref{sec:revisit-ldmm} interprets LDMM as an $\ell^2$-regularization on the energy concentration of convolution framelet coefficients. This novel interpretation and insights gained from the previous section lead to improvement of LDMM by incorporating more adaptive weights in the regularization. We compare LDMM with our proposed improvement in Section~\ref{sec:numerical} by numerical experiments. Section~\ref{sec:conclusion} summarizes and suggests future work.

\section{Convolution Framelets}
\label{sec:conv-framelets}

Consider a one-dimensional\footnote{The same idea can be easily generalized to signals of higher dimensions.} real-valued signal
\begin{equation*}
  f = \left(f[0],\cdots,f[N-1]\right)^{\top}\in\mathbb{R}^N
\end{equation*}
sampled at $N$ points. We fix the \emph{patch size} $\ell$ as an integer between $1$ and $N$, and assume periodic boundary condition for $f$. For any integer $m\in \left[ 0,\cdots,N-1 \right]$, we refer to the row vector $F_m=\left(f[m],\cdots,f[m+\ell-1]\right)\in\mathbb{R}^{\ell}$ as the \emph{patch} of $f$ at $m$ of length $\ell$. Construct the \emph{patch matrix} of $f$, denoted as $F\in\mathbb{R}^{N\times \ell}$, by vertically stacking the patches according to their order of appearance in the original signal:
\begin{equation}
\label{eq:patch_matrix}
  F = [F_0^{\top},\cdots,F_{N-1}^{\top}]^{\top}.
\end{equation}
See \cref{fig:patch-mat} for an illustration. It is clear that $F$ is a Hankel matrix, and thus $f$ can be reconstructed from $F$ by averaging the entries of $F$ ``along the anti-diagonals'', i.e.\footnote{Note that in \cref{eq:average_along_antidiagonal} the row indices start at 0, but the column indices start at 1 --- for instance, the entry at the upper left corner of $F$ is denoted as $F_{01}$.}
\begin{equation}
\label{eq:average_along_antidiagonal}
  f(n) = \frac{1}{\ell}\sum_{i=1}^{\ell} F_{n-i+1,\,i}\quad\quad n=0,1,\cdots,N-1.
\end{equation}

\begin{figure}
\centering
\includegraphics[width = .6\textwidth]{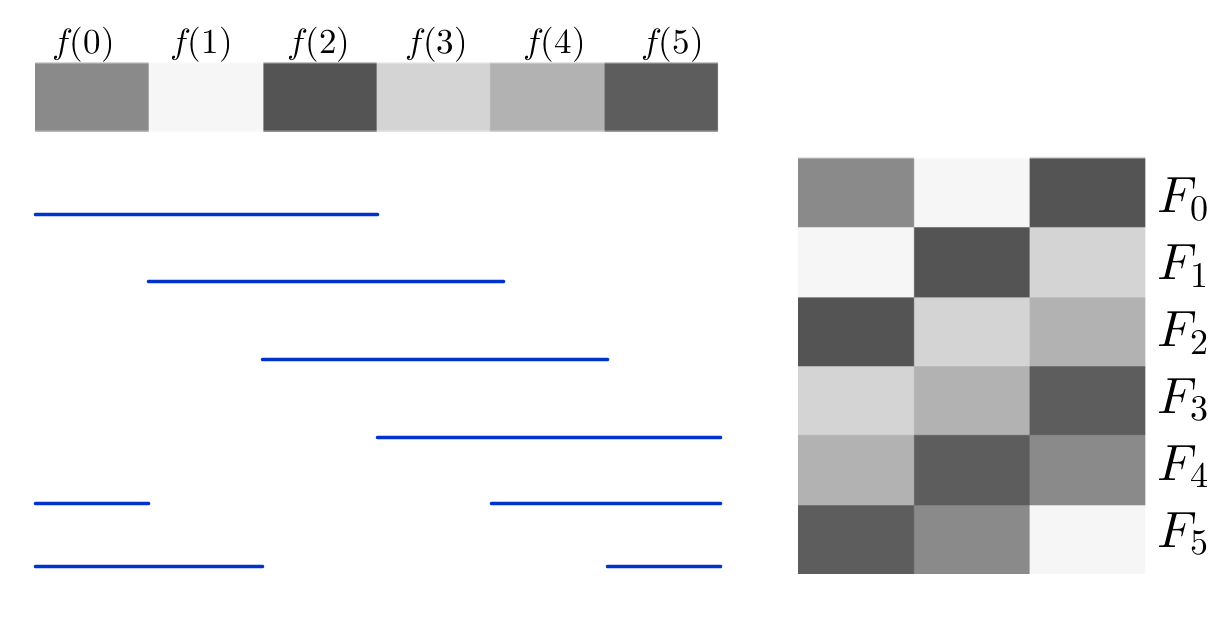}
\caption{Illustration of a patch matrix constructed from a 1-D signal (the blue lines indicate locations of the patches $F_0$ through $F_5$ in the original 1-D signal.)}
\label{fig:patch-mat}
\end{figure}

For simplicity, we introduce the following notations that are standard in signal processing:
\begin{itemize}
\item The \emph{(circular) convolution} of two vectors $v,\,w\in\mathbb{R}^N$ is defined as
\begin{equation*}
  (v * w)\,[n] = \sum_{m = 0}^{N-1}\,v[n-m]\,w[m],
\end{equation*}
where periodic boundary conditions are assumed (as is done throughout this paper);
\item For any $v\in\mathbb{R}^{N_1}$ and $w\in\mathbb{R}^{N_2}$ with $N_1,\,N_2\leq N$, define their \emph{convolution} in $\mathbb{R}^N$ as
\begin{equation*}
  v*w = v^0*w^0
\end{equation*}
where $v^0 = [v^{\top},\,\mathbf{0}^{\top}_{N-N_1}]^{\top},\,w^0 = [w^{\top},\,\mathbf{0}^{\top}_{N-N_2}]^{\top}$ denote the length-$N$ zero-padded versions of $v$ and $w$, respectively;
\item For any $v\in \mathbb{R}^{N_1}$ with $N_1\leq N$, define the \emph{flip} of $v$ as $v(-\cdot)\,[n] = v^0\,[-n]$.
\end{itemize}

Using these notations, the matrix-vector product of $F$ with any $v\in\mathbb{R}^{\ell}$ can be written in convolution form as
\begin{equation}\label{eq: patch-conv}
  Fv = f*v(-\cdot).
\end{equation}
Furthermore, it is straightforward to check for any $w\in\mathbb{R}^{\ell}$ and $v, s\in\mathbb{R}^N$ that
\begin{align}\label{eq: commute}
  s^{\top}\,(v*w) &= \sum_{m=0}^{N-1} s[m]\,\sum_{n=0}^{N-1} v[n]\,w[m-n] = \sum_{n=0}^{N-1}v[n]\,\sum_{m=0}^{N-1}s[m]\,w[m-n]\\
  &= \sum_{n=0}^{N-1}v[n]\sum_{m'=0}^{N-1}s[n+m']\,w[m']
  = v^{\top}\,(s*w(-\cdot)).\notag
\end{align}

Now let $\Phi\in O(N)$ and $V\in O(\ell)$ be orthogonal matrices of dimension $N\times N$ and $\ell\times \ell$, respectively; also denote the columns of $\Phi$, $V$ as $\phi_i$, $v_j$ correspondingly, where $1\leq i\leq N$, $1\leq j\leq \ell$. The outer products of the columns of $\Phi$ with the columns of $V$, denoted as
\begin{equation*}
  \left\{\Psi_{ij} = \phi_i\,v_j^{\top}\,\big|\, i = 1,\cdots, N,\, j = 1,\cdots,\ell\right\},
\end{equation*}
form an orthonormal basis for the space $\mathbb{R}^{N\times \ell}$ of all $N\times \ell$ matrices equipped with inner product $\left\langle A,B \right\rangle = tr \left( AB^{\top} \right)$. The patch matrix $F$ can thus be written in this orthonormal basis as
\begin{equation*}
  F = \sum_{i=1}^N\sum_{j=1}^{\ell} tr \left( F \Psi_{ij}^{\top} \right)\Psi_{ij} = \sum_{i=1}^N\sum_{j=1}^{\ell} C_{ij}\Psi_{ij} = \sum_{i=1}^N\sum_{j=1}^{\ell} C_{ij}\phi_iv_j^{\top},
\end{equation*}
where
\begin{equation*}
    C_{ij}:=tr \left( F \Psi_{ij}^{\top} \right)=tr \left( F v_j\phi_i^{\top} \right) = \phi_i^{\top}\,F\,v_j = \phi_i^{\top}\,(f*v_j(-\cdot)) = f^{\top}\,(\phi_i * v_j),
\end{equation*}
and the last two equalities are due to the identities \eqref{eq: patch-conv} and \eqref{eq: commute} given above.
In other words, we have the following linear decomposition for $F$:
\begin{equation}
  \label{eq:patch_matrix_decom}
  F = \sum_{i=1}^N\sum_{j=1}^{\ell} \left\langle f, \phi_i * v_j \right\rangle \phi_iv_j^{\top}.
\end{equation}
Combining \cref{eq:average_along_antidiagonal} and \cref{eq:patch_matrix_decom} leads to a decomposition of the original signal $f$ as
\begin{equation}
  \label{eq:original_signal_convframelet_decom}
  f = \frac{1}{\ell}\sum_{i=1}^N\sum_{j=1}^{\ell} \left\langle f, \phi_i * v_j \right\rangle \phi_i * v_j,
\end{equation}
where the convolution $\phi_i * v_j$ stems from averaging the entries of $\phi_iv_j^{\top}$ along the anti-diagonals [c.f. \cref{eq:average_along_antidiagonal}].
Define {\it convolution framelets}
\begin{equation}
  \label{eq:defn_convframelet}
  \psi_{ij} = \frac{1}{\sqrt{\ell}}\phi_i * v_j, \qquad i = 1,\cdots,N,\quad j = 1,\cdots,\ell,
\end{equation}
then \cref{eq:original_signal_convframelet_decom} indicates that $\{\psi_{ij}\mid 1\leq i\leq N, 1\leq j\leq \ell\}$ constitutes a tight frame for functions defined on $\mathbb{R}^N$. In fact, we have the following more general observation which can be derived directly from standard frame theory:
\begin{proposition}\label{prop:frame}
Let $V^L\in\mathbb{R}^{n\times n'},\,V^S\in\mathbb{R}^{m\times m'}$ be such that $V^L\,(V^L)^{\top} = I_n,\,V^S\,(V^S)^{\top} = I_m$ with $m \leq n$. Then $v_i^L *\, v_j^S, \, i = 1,\cdots,n',\, j = 1,\cdots,m'$ form a tight frame for $\mathbb{R}^n$ with frame constant $m$.
\end{proposition}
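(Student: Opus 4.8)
The plan is to reduce the tight-frame identity to a single Frobenius-norm computation on the patch matrix of $f$, using the convolution identities \eqref{eq: patch-conv} and \eqref{eq: commute} established above. First I would reinterpret the hypotheses: since $\sum_{i=1}^{n'} v_i^L (v_i^L)^\top = V^L (V^L)^\top = I_n$, the columns $\{v_i^L\}_{i=1}^{n'}$ form a Parseval frame for $\mathbb{R}^n$, i.e.\ $\sum_i \langle g, v_i^L\rangle^2 = \|g\|_2^2$ for every $g \in \mathbb{R}^n$; likewise $\{v_j^S\}_{j=1}^{m'}$ is a Parseval frame for $\mathbb{R}^m$. Thus the goal becomes: for every $f \in \mathbb{R}^n$, $\sum_{i,j} \langle f, v_i^L * v_j^S\rangle^2 = m\|f\|_2^2$.

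Next I would fix $f \in \mathbb{R}^n$ and let $F \in \mathbb{R}^{n\times m}$ be its patch matrix with patch size $m$ (that is, take $N = n$ and $\ell = m$ in the construction of Section~\ref{sec:conv-framelets}). Applying \eqref{eq: commute} with $s = f$, $v = v_i^L$, $w = v_j^S$ and then \eqref{eq: patch-conv} with $v = v_j^S$ gives
\begin{equation*}
  \langle f, v_i^L * v_j^S\rangle = (v_i^L)^\top \bigl(f * v_j^S(-\cdot)\bigr) = (v_i^L)^\top F v_j^S .
\end{equation*}
Now square and sum. Summing over $i$ first and using the Parseval property of $\{v_i^L\}$ applied to the vector $F v_j^S \in \mathbb{R}^n$ yields $\sum_i \bigl((v_i^L)^\top F v_j^S\bigr)^2 = \|F v_j^S\|_2^2 = (v_j^S)^\top F^\top F\, v_j^S$. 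Then summing over $j$ and using $\sum_j v_j^S (v_j^S)^\top = I_m$,
\begin{equation*}
  \sum_{i=1}^{n'}\sum_{j=1}^{m'} \langle f, v_i^L * v_j^S\rangle^2
  = \sum_{j=1}^{m'} (v_j^S)^\top F^\top F\, v_j^S
  = \operatorname{tr}\!\left(F^\top F\right) = \|F\|_{\mathrm{F}}^2 .
\end{equation*}
The last step is to invoke the ``$\ell$-fold redundancy'' of the patch representation already noted in the introduction: since $F$ is Hankel with periodic boundary conditions, each of the $n$ samples of $f$ appears exactly once in each of the $m$ columns of $F$, so $\|F\|_{\mathrm{F}}^2 = m\|f\|_2^2$. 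This establishes the claimed identity with frame constant $m$, and spanning of $\mathbb{R}^n$ is automatic since the right-hand side is strictly positive whenever $f \neq 0$.

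I expect the only real friction to be bookkeeping with the convolution conventions --- the zero-padding embedding $v_j^S \in \mathbb{R}^m$ into $\mathbb{R}^n$ and the flip appearing in \eqref{eq: patch-conv}--\eqref{eq: commute} --- so that the matrix surfacing after these manipulations is precisely the length-$n$, patch-size-$m$ Hankel matrix of $f$ and not a shifted or flipped variant. Once the identities are aligned correctly, the remainder is a two-line application of the Parseval property, used once on the ``local'' side and once on the ``nonlocal'' side, followed by the elementary count of how many times each pixel occurs in $F$.
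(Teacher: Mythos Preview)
Your proof is correct and takes a genuinely different route from the paper's. The paper first isolates a convolution lemma showing that $\sum_{j=1}^{m'} v_j^S * v_j^S(-\cdot) = m\,\delta$ whenever $V^S(V^S)^\top = I_m$, then uses it to establish the \emph{reconstruction identity} $f = \tfrac{1}{m}\sum_{i,j}\langle f, v_i^L*v_j^S\rangle\, v_i^L*v_j^S$ by expanding $f*v_j^S(-\cdot)$ in the Parseval system $\{v_i^L\}$ and convolving back. You instead prove the equivalent \emph{Parseval identity} $\sum_{i,j}\langle f,v_i^L*v_j^S\rangle^2 = m\|f\|^2$ directly, by recognizing each inner product as a bilinear form $(v_i^L)^\top F\,v_j^S$ on the patch matrix and collapsing the double sum to $\|F\|_{\mathrm F}^2$ via two applications of Parseval. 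Your route has the advantage of reusing the patch-matrix machinery already developed in Section~\ref{sec:conv-framelets} (in particular \eqref{eq: patch-conv}--\eqref{eq: commute}) and the ``$\ell$-fold redundancy'' observation from the introduction, so no auxiliary lemma is needed; the paper's route has the modest advantage of delivering the explicit reconstruction formula rather than only the frame bound.
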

The proof of \cref{prop:frame} can be found in \cref{sec:appendix}.

\section{Approximation of functions with convolution framelets}
\label{sec:approximation-conv-framelets}

The construction in Section~\ref{sec:conv-framelets} may seem unintuitive at a first glance. Our motivation for introducing two different bases, $\Phi\in O \left( N \right)$ and $V\in O \left( \ell \right)$, was simply to take advantage of the representability of patch matrices jointly in its row and column spaces.

\subsection{Local and nonlocal approximations of a signal}
\label{sec:local-nonlocal-approximation}

 The columns of $V$ form an orthonormal basis for $\mathbb{R}^{\ell}$, with respect to which the rows of $F$, or equivalently the length-$\ell$ patches of $f$, can be expanded; the role of $V$ is thus similar to transforms on a localized time window, such as the \emph{Short-Time Fourier Transform} (STFT) or \emph{Windowed Wigner Distribution Function} (WWDF). For this reason, we refer to the strategy of approximating the rows of $F$ using the columns of $V$ as \emph{local approximation}, and call $V$ a \emph{local basis} in the construction of convolution framelets. The local basis $V$ can be chosen as either fixed functions, e.g. Fourier or wavelet basis, or data-dependent functions, such as the right singular vectors of $F$. See \cref{fig:patch-svd} for an example.

\begin{figure}[h]
\includegraphics[width = \textwidth]{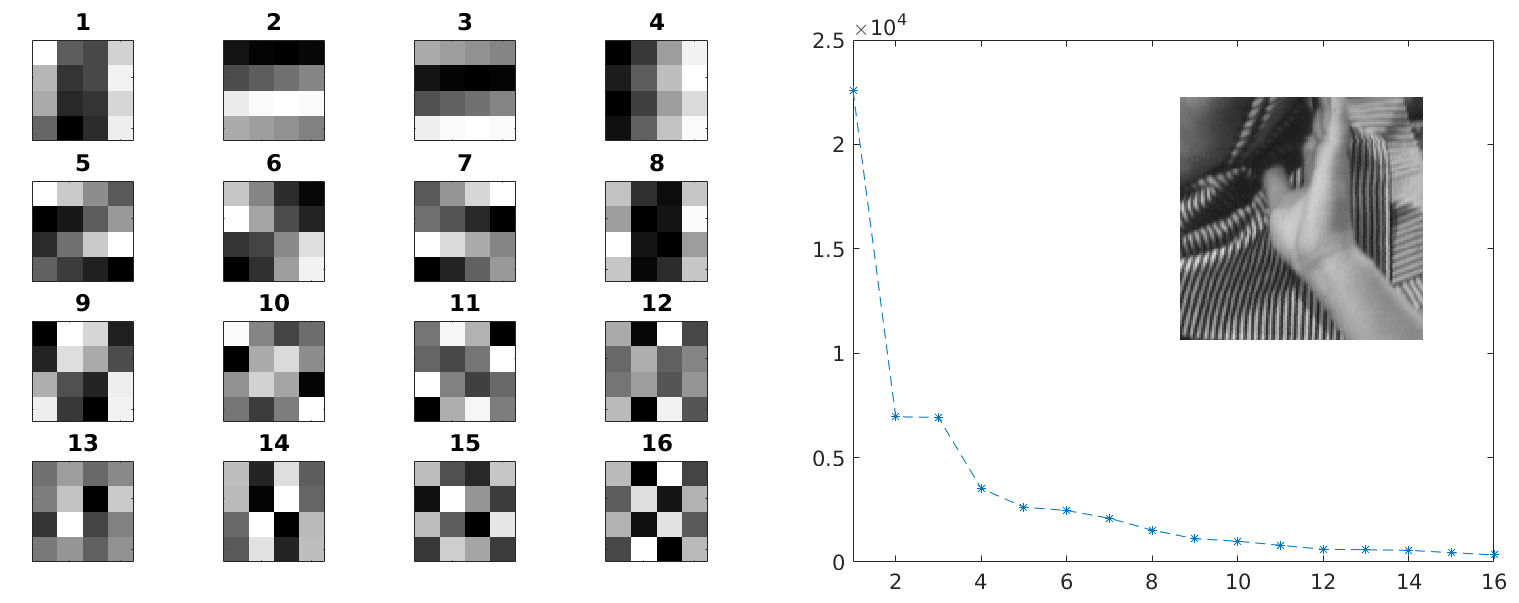}
\caption{\emph{Left:} Right singular vectors (ordered in decreasing singular values) of the patch matrix of a cropped {\sc barbara} image of size $128\times 128$, with patch size $4\times 4$; \emph{Right:} The cropped {\sc barbara} image and the  singular values corresponding to the right singular vectors shown on the left. Notice the fast decay of the singular values.}
\label{fig:patch-svd}
\end{figure}

The columns of $\Phi$, on the other hand, are treated as a basis for the columns of $F$. When the patch stride is set to $1$, each column $F$ is just a shifted copy of the original signal $f$ (see \cref{fig:patch-mat}); more generally (including arbitrary patch strides), columns of $F$ can be seen as functions defined on the set of patches $\mathscr{F}=\left\{ F_0,\cdots,F_{N-1} \right\}$. When $\mathscr{F}$ is viewed as a discrete point cloud in $\mathbb{R}^{\ell}$, efficient representations of functions on $\mathscr{F}$ depend more on the Euclidean proximity between patches as points in $\mathbb{R}^{\ell}$, rather than spatial adjacency in the original signal domain, as detailed in previous work on \emph{spectral basis} \cite{Peyre2008,LeeIzbicki2016SpectralSeries,IzbickiLee2015}. Therefore, it is natural to refer to the paradigm of approximating the columns of $F$ using $\Phi$ as \emph{nonlocal approximation}, and call $\Phi$ a \emph{nonlocal basis} in the construction of convolution framelets. 

Viewing the patch matrix $F$ as a collection $\mathscr{F}\subset\mathbb{R}^{\ell}$ brings in a large class of nonlinear approximation techniques from \emph{dimension reduction}, a field of statistics and data science dedicated to efficient data representations. Given a data matrix $X=[x_1,\cdots,x_N]^{\top}\in\mathbb{R}^{N\times \ell}$ consisting of $N$ data points in an ambient space $\mathbb{R}^{\ell}$ (we adopt the convention that $x_i$'s are column vectors and the $i$th row of $X$ is $x_i^{\top}$), dimension reduction algorithms map the full data matrix $X$ to $\t{X}=[\tx_1,\cdots,\tx_N]^{\top}\in\mathbb{R}^{N\times p}$, where each row $\tx_i\in\mathbb{R}^p$ ($p\leq \ell$) is the image of $x_i$. The dissimilarity between two original data points is assumed to be given by a metric (distance) function $d(\cdot,\cdot)$ on the ambient space $\mathbb{R}^{\ell}$, in many applications different from the canonical Euclidean distance; one hopes that the embedding is ``almost isometric'' between metric spaces $\left( \mathbb{R}^{\ell},d \right)$ and $\mathbb{R}^p$ equipped with the standard Euclidean distance.
More precisely, let $\embed = (\embed_1,\cdots,\embed_p):\mathbb{R}^{\ell}\rightarrow\mathbb{R}^p$ be the embedding given by $p$ coordinate functions, and denote $\embed(x) 
= [\tx^1,\cdots,\tx^p]^{\top}\in\mathbb{R}^p$ for any $x\in\mathbb{R}^{\ell}$. The embedding $\embed$ is said to be {\it near isometric} if in an appropriate sense
\begin{itemize}
\item[{\it (P1).}]
$d(x, x') \approx \Vert \embed(x) - \embed(x') \Vert_{\ell_2}$,\quad $\forall\,x,\,x'\in\mathbb{R}^{\ell}$.
\end{itemize}
Without loss of generality, we can assume that the coordinate functions of the embedding $\embed$ are orthogonal on the data set $\left\{ x_1,\cdots,x_N \right\}$, i.e.
\begin{itemize}
\item[{\it (P2).}] \new{$(\t{X}^s)^{\top}\t{X}^t = 0,\quad \forall 1\leq s\neq t\leq p$},
\end{itemize}
where $\t{X}^i$ is the $i$th column of $\tX$ (and corresponds to the $i$th coordinate in the embedding space); for general $\tX$ with coordinate functions non-orthogonal on the data set, we define its {\it orthogonal normalization} by $\tX^O = \tX\,O_{\tX}$, where $O_{\tX}$ comes from the \emph{Singular Value Decomposition} (SVD) of $\tX = U_{\tX}\Sigma_{\tX} O_{\tX}^{\top}$. Note that classical linear and nonlinear dimension reduction techniques, such as \emph{Principal Component Analysis} (PCA), \emph{Multi-Dimensional Scaling} (MDS), \emph{Laplacian Eigenmaps} \cite{BelkinNiyogi2003}, and \emph{Diffusion Maps} \cite{CoifmanLafon2006}, all produce embedding coordinate functions satisfying {\it (P1)} and {\it (P2)}.

A standard approach in manifold learning and spectral graph theory for building basis functions on $\mathscr{F}$ is through the eigen-decomposition of \emph{graph Laplacians} for a weighted graph constructed from $\mathscr{F}$. For instance, in \emph{diffusion geometry} \cite{CoifmanLafon2006,CoifmanLafonLMNWZ2005PNAS1,CoifmanLafonLMNWZ2005PNAS2}, one considers the \emph{graph random walk Laplacian} $I-D^{-1}W$, where $W\in\mathbb{R}^{N\times N}$ is the weighted adjacency matrix defined by
\begin{equation}
  \label{eq:weighted_adjacency_matrix}
  W_{ij} = \exp\left(-\Vert F_i-F_j\Vert^2/\epsilon\right)
\end{equation}
with the bandwidth parameter $\epsilon>0$, and $D\in\mathbb{R}^{N\times N}$ is the diagonal degree matrix with entries $D_{ii} = \sum_{j}W_{ij}$ for all $i=1,\cdots,N$. If the points in $\mathscr{F}$ are sampled uniformly from a submanifold of $\mathbb{R}^{\ell}$, eigenvectors of $I-D^{-1}W$ converge to eigenfunctions of the Laplace-Beltrami operator on the smooth submanifold as $\epsilon\rightarrow0$ and the number of samples tends to infinity \cite{BelkinNiyogi2007,SingerWu2013}. Up to a similarity transform, the random walk graph Laplacian is equivalent to the symmetric \emph{normalized graph diffusion Laplacian}
\footnote{Note that $L$ is different from the \emph{normalized graph Laplacian}, which in standard spectral graph theory is constructed from an adjacency matrix with $0$ or $1$ in its entries, instead of the weighted adjacency matrix $W$ in \cref{eq:weighted_adjacency_matrix}. The crucial difference is in the range of eigenvalues: normalized graph Laplacian has eigenvalues in $\left[ 0,2 \right]$, whereas $L$ has eigenvalues in $\left[ 0,1 \right]$ (see \cite{SSN2009} or \cite[\S 2.2.2]{LafonThesis2004}.)}
\begin{equation}
  \label{eq:normalized_laplacian}
  L= D^{1/2}(I-D^{-1}W)D^{-1/2} = I - D^{-1/2}WD^{-1/2}.
\end{equation}
Let $L=\Phi\Lambda\Phi^{\top}$ be the eigen-decomposition of $L$, where $\Phi\in O \left( N \right)$ and $\Lambda$ is a diagonal matrix with all diagonal entries between $0$ and $1$. As in Diffusion Maps \cite{CoifmanLafon2006}, the columns of $\Phi\Lambda^{1/2}$ can be used as coordinate functions for a spectral embedding of the patch collection into $\mathbb{R}^N$. This embedding introduces the \emph{diffusion distance} $d \left( \cdot,\cdot \right)$ between patches $F_i, F_j$ ($0\leq i,j\leq N-1$) by setting $d \left( F_i, F_j \right)$ as the Euclidean distance between their embedded images in $\mathbb{R}^N$, i.e. the $i$th and $j$th row of $\Phi\Lambda^{1/2}$. If an $p$-dimensional embedding (with $p < \ell$) is desired, we can choose the $p$ columns of $\Phi\Lambda^{1/2}$ corresponding to the $p$ smallest eigenvalues of $L$ to minimize the error of approximation in {\it (P1)}. Since the columns of $\Phi\Lambda^{1/2}$ are already orthogonal, {\it (P2)} is automatically satisfied. \cref{fig:barbara128} is an example that illustrates nonlocal basis obtained from eigen-decomposition of a normalized graph diffusion Laplacian.

\begin{figure}[h]
\includegraphics[width = \textwidth]{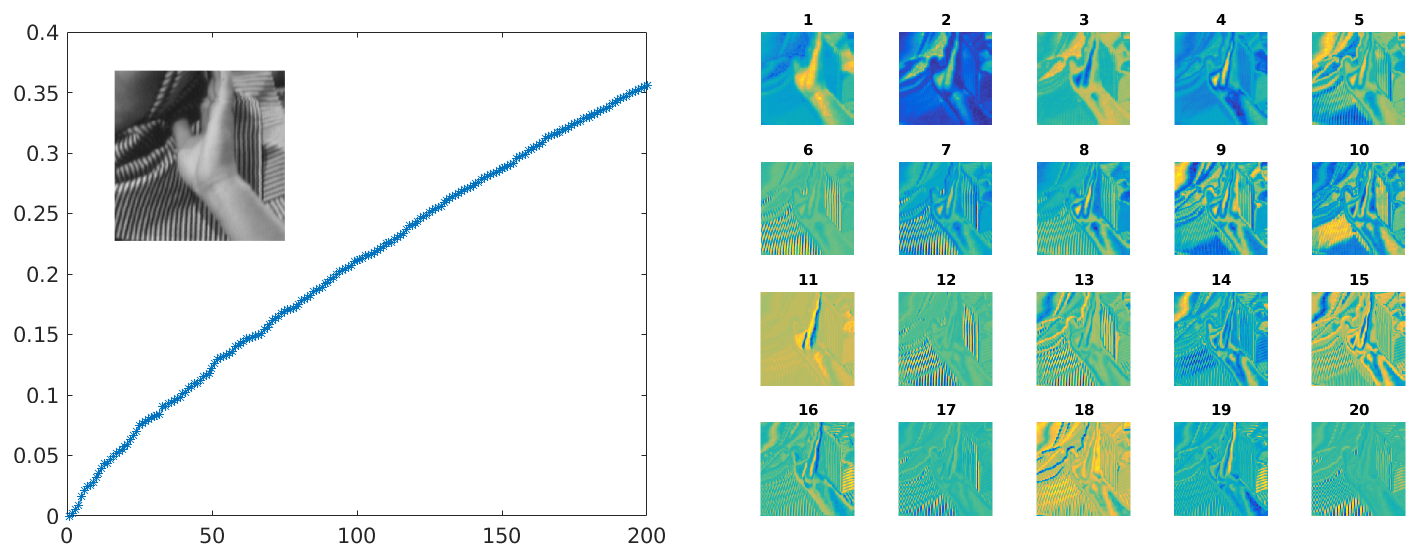}
\caption{\emph{Left:} The same cropped {\sc barbara} image of size $128\times 128$ as in \cref{fig:patch-svd}, and the smallest $200$ eigenvalues of the symmetric normalized graph diffusion Laplacian on the collection of all patches of size $\ell=4\times 4$; \emph{Right:} Eigenvectors associated with the smallest $20$ non-zero eigenvalues on the left.}
\label{fig:barbara128}
\end{figure}

\subsection{Energy concentration of convolution framelets}
\label{sec:energy-conc-conv}

Convolution framelets \cref{eq:defn_convframelet} is a signal representation scheme combining both local and nonlocal bases. Advantages of local and nonlocal bases, on their own, are known for specific signal processing tasks, under a general guiding principle seeking signal representations with certain \emph{energy concentration} patterns. Local basis such as wavelets or Discrete Cosine Transforms (DCT) are known to have ``energy compaction'' properties, meaning that real-world signals or images often exhibit a pattern of concentration of their energies in a few low-frequency components \cite{DCT1974,TenLectures1992,Mallat2008WaveletTour}; this phenomenon is fundamental for many image compression \cite{JPEG1992,JPEG20002001} and denoising \cite{DonohoJohnstone1994,DJKP1995} algorithms. On the other hand, nonlocal basis obtained from nonlinear dimension reduction or kernel PCA --- viewed as coordinate functions defining an embedding of the data set --- strives to capture, with only a relatively few number of basis functions, as much ``variance'' within the data set as possible; large portions of the variability of the data set is thus encoded primarily in the leading basis functions \cite{LeeVerleysen2007}. In the context of manifold learning, where the data points are assumed to be sampled from a smooth manifold, the number of eigenvectors corresponding to ``relatively large'' eigenvalues of a covariance matrix is treated as an estimate for the dimension of the underlying smooth manifold \cite{ISOMAP2000,LLE2000,BelkinNiyogi2003,LittleMaggioniRosasco2016}.

In practice, energy concentration patterns of signal representation in specific domains have been widely exploited to design powerful regularization schemes for reconstructing signals from noisy measurements. Since convolution framelets combine local and nonlocal basis, it is reasonable to expect that convolution framelet coefficients of typical signals tend to have energy concentration properties as well. To give a motivating example, consider the case in which both local and nonlocal bases concentrate energy on their low-frequency components, and basis functions are sorted in the order of increasing frequencies: typically the coefficient matrix $C=\Phi^{\top}FV$ will then concentrates its energy on the upper left block storing coefficients for convolution framelets corresponding to both local and nonlocal low-frequency basis functions. As an extreme example, if $\Phi$, $V$ in \cref{eq:patch_matrix_decom} come from the full-size singular value decomposition of $F$, i.e.
\begin{equation*}
  F = \Phi\Sigma V^{\top},\quad F,\Sigma\in\mathbb{R}^{N\times\ell}, \Phi\in\mathbb{R}^{N\times N}, V\in\mathbb{R}^{\ell\times\ell},
\end{equation*}
then the only non-zero entries in the coefficient matrix $C=\Phi^{\top}FV=\Sigma$ lie along the diagonal of its upper $\ell\times\ell$ block. We illustrate in \cref{fig:energy-concentration} the energy concentration of several different types of convolution framelets on a $1$-D random signal. \cref{fig:energy-concentration-barbara128} demonstrates the energy concentration of a $2$-D example using the same cropped {\sc barbara} image as in \cref{fig:patch-svd} and \cref{fig:barbara128}, in which we explore $4$ different types of local bases $V$ with the nonlocal basis $\Phi$ fixed as the graph Laplacian eigenvectors shown in \cref{fig:barbara128}; notice that in this example the energy concentrates more compactly in SVD and Haar bases than in DCT and random bases.

\begin{figure}
\centering
\includegraphics[width = 1.0\textwidth]{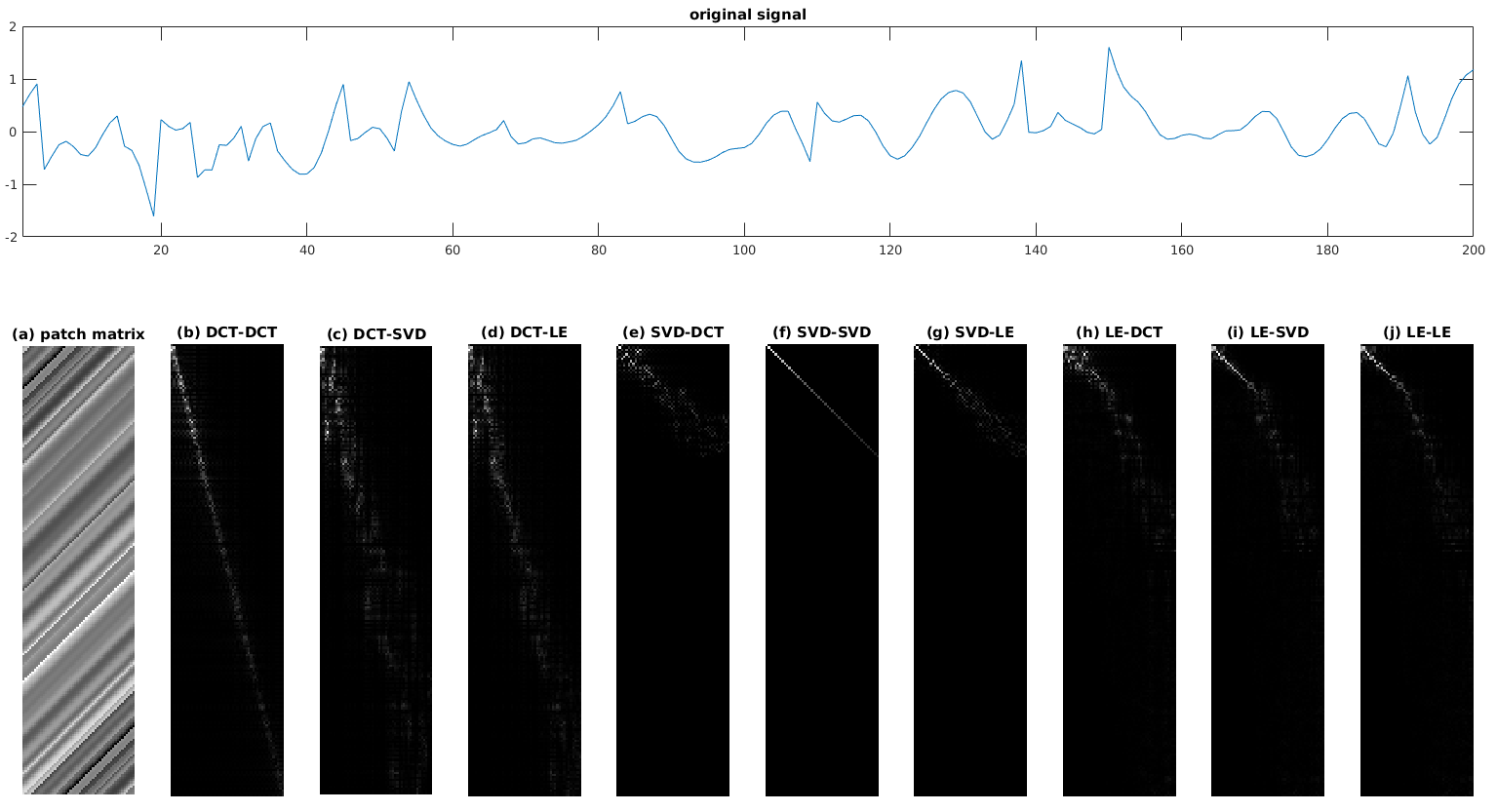}
\caption{A $1$-D signal of length $N=200$ and several convolution framelet coefficient matrices with fixed patch size $\ell=50$. \emph{Top:} A piecewise smooth $1$-D signal $f$ randomly generated from the stochastic model proposed in \cite{CohenDAles1997}. \emph{Bottom:} (a) The patch matrix $F$ of the signal $f$ on the top panel; (b)-(j) Energy concentration patterns of the coefficients of $f$ in several convolution framelets. The titles of subplots (b) to (j) indicate the different choices [\emph{Discrete Cosine Transform} (DCT), \emph{Singular Value Decomposition} (SVD), \emph{Laplacian Eigenmaps} (LE)] for the nonlocal basis $\Phi$ (appearing before the dash) and the local basis $V$ (appearing after the dash). These plots suggest that data-adaptive nonlocal bases (SVD or LE) tend to concentrate more energy on the upper-left part of the coefficient matrix $C$ than DCT does.}
\label{fig:energy-concentration}
\end{figure}

\begin{figure}[h]
\centering
\includegraphics[width = \textwidth]{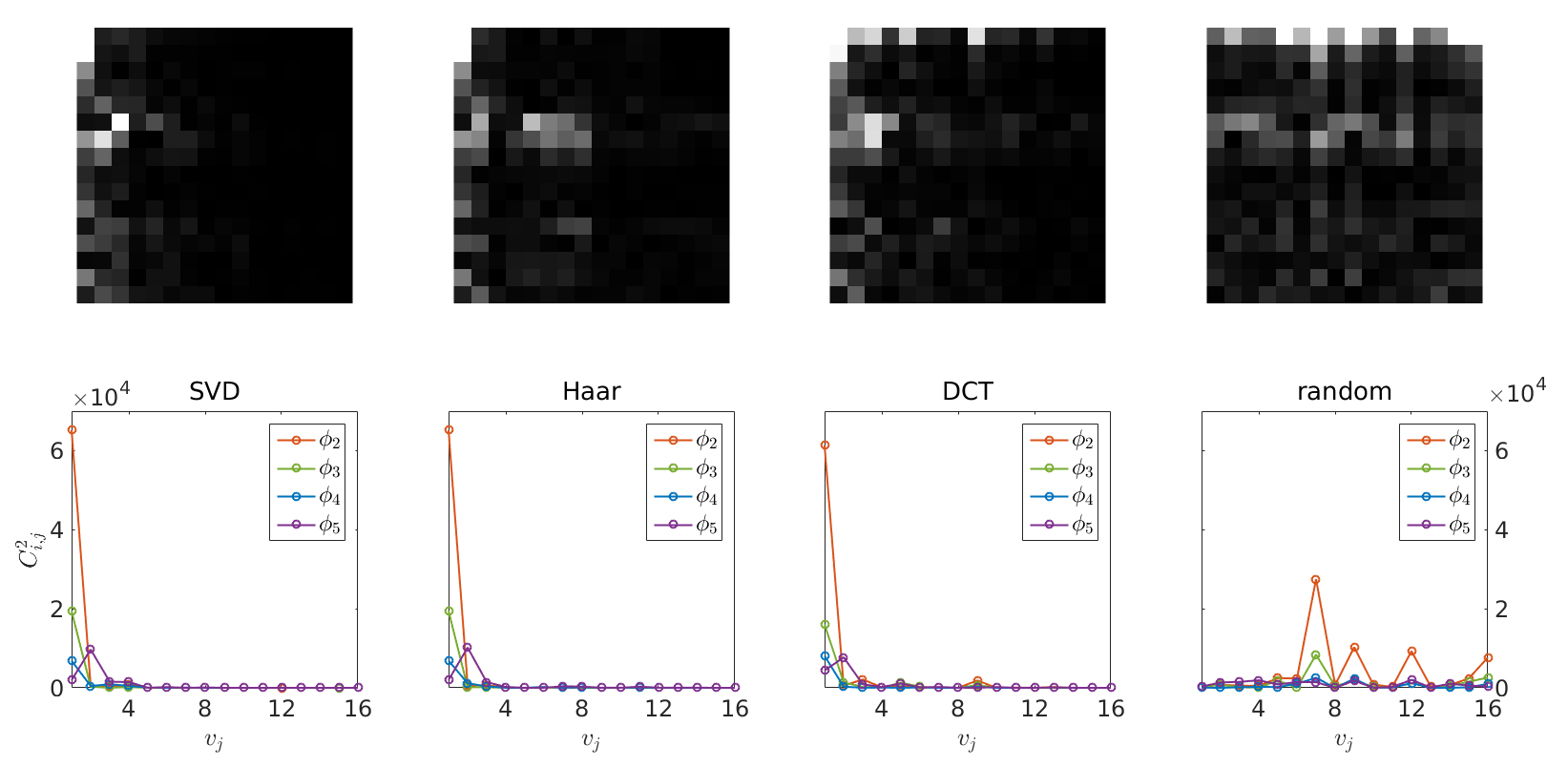}
\caption{Energy concentration of convolution framelet coefficient matrices of the same cropped $128\times 128$ {\sc barbara} image shown in \cref{fig:patch-svd} and \cref{fig:barbara128},  with fixed nonlocal basis $\Phi$ (Laplacian eigenfunctions) and different local bases $V$. The patch size is fixed as $\ell = 4\times 4$. \emph{Top:} The top $16\times 16$ blocks (corresponding to convolution framelets $\phi_i*v_j$ with $1\leq i,j\leq 16$) of the convolution framelet coefficient matrices using (from left to right) SVD basis, Haar basis, DCT basis, and random orthonormal basis. \emph{Bottom:} Squared coefficients $C^2_{ij}$ for the second to the fifth row in each coefficient block on the top panel. For each type of local basis, the line corresponding to $\phi_i$ ($i=2,3,4,5$) depicts $C^2_{ij}$ with $i$ fixed and $j$ ranging from $1$ to $16$.}
\label{fig:energy-concentration-barbara128}
\end{figure}

An interesting fact to notice is the following: in order for convolution framelets to have a structured energy concentration, it is \emph{not} strictly required that \emph{both} local and nonlocal bases have energy concentration properties. In a sense, regularization schemes based on convolution framelets are more flexible since the energy compaction effects of a local (resp. nonlocal) basis can be amplified through coupling with a nonlocal (resp. local) basis. More specifically, given $\Phi\in\mathbb{R}^{N\times N}$ satisfying mild assumptions\footnote{E.g. the leading columns of $\Phi$ give near isometric embeddings of the rows of $F$ satisfying {\it (P1)} and {\it (P2)} in Section~\ref{sec:local-nonlocal-approximation}; when columns of $\Phi$ are not orthogonal, a QR decomposition can be applied, see Section~\ref{sec:local-nonlocal-approximation} as well.}, we can systematically construct a local basis $V$ via minimizing a ``linear reconstruction loss'' such that the coefficient matrix $\Phi^{\top}FV$ concentrates its energy on the upper left block; this is the focus of Section~\ref{sec:energy-conc-guar}.

\subsection{Energy concentration guarantee via linear reconstruction}
\label{sec:energy-conc-guar}

Throughout this subsection, we will adopt the nonlocal point of view described in Section~\ref{sec:local-nonlocal-approximation}, and treat the patch matrix $F\in\mathbb{R}^{N\times\ell}$ of a signal $f\in\mathbb{R}^N$ as a point cloud $\mathscr{F}=\left\{ F_0,\cdots,F_{N-1} \right\}$ consisting of $N$ points in $\mathbb{R}^{\ell}$. Let $\embed:\mathbb{R}^{\ell}\supset\mathscr{F}\rightarrow\mathbb{R}^p$ be an embedding satisfying {\it (P1)} and {\it (P2)}, with $1\leq p\leq \ell$. Our goal is to ensure that the dimension reduction $\embed$ does not lose information in the original data set $\mathscr{F}$, by requiring the \emph{approximate invertibility}\footnote{We remark that the ``invertibility'' or ``reconstruction'' assumptions have been widely exploited in dimension reduction techniques, see e.g. \cite{Pearson1901PCA,Autoencoder2006}.} of $\embed$ on its image; as will be seen in \cref{prop:energy-concentration-guarantee},  the optimal $L^2$-reconstruction of $\mathscr{F}$ from its image $\embed \left( \mathscr{F} \right)$ leads to a local basis $V\in\mathbb{R}^{\ell}$. This particular local basis, paired with the nonlocal orthogonal system read off from the embedding $\embed$, renders convolution framelets that concentrate energy on the upper left block.

Let us motivate the linear reconstruction framework by considering a linear embedding $\embed:\mathbb{R}^{\ell}\rightarrow\mathbb{R}^p$ with $1\leq p\leq \ell$. Assume $\widetilde{A}\in\mathbb{R}^{\ell\times p}$ is full-rank, and $\mathscr{X}=\left\{ x_1,\cdots,x_N \right\}\subset\range \left( \widetilde{A} \right)\subset\mathbb{R}^{\ell}$, i.e., points in $\mathscr{X}$ are sampled from the $p$-dimensional linear subspace of $\mathbb{R}^{\ell}$ spanned by the columns of $\widetilde{A}$. Denote $X\in\mathbb{R}^{N\times\ell}$ for the data matrix storing the coordinates of $x_j$ in its $j$th row, and $X\widetilde{A} = \t\Phi \Sigma V_0^{\top}$ for the reduced singular value decomposition of $X\widetilde{A}$ (thus $\t\Phi \in\mathbb{R}^{N\times p}$, $V_0 \in\mathbb{R}^{p\times p}$, and $\Sigma\in\mathbb{R}^{p\times p}$ contains the singular values of $X\widetilde{A}$ along the diagonal and zeros elsewhere). Define $A:=\widetilde{A}V_0\in\mathbb{R}^{\ell\times p}$ and consider the linear embedding $\embed:\mathbb{R}^{\ell}\rightarrow\mathbb{R}^p$ given by
\begin{equation*}
  \embed \left( x \right)=x^{\top}A,\quad\forall x\in\mathbb{R}^{\ell}.
\end{equation*}
In matrix notation, the image of $\mathscr{X}$ under $\embed$ is $XA$. Note that {\it (P2)} is automatically satisfied because the columns of $XA=\t\Phi\Sigma$ are orthogonal.

Now that $\mathscr{X}$ is in $\range \left( \widetilde{A} \right)$ and $V_0\in\mathbb{R}^{p\times p}$ is orthonormal, we also have $\mathscr{X}\subset\range \left( A \right)$ and thus can write $X^{\top}=AB$ for some $B\in\mathbb{R}^{p\times N}$. It follows that $\embed$ is invertible on $\embed \left( \mathscr{X} \right)$ since
\begin{equation*}
  X^{\top} = AB = AA^{\dagger}AB = AA^{\dagger}X^{\top}=\left( AA^{\dagger} \right)^{\top}X^{\top}=\left( A^{\dagger} \right)^{\top}A^{\top}X^{\top},
\end{equation*}
where $A^{\dagger}=\left( A^{\top}A \right)^{-1}A^{\top}$ is the Moore-Penrose pseudoinverse of $A$, or equivalently
\begin{equation}
\label{eq:linear_projection_invertible}
  X = XAA^{\dagger}.
\end{equation}
In other words, in this case the dimension reduction $\embed$ is ``lossless'' in the sense that we can perfectly reconstruct $\mathscr{X}\subset\mathbb{R}$ from its embedded image $\embed\left(\mathscr{X}\right)$ in a space of lower dimension. Using a Gram-Schmidt process, we can write $A^{\dagger}=R\tV^{\top}$, where $R\in\mathbb{R}^{p\times p}$ is upper-triangular and $\tV\in\mathbb{R}^{p\times\ell}$ is orthogonal. This transforms \cref{eq:linear_projection_invertible} into
\begin{equation}
\label{eq:global-inverse}
  X = XAR\tV^{\top}=\t\Phi\Sigma R\tV^{\top} = \sum_{1\leq\, i\leq\, j\,\leq\, p}\Sigma_{ii}\,R_{ij}\;\phi_i\,v_j^{\top}
\end{equation}
where we invoked $XA=\t\Phi \Sigma$ and denoted $\left\{ \phi_i\mid 1\leq i\leq p \right\}$, $\left\{ v_j\mid 1\leq j\leq p \right\}$ for the columns of $\t\Phi$, $\tV$ respectively; note that the coefficient matrix $\Sigma R$ is upper-triangular. Let $\Phi\in O \left( N \right)$ and $V \in O\left( \ell \right)$ be orthonormal matrices, the columns of which extend $\left\{ \phi_i\mid 1\leq i\leq p \right\}$ and $\left\{ v_j\mid 1\leq j\leq p \right\}$ to complete bases on $\mathbb{R}^N$, $\mathbb{R}^{\ell}$ respectively. Following Section~\ref{sec:conv-framelets}, denote the outer products of columns of $\Phi$ with columns of $V$ as
\begin{equation*}
  \Psi_{ij} = \phi_i\,v_j^{\top}\quad\textrm{for }1\leq i\leq N, 1\leq j\leq \ell.
\end{equation*}
The expression \cref{eq:global-inverse}, now understood as an expansion of $X$ in orthogonal system $\left\{\Psi_{ij}\right\}$, uses only $p \left( p+1 \right)/2$ out of a total number of $N\times \ell$ basis functions. It is clear that in this case the energy of $X$ concentrates on (the upper triangular part of) the upper left block of the coefficient matrix $\Phi^{\top}XV$, or equivalently on components corresponding to $\left\{ \Psi_{ij}\,\mid\,1\leq i \leq j\leq p \right\}$. This establishes \cref{prop:energy-concentration-guarantee} below for all linear embeddings satisfying {\it (P2)}.

A similar argument can be applied to general nonlinear embeddings satisfying {\it (P2)}; all nonlinear dimension reduction methods based on kernel spectral embedding, such as Multi-Dimensional Scaling, Laplacian eigenmaps, and diffusion maps, belong to this category. In these cases we generally can not expect a perfect reconstruction of type \cref{eq:linear_projection_invertible}, but we can still seek a linear reconstruction in the form of $R\tV^{\top}$, with upper triangular $R$ and orthogonal $\tV$, that reduces the reconstruction error between $\widetilde{X}R\tV^{\top}$ and the original $X$ as much as possible.

\begin{proposition}
\label{prop:energy-concentration-guarantee}
  Let $\mathscr{X}=\left\{ x_1,\cdots,x_N \right\}$ be a point cloud in $\mathbb{R}^{\ell}$, $1\leq p\leq \ell$, and $\embed:\mathbb{R}^{\ell}\supset\mathscr{X}\rightarrow\mathbb{R}^p$ an embedding satisfying {\it (P2)}. Let $X\in\mathbb{R}^{N\times\ell}$ be the matrix storing the coordinates of $x_j$ in its $j$th row, and $\widetilde{X}\in\mathbb{R}^{N\times p}$ be the matrix storing the coordinates of $\embed \left( x_j \right)$ in its $j$th row ($1\leq j\leq N$). 
For $V_{\embed}$ given by
\begin{align}\label{eq:min-V}
\left( V_{\embed}, R_{\embed} \right)=\argmin_{\substack{\tV^{\top}\tV = I_p,\,\tV\in\mathbb{R}^{\ell\times p} \\ 
\t{R}_{ij} =\, 0,\, 1\leq j<i\leq p
}}\, 
\Vert\tX\,\t{R}\,\tV^{\top} - X\Vert_{\mathrm{F}}^2,
\end{align}
construct $V$ that extends $V_\embed$ to a complete orthonormal basis in $\mathbb{R}^l$; 
for $\Phi_\embed$ derived from the decomposition
\begin{equation}\label{eq:min-Phi}
  \tX = \Phi_{\embed}C_{\embed},\quad \Phi_{\embed}\in\mathbb{R}^{N\times p}\textrm{ orthonormal}, C_{\embed}\in\mathbb{R}^{p\times p}\textrm{ diagonal},
\end{equation}
also construct $\Phi$ that extends $\Phi_\embed$ to a complete orthonormal basis in $\mathbb{R}^N$.
Then $C=\Phi^{\top}XV\in\mathbb{R}^{N\times\ell}$ concentrates its energy on the upper triangle part of its upper left $p\times p$ block.
\end{proposition}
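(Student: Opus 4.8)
I read the conclusion as the quantitative assertion that the \emph{structured} part of $C=\Phi^{\top}XV$ sits exactly on the upper triangle of its upper-left $p\times p$ block, while the remaining energy equals the minimal linear reconstruction error $\Vert X-\tX\,R_{\embed}\,V_{\embed}^{\top}\Vert_{\mathrm{F}}^{2}$ --- which vanishes in the linear case already treated before the proposition and is small whenever the embedding is near-isometric, i.e.\ {\it (P1)} holds. Accordingly the plan is to split $X$ into the optimal reconstruction plus a residual, read off the block structure of $C$, and use the first-order optimality of $(V_{\embed},R_{\embed})$ in \cref{eq:min-V} to kill the off-diagonal blocks. Set $E:=X-\tX\,R_{\embed}\,V_{\embed}^{\top}$ and partition $\Phi=[\Phi_{\embed}\mid\Phi_{\perp}]$, $V=[V_{\embed}\mid V_{\perp}]$ as in the extensions following \cref{eq:min-V} and \cref{eq:min-Phi}. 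Since $\Phi$ extends $\Phi_{\embed}$ and $V$ extends $V_{\embed}$, the product $\Phi^{\top}\Phi_{\embed}$ is $I_{p}$ above a zero block and $V_{\embed}^{\top}V$ is $I_{p}$ beside a zero block; using $\tX=\Phi_{\embed}C_{\embed}$ with $C_{\embed}$ diagonal, the reconstruction term contributes to $C$ the matrix whose only nonzero entries are the upper-left $p\times p$ corner $C_{\embed}R_{\embed}$, which is upper triangular because $R_{\embed}$ is. So it remains to show that $\Phi^{\top}EV$ contributes nothing to the first $p$ rows of $C$.

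For this I would extract two first-order conditions from the joint minimizer of \cref{eq:min-V}. Fixing $\tV=V_{\embed}$: by {\it (P2)} the Gram matrix $\tX^{\top}\tX=C_{\embed}^{2}$ is diagonal, so $\Vert\tX\,\t R\,V_{\embed}^{\top}-X\Vert_{\mathrm{F}}^{2}$ decouples over the entries of the upper-triangular $\t R$, and its minimizer satisfies $(C_{\embed}^{2}R_{\embed})_{ij}=(\tX^{\top}XV_{\embed})_{ij}$ for all $i\le j$. Fixing instead $\t R=R_{\embed}$: the term $\Vert\tX\,R_{\embed}\,\tV^{\top}\Vert_{\mathrm{F}}^{2}=\Vert C_{\embed}R_{\embed}\Vert_{\mathrm{F}}^{2}$ is independent of $\tV$ on the Stiefel manifold $\{\tV^{\top}\tV=I_{p}\}$, so the minimization over $\tV$ is an orthogonal Procrustes problem for $M:=X^{\top}\tX\,R_{\embed}$, whose critical-point conditions are $V_{\perp}^{\top}M=0$ and $V_{\embed}^{\top}M$ symmetric. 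Assuming $R_{\embed}$ and $C_{\embed}$ invertible, $V_{\perp}^{\top}M=0$ gives $\tX^{\top}XV_{\perp}=0$; and writing $B:=\tX^{\top}XV_{\embed}$, the symmetry of $V_{\embed}^{\top}M=B^{\top}R_{\embed}$ together with the first condition $B_{ij}=(C_{\embed}^{2}R_{\embed})_{ij}$ for $i\le j$ forces the strictly lower triangular part of $B$ to vanish (the product of a strictly upper triangular matrix with an upper triangular matrix is strictly upper triangular, and a symmetric strictly upper triangular matrix is zero). Hence $\tX^{\top}XV_{\embed}=C_{\embed}^{2}R_{\embed}$ is upper triangular.

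Assembling: the first $p$ rows of $C$ are $\Phi_{\embed}^{\top}XV=[\,C_{\embed}^{-1}\tX^{\top}XV_{\embed}\mid C_{\embed}^{-1}\tX^{\top}XV_{\perp}\,]=[\,C_{\embed}R_{\embed}\mid 0\,]$, upper triangular by the previous paragraph. The remaining $N-p$ rows are $\Phi_{\perp}^{\top}XV=\Phi_{\perp}^{\top}EV$, because $\Phi_{\perp}^{\top}\tX=\Phi_{\perp}^{\top}\Phi_{\embed}C_{\embed}=0$; moreover $\Phi_{\embed}^{\top}E=C_{\embed}^{-1}(\tX^{\top}X-C_{\embed}^{2}R_{\embed}V_{\embed}^{\top})=0$ follows from $\tX^{\top}X=(\tX^{\top}XV_{\embed})V_{\embed}^{\top}+(\tX^{\top}XV_{\perp})V_{\perp}^{\top}=C_{\embed}^{2}R_{\embed}V_{\embed}^{\top}$, so $\Vert\Phi_{\perp}^{\top}EV\Vert_{\mathrm{F}}=\Vert E\Vert_{\mathrm{F}}$. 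Therefore $\Vert X\Vert_{\mathrm{F}}^{2}=\Vert C\Vert_{\mathrm{F}}^{2}=\Vert C_{\embed}R_{\embed}\Vert_{\mathrm{F}}^{2}+\Vert E\Vert_{\mathrm{F}}^{2}$: all the energy of $C$ outside the upper triangle of its upper-left $p\times p$ block is exactly the minimal reconstruction error $\Vert E\Vert_{\mathrm{F}}^{2}$, which is $0$ in the linear case (compare \cref{eq:global-inverse}) and small when {\it (P1)} holds --- exactly the claimed concentration.

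I expect the main obstacle to be making the Procrustes/Stiefel optimality argument airtight when $R_{\embed}$ or $C_{\embed}$ is rank-deficient: then $\Phi_{\embed}$ and $V_{\embed}$ are no longer uniquely determined, the identities $\tX^{\top}XV_{\perp}=0$ and $\tX^{\top}XV_{\embed}=C_{\embed}^{2}R_{\embed}$ may fail for some minimizers, and one has to either pass to a limit from full-rank perturbations or fix the completions $\Phi$, $V$ compatibly with a chosen minimizer. Everything else is routine bookkeeping with orthogonal matrices, together with the explicit linear-case computation that precedes the proposition.
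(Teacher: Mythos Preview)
Your argument is correct (under the full-rank hypotheses you flag), but it is a genuinely different---and in fact sharper---argument than the one in the paper.

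The paper's proof is purely variational. It rewrites $\tX\,\t R\,\tV^{\top}$ as $\Phi\begin{bmatrix}C_{\embed}\t R & 0\\ 0 & 0\end{bmatrix}V_0^{\top}$, so that for any fixed $V_0$ the optimal $\t R$ makes $C_{\embed}\t R$ match the upper triangle of the upper-left $p\times p$ block $C_{LT}$ of $C=\Phi^{\top}XV_0$. Since $\Vert C\Vert_{\mathrm{F}}^2=\Vert X\Vert_{\mathrm{F}}^2$ is fixed, minimizing the reconstruction error over $V_0$ is equivalent to \emph{maximizing} the energy of the upper triangle of $C_{LT}$. That is the entire content of the paper's ``concentration'': by construction, $V_{\embed}$ is the orthonormal choice that puts the most energy of $C$ into that upper triangle. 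No first-order conditions, no Stiefel tangent space, no claim that the right $p\times(\ell-p)$ block or the strict lower triangle of $C_{LT}$ vanishes.

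Your approach instead exploits the KKT conditions of the joint minimizer---separate optimality in $\t R$ gives $(C_{\embed}^{2}R_{\embed})_{ij}=(\tX^{\top}XV_{\embed})_{ij}$ for $i\le j$, and the Procrustes stationarity in $\tV$ gives $V_{\perp}^{\top}M=0$ and $V_{\embed}^{\top}M$ symmetric---and from these you deduce the exact structural identities $\Phi_{\embed}^{\top}XV_{\embed}=C_{\embed}R_{\embed}$ and $\Phi_{\embed}^{\top}XV_{\perp}=0$. This is strictly stronger than the paper's conclusion: you show the first $p$ rows of $C$ are \emph{exactly} $[\,C_{\embed}R_{\embed}\mid 0\,]$ and that all remaining energy equals the optimal reconstruction error $\Vert E\Vert_{\mathrm{F}}^{2}$. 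The trade-off is that the paper's variational argument needs no invertibility of $R_{\embed}$ (and only uses $C_{\embed}$ invertible implicitly, to write $\t R^{*}$), whereas your Stiefel step needs $R_{\embed}$ invertible to pass from $V_{\perp}^{\top}X^{\top}\tX R_{\embed}=0$ to $\tX^{\top}XV_{\perp}=0$ and to conclude $L=0$ from $L^{\top}R_{\embed}=0$---exactly the obstacle you anticipated.
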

\begin{proof}
Let $\Phi_{\embed}$, $\Phi$ be defined as in the statement of \cref{prop:energy-concentration-guarantee}, $\tV_0\in\mathbb{R}^{\ell\times p}$ an arbitrary matrix with orthonormal columns, and $V_0$ an arbitrary extension of $\tV_0$ to an orthonormal basis on $\mathbb{R}^{\ell}$. The first term within the Frobenius norm of \cref{eq:min-V} can be re-written as
\begin{align}
\tX\,\t{R}\,\tV_{0}^{\top} &= \Phi_\embed\,C_\embed\,[\,\t{R},\,\mathbf{0}_{\ell,\,\ell-p}]\,V_0^{\top}\notag\\
&= \Phi\,\begin{bmatrix}
C_\embed\\ \,\mathbf{0}_{\,N-p,\,p}
\end{bmatrix}
[\,\t{R},\,\mathbf{0}_{\,\ell,\,\ell-p}]\,V_0^{\top} = \Phi
\begin{bmatrix}
\,C_\embed\,\t{R} & \mathbf{0}_{\,\ell,\,\ell-p}\\
\,\mathbf{0}_{\, N-p,\, p} & \mathbf{0}_{\, N-p,\,\ell-p}
\end{bmatrix}V_0^{\top}.
\end{align}
The minimization problem in \eqref{eq:min-V} can thus be reformulated as
\begin{align}\label{eq:min-C}
\min_{\substack{\tV\in O(\ell),\\
\t{R}_{ij} = 0,\,1\leq j<i\leq p}}
\left\Vert 
\begin{bmatrix}
\,C_\embed\,\t{R} & \mathbf{0}\\
\,\mathbf{0} & \mathbf{0}
\end{bmatrix} 
- C\,\right\Vert_{\mathrm{F}}^2,
\quad\textrm{where } C = \Phi^{\top}XV_0.
\end{align}
For any fixed orthonormal $V_0\in\mathbb{R}^{\ell\times \ell}$ (which also fixes $C$ since $\Phi$ and $X$ are already given), the optimal upper triangular matrix $\t{R}^{*}$ is clearly characterized by $\t{R}_{ij}^* = \left(C_{\embed}^{-1}C\right)_{ij}$ for all $1\leq i\leq j\leq p$. In fact, if we partition the matrix $C$ into blocks compatible with the block structure in \cref{eq:min-C}, denoted as
\begin{equation*}
  C = \begin{bmatrix}C_{LT}&C_{RT}\\C_{LB}&C_{RB} \end{bmatrix},
\end{equation*}
then $C_{\embed}\t{R}$ must cancel out with the upper triangle part of $C_{LT}$ in order to achieve the minimum of the minimization problem in \eqref{eq:min-C}. The optimization problem in \cref{eq:min-C} is thus equivalent to minimizing the $L^2$ energy of the remaining strictly lower triangular part of $C_{LT}$ together with the $L^2$ energy of the other three blocks $C_{RT},\,C_{LB}$, and $C_{RB}$. In addition, since $\Vert C\Vert_{\mathrm{F}}^2 = \Vert X\Vert_{\mathrm{F}}^2$ is constant, this is further equivalent to maximizing the $L^2$ energy of the upper triangular part of $C_{LT}$ (which gets canceled out with $C_{\embed}\t{R}^{*}$ anyway). Simply put, we have
\begin{align}\label{eq:min-tri}
\argmin_{\tV\in O(\ell)}\; \sum_{\substack{1\leq\, j<\,i\leq p,\\ \text{or }\, i,\,j\, > \,p}} C_{ij}^2
\quad =\quad
\argmax_{\tV\in O(\ell)}\;\sum_{1\leq\, i\leq\,j\leq p} C_{ij}^2.
\end{align}
This indicates that the optimal local basis $V_{\embed}$, and consequently its extension $V$ to a complete orthonormal basis on $\mathbb{R}^{\ell}$, must concentrate as much energy of the coefficient matrix $C$ as possible on the upper triangular part\footnote{One could also require that the energy concentrates on the lower triangle. Yet this is equivalent to changing $V$ to $PVP$, where $P =\begin{bmatrix}
J_p& 0\\
0& I_{\ell-p}
\end{bmatrix} $, and $J_p$ is anti-diagonal with non-zero entries all equal to one.} of the upper left $p\times p$ block.
\end{proof}
\begin{remark}\label{rmk:centering}
  The core idea behind \cref{prop:energy-concentration-guarantee} is to approximate the inverse of an arbitrary (possibly nonlinear) dimension reduction embedding $\embed$ using a global linear function
\begin{align}\label{eq:V-approx}
\embed^{-1}(\tX) \approx 
\new{\tX\,R\,\tV^{\top},}
\end{align}
where the upper triangular matrix $R\in \mathbb{R}^{p\times p}$ and the orthonormal matrix $\tV\in\mathbb{R}^{\ell\times p}$ together play the role of $A^{\dagger}$ in \cref{eq:linear_projection_invertible} for linear embeddings. Note that it is straightforward to incorporate a bias correction in the linear reconstruction \cref{eq:V-approx} by considering $\embed^{-1}(\tX) \approx \tX\,R\,\tV^{\top}-B$, where $B\in\mathbb{R}^{N\times \ell}$ is a ``centering matrix''; we assume $B=0$ in \cref{prop:energy-concentration-guarantee} for simplicity but the argument can be easily extended to $B\neq 0$. 
\end{remark}
\begin{remark}
  As will be seen in Section~\ref{sec:revisit-ldmm}, LDMM \cite{LDMM} implicitly exploits the energy concentration pattern characterized in \cref{prop:energy-concentration-guarantee}. More systematic exploitation of the energy concentration pattern lead to our improved design of \emph{reweighted LDMM}; see Section~\ref{sec:reweighted-ldmm}.
\end{remark}

\begin{example}[Example: Optimal Local Basis for Multi-Dimensional Scaling (MDS)]
  When $\embed$ is given by Multi-Dimensional Scaling (MDS), the optimal local basis $V$ in the sense of \cref{eq:min-V} consists of the right singular vectors of the centered data matrix $X$. To see this, first recall that in MDS the eigen-decomposition is performed on the doubly centered distance matrix $K = \frac{1}{2}HD^2H$, where $(D^2)_{ij} = d^2( X_i , X_j)$ and $H = I_N-\frac{1}{N}\vone_N\,\vone_N^{\top}$; coordinate functions for the low-dimensional embedding are then chosen as the eigenvectors of $K$ corresponding to the largest eigenvalues, weighted by the square roots of their corresponding eigenvalues. In particular, when $d(\cdot,\cdot)$ is the Euclidean distance on $\mathbb{R}^{\ell}$, one has $K = -HXX^{\top}H$, and the eigenvectors of $K$ correspond to the left singular vectors of the centered data matrix $HX$. (Here the centering matrix is $B = HX - X$; see \cref{rmk:centering}.) Let \(HX = U_X\Sigma_XV_X^{\top}\) be the reduced singular value decomposition of $HX$ as computed in the standard $MDS$ procedure. Then the optimal $V$ for \cref{eq:min-V} is exactly $V_X$, and the corresponding matrix basis has the sparsest representation of $X$. The proof of this statement can be found in \cref{sec: appendix-c}.
\end{example}

\subsection{Connection with nonlocal transform-domain image processing techniques}
\label{sec:extension-union-local-embed}

In some circumstances, the framework of convolution framelets can be interpreted as a nonlocal method applied to signal representation in a transform domain. For instance, if we use wavelets for the local basis $V$, and eigenvectors of the normalized graph diffusion Laplacian $L$ (see \cref{eq:normalized_laplacian}) for the nonlocal basis $\Phi$, then $L$ can be seen as defined on the wavelet coefficients since
\begin{equation*}
  W_{ij}=\exp\left(-\Vert F_i-F_j\Vert^2/\epsilon\right) = \exp\left(-\Vert F_iV-F_jV\Vert^2/\epsilon\right)\quad\forall 1\leq i,j\leq N.
\end{equation*}
Thus convolution framelet has the potential to serve as a natural framework for other nonlocal transform-domain image processing techniques. As an example, we show in what follows that BM3D \cite{BM3D2007,dabov2009bm3d}, a widely accepted state-of-the-art image denoising algorithms based on nonlocal filtering in transform domain, may also be interpreted through our convolution framelet framework, with a slightly extended notion of ``nonlocal basis''.

The basic algorithmic paradigm of BM3D can be roughly summarized in three steps. First, for a given image decomposed into $N$ patches of size $\ell$, denoted as $F_1,\cdots,F_N$, a block-matching process groups all patches similar to $F_i$ in a set $S_i$, and form matrix $F_{S_i}\in\mathbb{R}^{\left| S_i \right|\times\ell}$ consisting of patches in $S_i$; denote $\sigma=\sum_{i=1}^N \left| S_i \right|$. Second, let $V_{S_i}\in\mathbb{R}^{\ell\times\ell}$ be a local basis\footnote{In the original BM3D \cite{BM3D2007}, $V_{S_i}$ is set as DCT, DFT or wavelet and $\Phi_{S_i}$ is set as the 1-D Haar transform; in BM3D-SAPCA \cite{dabov2009bm3d}, $V_{S_i}$ is set to the principal components of $F_{S_i}$ when $|S_i|$ is large enough.}, $\Phi_{S_i}\in\mathbb{R}^{\left| S_i \right|\times \left| S_i \right|}$ be a nonlocal basis for $S_i$, and calculate coefficient matrix $C_{S_i}=\Phi_{S_i}^{\top}F_{S_i}V_{S_i}$ for group $S_i$; the matrix $F_{S_i}$ is then denoised by hard-thresholding (or Wiener filtering) $C_{S_i}$ and estimate $\widehat{F}_{S_i}=\Phi_{S_i}\widehat{C}_{S_i}V_{S_i}^{\top}$ from the resulting coefficient matrix $\widehat{C}_{S_i}$. In matrix form, this can be written as
\begin{equation}
\label{eq:BM3D_step2}
\begin{bmatrix}
  \widehat{F}_{S_1}\\
  \vdots\\
  \widehat{F}_{S_N}
\end{bmatrix}
=
\begin{bmatrix}
  \Phi_{S_1} & & \\
  & \ddots & \\
  & & \Phi_{S_N}
\end{bmatrix}
\begin{bmatrix}
  \widehat{C}_{S_1} & & \\
  & \ddots & \\
  & & \widehat{C}_{S_N}
\end{bmatrix}
\begin{bmatrix}
  V_{S_1} & & \\
  & \ddots & \\
  & & V_{S_N}
\end{bmatrix}^{\top}.
\end{equation}
In the third and last step, pixel values at each location of the image are reconstructed using a weighted average of all patches covering that location in the union of all estimated $\widehat{F}_{S_i}$'s; the contribution of an estimated patch contained in $\widehat{F}_{S_i}$ is proportional to $w_i:=\Vert C_{S_i}\Vert_{\ell_0}^{-1}$ i.e. inversely proportional to the sparsity of $C_{S_i}$. If we set $A_F\in\mathbb{R}^{N\times \sigma}$ to be an weighted incidence matrix defined by
\begin{equation*}
  \left(A_F\right)_{kq} =
  \begin{cases}
   w_q & \textrm{if patch $F_k$ is contained in $S_q$,}\\
   0 & \textrm{otherwise}
  \end{cases}
\end{equation*}
and let $D\in\mathbb{R}^{N\times N}$ be a diagonal matrix with
\begin{equation*}
  D_{kk} = \sum_{q=1}^{\sigma}\left( A_F \right)_{kq},
\end{equation*}
then the patch matrix of the original noise-free image is estimated via
\begin{equation}
  \label{eq:BM3D_step3}
  \widehat{F}
  =D^{-1}A_F
  \begin{bmatrix}
  \widehat{F}_{S_1}\\
  \vdots\\
  \widehat{F}_{S_N}
  \end{bmatrix}
  =D^{-1}A_F
\begin{bmatrix}
  \Phi_{S_1} & & \\
  & \ddots & \\
  & & \Phi_{S_N}
\end{bmatrix}
\begin{bmatrix}
  \widehat{C}_{S_1} & & \\
  & \ddots & \\
  & & \widehat{C}_{S_N}
\end{bmatrix}
\begin{bmatrix}
  V_{S_1} & & \\
  & \ddots & \\
  & & V_{S_N}
\end{bmatrix}^{\top}.
\end{equation}
The denoised image $\hat{f}$ is finally constructed from $\widehat{F}$ by taking a weighted average along anti-diagonals of $\widehat{F}$, with adaptive weights depending on the pixels.

In this three-step procedure, if we define
\begin{align}
\Phi &= D^{-1}A_F
\begin{bmatrix}
  \Phi_{S_1} & & \\
  & \ddots & \\
  & & \Phi_{S_N}
\end{bmatrix},\label{eq:frame-phi}\\
V &= \begin{bmatrix}
  V_{S_1} & & \\
  & \ddots & \\
  & & V_{S_N}
\end{bmatrix},\label{eq:frame-v}
\end{align}
then $\Phi$, $V$ together defines a tight frame similar to our construction of convolution framelets in Section~\ref{sec:conv-framelets}. The main difference here is that our energy concentration intuition described in Section~\ref{sec:energy-conc-conv} would not carry through to this setup, because in general every patch appears in multiple $F_{S_i}$'s and it is difficult to conceive that $\Phi$ consistently defines an embedding $\embed$ for the patches of the image. This technicality, however, can be easily remedied if we extend our framework from a global embedding over the entire data set $\mathscr{X}$ to a union of ``local embeddings'' on ``local charts'' of $\mathscr{X}$, i.e.
\begin{equation*}
  \embed_{S_i}:\,\mathbb{R}^{\ell}\supset S_i\rightarrow\mathbb{R}^{p_i},\, i = 1,\cdots,N
\end{equation*}
where $\mathscr{X}$ is covered by the unions of all $S_i$'s; note that the target spaces $\mathbb{R}^{p_i}$ do not even have to be of the same dimension (assuming $p_i\leq \ell$ for simplicity). For each embedding $\embed_{S_i}$, $\Phi_{S_i}\in\mathbb{R}^{|S_i|\times |S_i|}$ and $V_{S_i}\in\mathbb{R}^{\ell\times \ell}$ define nonlocal and local orthonormal bases, respectively. It can be expected that the energy concentration of convolution framelet coefficients in this setup will be more involved since both concentration patterns within and across local embedding spaces will be intertwined. We will further explore these interactions in a future work.

\section{LDMM as a regularization on convolution framelet coefficients}
\label{sec:revisit-ldmm}

In this section, we connect the discussion on convolution framelets with the recent development of Low Dimensional Manifold Model (LDMM) \cite{LDMM} for image processing. The basic assumption in LDMM is that the collection of all patches of a fixed size from an image live on a low-dimensional smooth manifold isometrically embedded in a Euclidean space. If we denote $f$ for the image and write $\mathcal{M}\left( f \right)=\mathcal{M}_{\ell}\left( f \right)$ for the manifold of all patches of size $\ell$ from $f$, then the image $f$ can be reconstructed from its (noisy) partial measurements $y$ by solving the optimization problem
\begin{equation}
  \label{eq:LDMM}
  \argmin_{f}\quad\text{dim}(\mathcal{M}(f)) + \lambda\Vert y - S f\Vert_2^2
\end{equation}
where $\lambda$ is a parameter and $S$ is the measurement (sampling) matrix. In other words, LDMM utilizes the dimension of the ``patch manifold'' $\mathcal{M}\left( f \right)$ as a regularization term in a variational framework. It is shown\footnote{We give a simplified proof of identify \cref{eq:dimension-L2-integral} in \cref{sec:appendix-short}.} in \cite{LDMM} that
  \begin{equation}
  \label{eq:dimension-L2-integral}
    \dim(\mathcal{M}(f)) = \sum_{j=1}^{\ell}|\nabla_\mathcal{M}\alpha_j(x)|^2,
  \end{equation}
where $\alpha_j$ is the $j$th coordinate function on $\mathcal{M}(f)$, i.e.
\begin{equation*}
  x = \left(\alpha_1(x),\cdots,\alpha_{\ell}(x)\right)\quad \forall x \in \mathcal{M}(f)\subset\mathbb{R}^{\ell},
\end{equation*}
and $\nabla_{\mathcal{M}}$ is the gradient operator on the Riemannian manifold $\mathcal{M}$. Note that $\alpha_j$ corresponds exactly to the $j$th column of the patch matrix $F$ of $f$, see \cref{eq:patch_matrix} and \cref{fig:patch-mat}.

With $\dim(\mathcal{M}(f))$ substituted by the right hand side of \cref{eq:dimension-L2-integral}, a split Bregman iterative scheme can be applied to the optimization problem \eqref{eq:LDMM}, casting the latter into sub-problems that optimize the dimension regularization with respect to each coordinate function $\alpha_j$ and the measurement fidelity term iteratively. In the $n$th iteration, the sub-problem of dimension regularization decouples into the following optimization problems on each coordinate function, 
\begin{equation}\label{eq:LDMM2}
\min_{\alpha_j\in H^1(\mathcal{M}_{(n-1)})}\, \Vert\nabla \alpha_j\Vert_{L^2(\mathcal{M}_{(n-1)})}^2 + \mu\sum_{x\in\mathcal{M}_{(n-1)}}|\alpha_j(x)-e_j(x)|^2,\quad j = 1,\cdots,\ell,
\end{equation}
where $\mathcal{M}_{(n-1)} = \mathcal{M}(f_{(n-1)})$ is the patch manifold associated with the reconstruction $f_{(n-1)}$ from the $\left( n-1 \right)$th iteration, $\mu$ is a penalization parameter, and $e_j$ is a function on this manifold originated from the split Bregman scheme. The Euler-Lagrange equations of the minimization problems in \eqref{eq:LDMM2} are cast into integral equations by the Point Integral Method (PIM), and then discretized as
\begin{align}\label{eq:linear-eq}
 \left[D_{(n-1)}-W_{(n-1)} + \mu W_{(n-1)}\right]\, F^j = \mu W_{(n-1)}\, E_{(n-1)}^j,\quad j = 1,\cdots, \ell,
\end{align}
where $F^j$, $E_{(n-1)}^j$ are the $j$th columns of the patch matrix $F$ and the matrix $E_{(n-1)}$, corresponding to $\alpha_j$ and $e_j$ in \eqref{eq:LDMM2} respectively; the weighted adjacency matrix $W_{(n-1)}$ and the diagonal degree matrix $D_{(n-1)}$, both introduced by PIM, are updated in each iteration after building the patch matrix $F_{(n-1)}$ from $f_{(n-1)}$. We refer interested readers to \cite{LDMM} for more details.

The rest of this section presents a connection we discovered between solving equation \eqref{eq:linear-eq} and an $\ell_2$-regularization problem on the convolution framelet coefficients of $f$.

\subsection{Dimension regularization in convolution framelets}
\label{sec:low-dim-as-reg}

The low-dimension assumption in LDMM is reflected in the minimization of a quadratic form derived from \cref{eq:dimension-L2-integral} for the column vectors of the patch matrix $F$ associated with image $f$. From a manifold learning point of view, \cref{eq:dimension-L2-integral} is not the only approach to impose dimension regularization. Since the columns of $F$ are understood as coordinate functions in $\mathbb{R}^{\ell}$, $F$ is indeed a data matrix representing a point cloud in $\mathbb{R}^{\ell}$ (see Section~\ref{sec:energy-conc-guar}). If this point cloud is sampled from a low-dimensional submanifold of $\mathbb{R}^{\ell}$, then one can attempt to embed the point cloud into a Euclidean space of lower dimension without significantly distorting pairwise distances between points. As we have seen in \cref{prop:energy-concentration-guarantee}, if there exists a good low-dimension embedding $\Phi$ for the data matrix $F$, the energy of convolution framelet coefficients will concentrate on a small triangular block on the upper left part of the coefficient matrix, provided that an appropriate local basis $V$ is chosen to pair with $\Phi$; a lower intrinsic dimension corresponds to a smaller upper left block and thus more compact energy concentration. Therefore, alternative to \cref{eq:dimension-L2-integral}, one can impose regularization on convolution framelet coefficients to push more energy into the upper left block of the coefficient matrix; see details below.

We start by reformulating the optimization problem \cref{eq:LDMM} proposed in \cite{LDMM} as an $\ell_2$-regularization problem for convolution framelet coefficients, where the convolution framelets themselves will be estimated along the way since they are adaptive to the data set. For simplicity of notation, we drop the sub-index $(n-1)$ in \cref{eq:linear-eq} as $W,\,D$ and $E$ are fixed when updating the patch matrix within each iteration. To distinguish from the notation $F_0,\cdots,F_{N-1}$ which stand for the rows of matrix $F$, we use super-indices $F^1,\cdots,F^{\ell}$ to denote the columns of $F$. Let $F_D^j = D^{1/2}F^j$ and $E_D^j = D^{1/2}E^j$, then the linear systems  \cref{eq:linear-eq} can be rewritten as
\begin{equation}
\label{eq:euler_lagrange_original_ldmm}
  (D-W)\,D^{-1/2}\,F_D^j + \mu W\,D^{-1/2}(F_D^j-E_D^j) = 0,\quad j = 1,\cdots,\ell.
\end{equation}
This system can be instantiated as the Euler-Lagrange equations of a different variational problem. Multiplying both sides of \cref{eq:euler_lagrange_original_ldmm} by $(W\,D^{-1/2})^{-1} = D^{1/2}\,W^{-1}$ from the left\footnote{The random walk matrix $D^{-1}W$ is invertible since all of its eigenvalues are positive, thus $W$ is also invertible.}, we have the equivalent linear system
\begin{equation}
\label{eq:euler_lagrange_new}
  D^{1/2}W^{-1}(D-W)D^{-1/2}F_D^j + \mu (F_D^j-E_D^j )= 0,\quad j = 1,\cdots,\ell.
\end{equation}
Notice that
\begin{equation*}
  D^{1/2}W^{-1}(D-W)D^{-1/2} = D^{1/2}W^{-1}D^{1/2} -I = (I - L)^{-1} - I,
\end{equation*}
where $L$ is the normalized graph diffusion Laplacian defined in \cref{eq:normalized_laplacian}. Therefore, solving \cref{eq:euler_lagrange_new} is equivalent to minimizing the following objective function
$$\sum_{j=1}^\ell\,\left[(F_D^j)^{\top} \,((I-L)^{-1}-I)\,F_D^j + \mu\Vert F_D^j-E_D^j\Vert^2\,\right]. $$
This is also equivalent to determining
\begin{align}\label{eq:LDMM3}
\argmin_{F\in\mathbb{R}^{N\times \ell}}\, \sum_{j=1}^\ell (F^j)^{\top}\;R_L\;F^j + \mu\Vert F-E\Vert_{\mathrm{F},\,D^{1/2}}^2,
\end{align}
where $\Vert\cdot\Vert_{\mathrm{F},\,A} = \Vert A\cdot\Vert_{\mathrm{F}} $ is the $A$-weighted Frobenius norm, and
\begin{equation}
\label{eq:graph_operator_ldmm}
  R_L = D\,W^{-1}(D-W) = D^{1/2}((I-L)^{-1}-I)D^{1/2}.
\end{equation}
The first term in \cref{eq:LDMM3} corresponds to the manifold dimension regularization term proposed in \cite{LDMM} whereas the second term promotes data fidelity. By the equivalence between \cref{eq:euler_lagrange_original_ldmm} and \cref{eq:euler_lagrange_new}, it suffices to focus on \cref{eq:LDMM3} hereafter.

To motivate our approach to analyze \cref{eq:LDMM3}, let us briefly investigate a similar but simpler regularization term based on nonlocal graph Laplacian, $\sum_j (F^j)^\top L F^j$, which differs from the dimension regularization term in \cref{eq:LDMM3} only in that the graph Laplacian $L$ replaces $R_L$. If we let $L = \Phi\Lambda\Phi^{\top}$ be the eigen-decomposition of $L$ with eigenvalues $\lambda_1,\cdots,\lambda_N$ on the diagonal of $\Lambda$ in ascending order, and pick any matrix $\tV\in\mathbb{R}^{\ell\times \ell'}$ satisfying $\tV\,\tV^{\top} = I_{\ell}$, then
\begin{align}\label{eq:reform-objective}
\sum_{j=1}^{\ell} (F^j)^{\top} LF^j
 =  tr\,\left(F^{\top}\Phi\Lambda\Phi^{\top}F\right)
 = tr\,\left( (\Phi^{\top}F\tV)^{\top}\Lambda(\Phi^{\top} F\tV)\right)
= \sum_{i = 1}^N\sum_{j=1}^{\ell'} \lambda_i\, C_{ij}^2,
\end{align}
where $C_{ij}$ is the $\left( i,j \right)$-entry of $C=\Phi^{\top}F\tV$. Minimizing this quadratic form will thus automatically regularize the energy concentration pattern by pushing more energy to the left part of $C$ where the columns correspond to smaller eigenvalues $\lambda_i$. Note that the only assumption we put on $\tV$ is that its columns constitutes a frame; by \cref{prop:frame}, $\tV$ being a frame in the patch space already suffices for constructing a convolution framelet system with $\Phi$.

Now we consider the minimization problem \cref{eq:LDMM3} with $R_L=D^{1/2}((I-L)^{-1}-I)D^{1/2}$ in the manifold dimension regularization term. 
Using $L=\Phi\Lambda\Phi^{\top}$, the operator $R_L$ can be written as $D^{1/2}\Phi\,\t{\Lambda}\,\Phi^{\top}D^{1/2}$, where $\t{\Lambda} = (I - \Lambda)^{-1}-I$ is a diagonal matrix. 
Similar to \cref{eq:reform-objective}, we have
\begin{align}\label{eq:reform-RL}
\sum_{j=1}^{\ell} (F^j)^{\top}\;R_L\; F^j
& = tr\,(F^{\top} D^{1/2}\Phi\t{\Lambda}\Phi^{\top}D^{1/2}F)
= tr\,(F^{\top}\t{\Phi}\t{\Lambda}\t{\Phi}^{\top}F) = \sum_{i=1}^N\sum_{j=1}^{\ell} \t{\lambda}_i\t{C}_{ij}^{\,2},
\end{align}
where $\t{\Phi} = D^{1/2}\Phi$, $\t{\lambda}_i = \lambda_i/\left(1-\lambda_i\right)$ for $i=1,\cdots,N$, and $\t{C} = \t\Phi^{\top}F\tV$ is the convolution framelet coefficient matrix. The optimization problem \cref{eq:LDMM3} can thus be recast as
\begin{equation}\label{eq:LDMM-eqv}
\begin{aligned}
\min_{F\in\mathbb{R}^{N\times \ell}}\quad & \sum_{i=1}^N\sum_{j=1}^{\ell} \,\t{\lambda}_i\t{C}_{ij}^{\,2} + \mu\Vert F-E\Vert_{\mathrm{F},\,D^{1/2}}^2\\
\textrm{s.t.}\quad & \t{C} = \t\Phi^{\top}F\tV.
\end{aligned}
\end{equation}
Ideally, if the first $p$ columns of $\t{\Phi}$ provide a low-dimensional embedding of the patch manifold $\mathcal{M}$ with small isometric distortion,
then $\lambda_i \approx 1$ for all $i >p$, which correspond to large $\t{\lambda}_i$ and forces $\t{C}_{ij}$ for the optimal $\t{C}$ to be close to $0$ for all $j\geq 1$ and all $i>p$. Intuitively, since $0\leq \widetilde{\lambda_1}\leq \widetilde{\lambda_2}\leq\cdots\leq\t{\lambda_N}$, the coefficient matrix $\t{C}$ of the minimizer of \cref{eq:LDMM-eqv} will likely concentrate most of its $\left\| \t{C} \right\|_{\mathrm{F}}^2$ energy on its top few rows corresponding to the smallest eigenvalues. Note that \cref{eq:reform-RL} imposes a much stronger regularization on the lower part of $\t{C}$ than \cref{eq:reform-objective} does, since $\lambda_j$'s are bounded from above by $1$ but $\t{\lambda}_j$ can grow to $+\infty$ as sub-index $j$ increases.

\begin{algorithm}
\caption{Inpainting using Reweighted LDMM with Local SVD Basis}
\label{alg:rw-LDMM}
  \begin{algorithmic}[1]
    \Procedure{rw-LDMM-SVD}{$f_{\left( 0 \right)},\ell$}\Comment{subsampled image $f_{\left( 0 \right)}\in\mathbb{R}^N$, patch size $\ell\in\mathbb{Z}^{+}$}
      \State $f_{\left( 0 \right)}\gets$ randomly assign values to missing pixels in $f_{\left( 0 \right)}$
      \State $n\gets 0$, $r\gets \lceil 0.2\ell\rceil$\Comment{reweight only the first $r$ columns}
      \State $d_{\left( 0 \right)}\gets 0\in\mathbb{R}^{N\times \ell}$
      \State $F_{\left( 0 \right)}\gets\textrm{patch matrix of }f^{\left( 0 \right)}$\Comment{$F_{\left( n \right)}\in\mathbb{R}^{N\times \ell}$}
      \While{not converge}
        \State $s_1,\cdots, s_r, V^1_{\left( n \right)},\cdots, V^r_{\left( n \right)}\gets$ partial SVD of $F_{\left( n \right)}$ \Comment{$s_i\in\mathbb{R}^+, V^i_{\left( n \right)}\in\mathbb{R}^{\ell\times 1}$}
        \State $V_{\left( n \right)}\gets\left[ V_{\left( n \right)}^1,\cdots,V_{\left( n \right)}^{r} \right]$\Comment{$V_{\left( n \right)}\in\mathbb{R}^{\ell\times r}$}
        \State $W_{\left( n \right)}\leftarrow$ weighted adjacency matrix constructed from $F_{\left( n \right)}$\Comment{$W_{\left( n \right)}\in\mathbb{R}^{N\times N}$}
               \begin{equation*}
                 W_{\left( n \right)} \left( i,j \right)=\exp \left( -\frac{\left\| F_i-F_j \right\|^2}{\epsilon} \right)\quad 1\leq i,j\leq N
                \end{equation*}
        \State $D_{\left( n \right)}\leftarrow$ diagonal matrix containing row sums of $W_{\left( n \right)}$\Comment{$D_{\left( n \right)}\in\mathbb{R}^{N\times N}$}
               \begin{equation*}
                  D_{\left( n \right)} \left( i,i \right)=\sum_{j=1}^NW_{\left( n \right)} \left( i,j \right)\quad 1\leq i\leq N
               \end{equation*}
        \State $E_{\left( n \right)}\gets F_{\left( n \right)}-d_{\left( n \right)}$\Comment{$E_{\left( n \right)}\in\mathbb{R}^{N\times \ell}$}
        \State $H_{\left( n \right)}\gets 0\in\mathbb{R}^{N\times r}$, $U_{\left( n \right)}\gets 0\in\mathbb{R}^{N\times \left( \ell-r \right)}$\Comment{$H_{\left( n \right)}\in\mathbb{R}^{N\times r}$, $U_{\left( n \right)}\in\mathbb{R}^{N\times \left( \ell-r \right)}$}
        \For{$i\gets 1,r$}
          \State $\gamma_i\gets 1-s_1^{-1}s_i$
          \State $H_{\left( n \right)}^i\gets$ solution of the linear system\Comment{$H_{\left( n \right)}^i\in\mathbb{R}^{N\times 1}$}
          \begin{equation*}
            \left( \gamma_i \left( D_{\left( n \right)}-W_{\left( n \right)} \right)+\mu W_{\left( n \right)} \right)H_{\left( n \right)}^i=\mu W_{\left( n \right)} E_{\left( n \right)} V_{\left( n \right)}^i
          \end{equation*}
        \EndFor
        \State $U_{\left( n \right)}\gets$ solution of the linear systems
        \begin{equation*}
          \left( D_{\left( n \right)}-W_{\left( n \right)}+\mu W_{\left( n \right)} \right)U_{\left( n \right)} = \mu W_{\left( n \right)}E_{\left( n \right)}\left(I_N-V_{\left( n \right)}V_{\left( n \right)}^{\top}\right)
        \end{equation*}
        \State $\widetilde{F}_{\left( n+1 \right)}\gets H_{\left( n \right)}V_{\left( n \right)}^{\top}+U_{\left( n \right)}+d_{\left( n \right)}$
        \State $\t{f}_{\left( n+1 \right)}\gets$ average out entries of $\widetilde{F}_{\left( n+1 \right)}$ according to \cref{eq:average_along_antidiagonal}
        \State $f_{\left( n+1 \right)}\gets$ reset subsampled pixels to their known values
        \State $F_{\left( n+1 \right)}\gets$ patch matrix of $f_{\left( n+1 \right)}$
        \State $d_{\left( n+1 \right)}\gets \t{F}_{\left( n+1 \right)}-F_{\left( n+1 \right)}$
        \State $n\gets n+1$
      \EndWhile\label{euclidendwhile}
      \State \textbf{return} $f_{\left( n \right)}$
    \EndProcedure
  \end{algorithmic}
\end{algorithm}

\subsection{Reweighted LDMM}
\label{sec:reweighted-ldmm}
As explained in Section~\ref{sec:low-dim-as-reg}, LDMM regularizes the energy concentration of convolution framelet coefficients by pushing the energy to the upper part of the coefficient matrix. This is clearly suboptimal from the point of view of \cref{prop:energy-concentration-guarantee}: the energy $\left\| \t{C} \right\|_{\mathrm{F}}^2$ should actually concentrate on the \emph{upper left} part as opposed to merely on the upper part of $C$, at least when an appropriate local basis is chosen. This observation motivates us to modify the objective function in \cref{eq:LDMM-eqv} to reflect the stronger patter of energy concentration pointed out in \cref{prop:energy-concentration-guarantee}. We refer to the modified optimization problem as \emph{reweighted LDMM}, or \emph{rw-LDMM} for short, since it differs from the original LDMM mainly in the weights in front of each $\t{C}_{ij}^2$ in \cref{eq:LDMM-eqv}.

Note that the objective function in the optimization problem \cref{eq:LDMM-eqv} is invariant to the choices of $\tV$ --- this is consistent with the interpretation of the regularization term as an estimate for the manifold dimension (the dimension of a manifold is basis-independent); but we can modify the objective function by incorporating patch bases as well. Consider a matrix $V = [v_1,\cdots,v_l]$ consisting of basis vectors for the ambient space $\mathbb{R}^{\ell}$ where the patches live, and define $s_j$, the \emph{energy filtered by $v_j$} of signal, as
\begin{equation}
\label{eq:energy-filtered-local}
  s_j = \Vert Fv_j\Vert^2 = \Vert f* v_j(-\cdot)\Vert^2=\sum_{i=1}^N\,C_{ij}^{2}.
\end{equation}
Note that $s_j$ is precisely the $j$th singular value of the patch matrix $F$ when $v_j$ is chosen as the $j$th right singular vector of $F$. If $s_j$ decays fast enough as $j$ increases, the patches on average will be approximated efficiently using a few $v_j$'s with large $s_j$ values. As discussed in Section~\ref{sec:energy-conc-conv}, natural candidates of $V$ include DCT bases, wavelet bases, or even SVD basis of $F$ (which are optimal low-rank approximations of $F$ in the $L^2$-sense; when the true $F$ is unknown, as in the case of signal reconstruction, we can also consider using right singular vectors obtained from an estimated patch matrix). After choosing such a basis $V$, the energy of the optimal coefficients matrix $C$ with respect to convolution framelets $\{\,\psi_{ij}\,\}$ concentrate mostly within the upper left $p\times r$ block, where $r$ depends on the decay rate of $s_j$. For this purpose, instead of using weights $\t{\lambda}_i$ in \eqref{eq:LDMM-eqv} alone, we propose to use weights $\t{\lambda}_i\,\gamma_j$, where $\gamma_j$ is a weight associated to $v_j$ such that $\gamma_j$ increases as $s_j$ decreases; one such example\footnote{We have also experimented with other forms of $\gamma_j$, for instance $\gamma_j=s_1s_j^{-1}-1$, which sends $\gamma_j$ to $+\infty$ when $s_j$ is close to $0$ and is thus a stronger regularization than the one used in rw-LDMM (which only sends $\gamma_j$ to $1$ as $s_j\rightarrow 0$). We do not use such stronger regularization weights since in practice they tend to produce over-smoothed results for reconstruction. This is not surprising, as natural images may contain intricate details that are encoded in convolution framelet components corresponding to small $s_j$'s; these details are likely smoothed out if $\gamma_j$ over-regularizes the convolution framelet coefficients.} is to set $\gamma_j = 1-s_1^{-1}s_j\in\left[0,1\right]$. In other words, we \emph{reweight} the penalties $\t{\lambda}_i$  to fine-tune the regularization. With this modification, the quadratic form \cref{eq:reform-RL} becomes\footnote{The reweighted quadratic form \cref{eq:add-weight}, as well as \cref{eq:rw-LDMM-topr} below, depends on $V$ only through $\Gamma^{1/2}$. In fact, as long as $V\,V^{\top} = I_{\ell}$, there holds $\Vert\, x-y\,\Vert_{\ell_2} = \Vert\, V^{\top}x-V^{\top}y\,\Vert_{\ell_2}$, and thus $W$ --- the weighted adjacency matrix constructed using a Gaussian RBF --- is $V$-invariant; consequently $R_L$ is $V$-invariant as well.
}
\begin{align}\label{eq:add-weight}
tr\Big(\,(FV\,\Gamma^{1/2})^{\top}\;R_L\;(FV\,\Gamma^{1/2})\,\Big) = \,\sum_{i=1}^{N}\sum_{j=1}^\ell\t{\lambda}_i\gamma_j\,\t{C}_{ij}^{\,2}.
\end{align}
Substituting this new quadratic energy for the original quadratic energy in \cref{eq:LDMM-eqv} and \cref{eq:LDMM3} yields the following optimization problem:
\begin{align}\label{eq:rw-LDMM-opt}
\argmin_{F\in\mathbb{R}^{N\times \ell}}\,& \sum_{j=1}^\ell \gamma_j(Fv_j)^{\top}R_L\left(Fv_j\right) + \mu\Vert F-E\Vert_{\mathrm{F},\,D^{1/2}}^2\\
\Leftrightarrow &\qquad\argmin_{F\in\mathbb{R}^{N\times \ell}}\, tr\Big(\,(FV\,\Gamma^{1/2})^{\top}\;R_L\;(FV\,\Gamma^{1/2})\,\Big) + \mu\Vert F-E\Vert_{\mathrm{F},\,D^{1/2}}^2.\notag
\end{align}
Using PIM, the Euler-Lagrange equations of \cref{eq:rw-LDMM-opt} turn into the corresponding linear systems:
\begin{align}\label{eq:rw-linear-eq}
 (\gamma_j(D-W) + \mu W) Fv_j = \mu W Ev_j,\quad j = 1,\cdots, \ell.
\end{align}
We shall refer to the optimization problem \cref{eq:rw-LDMM-opt} (sometimes also the linear system \cref{eq:rw-linear-eq} when the context is clear) \emph{reweighted LDMM}, or \emph{rw-LDMM} for short.

In practice, we observed that it often suffices to reweight the penalties only for the coefficients in the leading columns, i.e., keep the $\gamma_j$'s in \cref{eq:add-weight} only for $1\leq j\leq r$, where $r$ is a relatively small number compared with $\ell$. This can be done by first noting that the quadratic energy in \cref{eq:reform-objective} equals
\begin{equation}
\label{eq:split-quadratic-energy}
  \begin{aligned}
    tr\left( V^{\top} F^{\top}\Phi\Lambda\Phi^{\top} FV\right) = tr\left(V_r^{\top}F^{\top}\Phi\Lambda\Phi^{\top} FV_r\right)+tr\left(\left(FV^c_r\right)^{\top}\Phi\Lambda\Phi^{\top} FV^c_r\right),
  \end{aligned}
\end{equation}
where $V_r\in\mathbb{R}^{\ell\times r}$ consists of the left $r$ columns of $V$, and $V_r^c$ consists of the remaining columns. We can then reweight only the first term in the summation on the right hand side of \cref{eq:split-quadratic-energy}, i.e. replace \cref{eq:add-weight} with
\begin{align}\label{eq:rw-LDMM-topr}
tr\left((FV_r\Gamma_r^{1/2})^{\top}R_L(FV_r\Gamma_r^{1/2})\right) + tr\left(\left(FV^c_r\right)^{\top}\Phi\Lambda\Phi^{\top} FV^c_r\right)=\sum_{i=1}^N\t{\lambda}_i\left(\sum_{j=1}^r\gamma_jC_{ij}^2 +\!\!\! \sum_{j=r+1}^\ell C_{ij}^2\right).
\end{align}
The linear systems \cref{eq:rw-linear-eq} change accordingly to
\begin{align}\label{eq:rw-linear-eq-topr}
\begin{split}
 (\gamma_j(D-W) + \mu W) Fv_j &= \mu W Ev_j,\quad j= 1,\cdots, r,\\
  ((D-W) + \mu W) Fv_j &= \mu W Ev_j,\quad j= r+1,\cdots, \ell.
\end{split}
\end{align}
In all numerical experiments presented in Section~\ref{sec:numerical}, we set $r\approx 0.2\ell$, i.e. only coefficients in the left $20\%$ columns are reweighted in the regularization. We did not observe serious changes in performance when this economic reweighting strategy is adopted, but the improvement in computational efficiency is significant: for example, when right singular vectors of $F$ are used as local basis, solving \cref{eq:rw-linear-eq-topr} with partial SVD in each iteration is much faster than the full SVD required in \cref{eq:rw-linear-eq}. One can avoid explicitly computing $v_{r+1},\cdots,v_{\ell}$ by converting \cref{eq:rw-linear-eq-topr} into
\begin{equation}
  \label{eq:rw-linear-eq-topr2}
\begin{split}
 (\gamma_j(D-W) + \mu W) Fv_j &= \mu W Ev_j,\quad j= 1,\cdots, r,\\
  ((D-W) + \mu W) F \left( I_N-V_rV_r^{\top} \right) &= \mu W E \left( I_N-V_rV_r^{\top} \right),
\end{split}
\end{equation}
see \cref{alg:rw-LDMM} for more details\footnote{The linear systems in \cref{alg:rw-LDMM} actually produce $FV_r$ and $F \left( I_N-V_rV_r^{\top} \right)$ separately; the two matrices are combined together to reconstruct $F$ through $F=FV_rV_r^{\top}+F \left( I_N-V_rV_r^{\top} \right)$.}. Variants of \cref{alg:rw-LDMM} with other choices of $V$, such as DCT or wavelet basis, are just simplified versions of \cref{alg:rw-LDMM} where $V$ is a fixed input. Regardless of the choice for local basis, rw-LDMM yields consistently better inpainting results than LDMM in all of our numerical experiments; see details in Section~\ref{sec:comparison_inpainting}.

\section{Numerical results}
\label{sec:numerical}


\subsection{Linear and nonlinear approximation with convolution framelets}
\label{sec:comparison_approx}

For an orthogonal system $\left\{ e_n \right\}_{n\geq 0}$, the $N$-term \emph{linear} approximation of a signal $f$ is
\begin{equation*}
  f_N=\sum_{j=0}^{N-1}\left\langle f, e_j \right\rangle e_j,
\end{equation*}
whereas the $N$-term \emph{nonlinear} approximation of $f$ uses the $N$ terms with largest coefficients in magnitude, i.e.
\begin{equation*}
  \t{f}_N=\sum_{j\in \mathcal{I}_N}\left\langle f,e_j \right\rangle e_j,
\end{equation*}
where
\begin{equation*}
  \mathcal{I}_N\subset \mathbb{N}, \left| \mathcal{I}_N \right|=N,\textrm{ and } \left| \left\langle f,e_i \right\rangle \right|\geq \left| \left\langle f,e_k \right\rangle \right|\;\forall i\in\mathcal{I}_N, k\notin \mathcal{I}_N.
\end{equation*}
\begin{figure}[htbp]
\centering
\includegraphics[width = .8\textwidth]{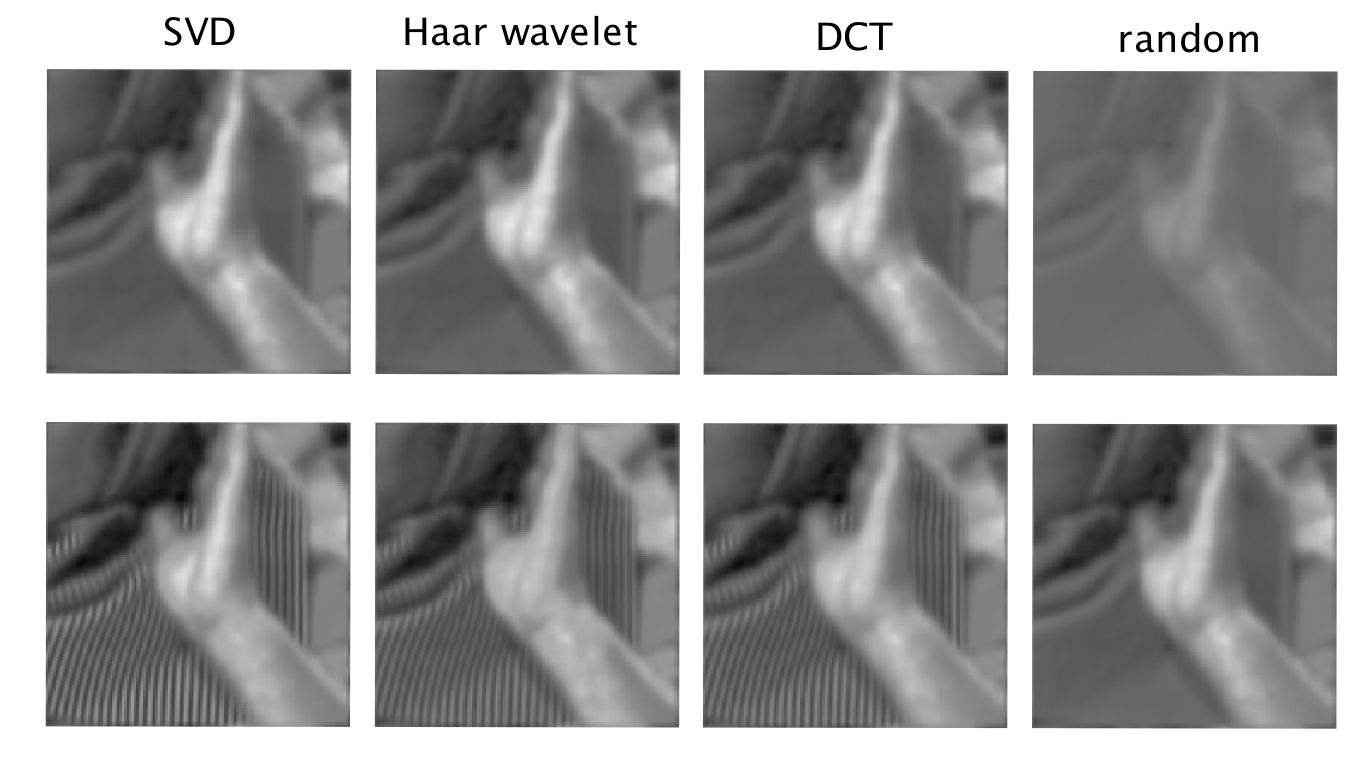}
\caption{Linear (\emph{top}) and nonlinear (\emph{bottom}) $8$-term convolution framelet approximation of the $128\times 128$ cropped {\sc barbara} image shown in \cref{fig:barbara128}. Except for the last column corresponding to random local basis, nonlinear approximation captures much more texture on the scarf than linear approximation does.}
\label{fig:approx}
\end{figure}

\begin{figure}[htbp]
\centering
\includegraphics[width = 1.0\textwidth]{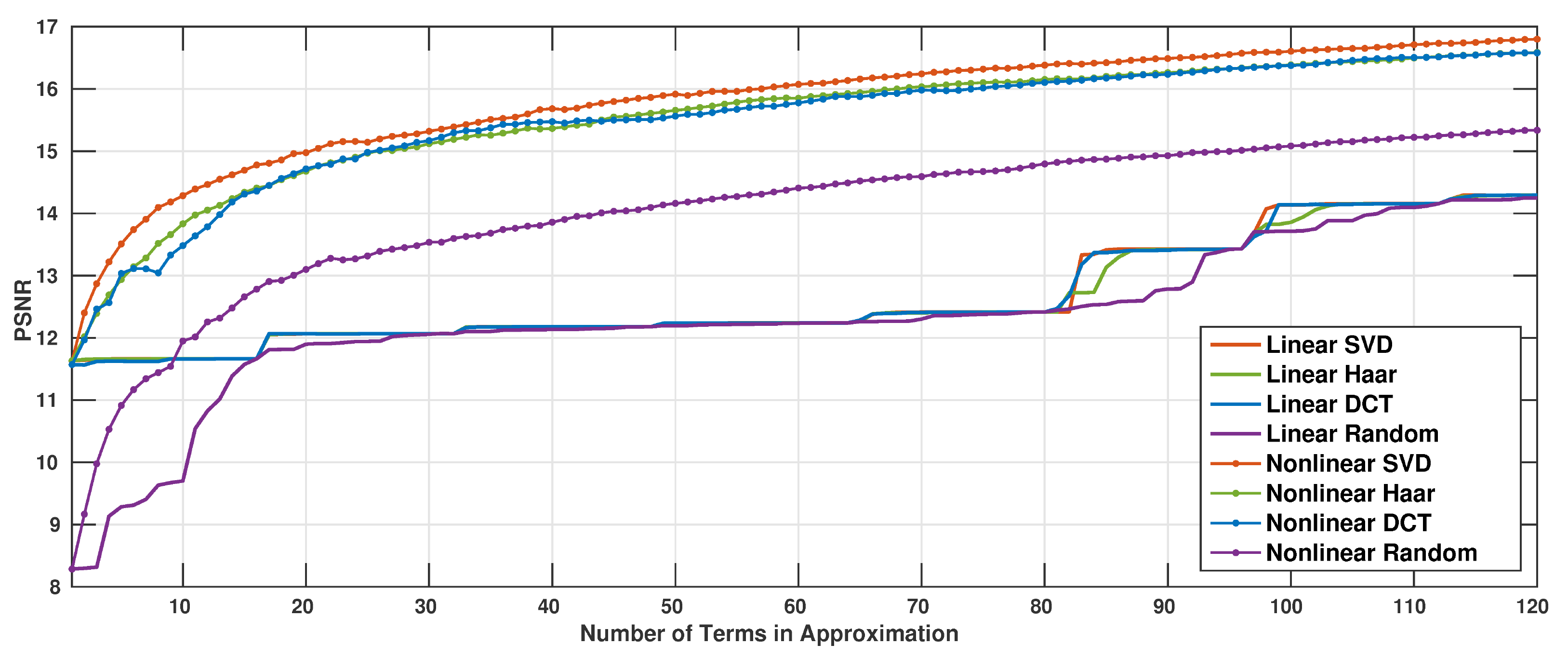}
\caption{PSNR as a function of the number of approximation terms in linear and nonlinear approximations of the $128\times 128$ cropped {\sc barbara} image in \cref{fig:barbara128}. Except for random local basis, the PSNR curves for linear approximation are almost identical.}
\label{fig:psnr}
\end{figure}

We compare in this section linear and nonlinear approximations of images using different convolution framelets $\left\{ \psi_{ij} = \ell^{-1/2}\phi_i*v_j \right\}$. To make sense of linear approximation, which requires a predetermined ordering of the basis functions, we fix the nonlocal basis $\left\{\phi_i\right\}$ to be the eigenfunctions of the normalized graph diffusion Laplacian $L$ (see \cref{eq:normalized_laplacian}); $\psi_{ij}$'s are then ordered according to descending magnitudes $\left|\left( 1-\lambda_i \right)s_j\right|$, where $\lambda_i$ is the $i$th eigenvalue of $L$ (which lies in $\left[ 0,1 \right]$) and $s_j$ is the energy of the function filtered by $v_j$ (see \cref{eq:energy-filtered-local}). We take a cropped {\sc barbara} image of size $128\times 128$, as shown in \cref{fig:barbara128}, subtract the mean pixel value from all pixels, then perform linear and nonlinear approximation for the resulting image. \cref{fig:approx} presents the $N$-term linear and nonlinear approximation results with $N=8$, patch size $\ell=16$ ($4\times 4$ patches), and local basis $V$ is chosen as patch SVD basis (right singular vectors of the patch matrix), Haar wavelets, DCT basis, and---as a baseline---randomly generated orthonormal vectors. In terms of visual quality, nonlinear approximation produces consistently better results here than linear approximation; as we also expect, SVD basis, Haar wavelets, and DCT basis all outperform the baseline using random local basis.

The superiority of nonlinear over linear approximation is also justified in terms of the Peak Signal-to-Noise Ratio (PSNR) of the reconstructed images. In \cref{fig:psnr}, we plot PSNR as a function of the number of terms used in the approximations. Except for random local basis, PSNR curves for all types of nonlinear approximation are higher than the curves for linear approximation, suggesting that sparsity-based regularization on convolution framelet coefficients may lead to stronger results than $\ell_2$-regularization. When the number of terms is large, even nonlocal approximation with random local basis outperforms linear approximation with SVD, wavelets, or DCT basis. \cref{fig:non-linear top} shows several convolution framelet components with the largest coefficients in magnitude for each choice of local basis.

\begin{figure}[htbp]
\centering
\includegraphics[width = 1.0\textwidth]{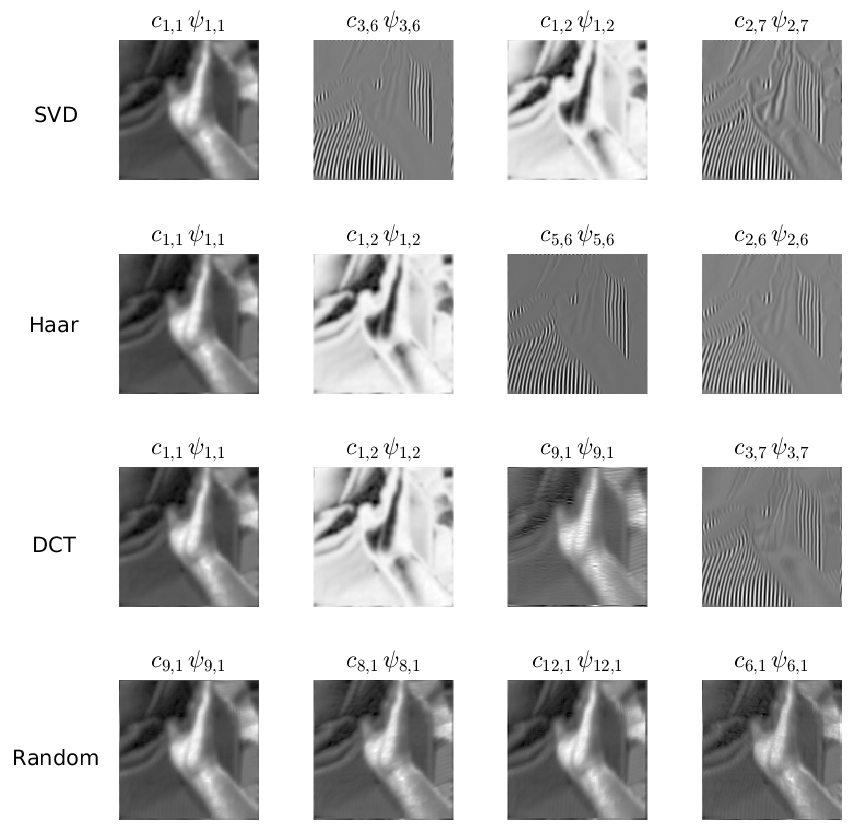}
\caption{The first four terms in each type of nonlocal convolution framelet approximation. Components in each row correspond to the four convolution framelet coefficients with largest magnitudes. The cropped $128\times 128$ {\sc barbara} image is the same as shown in \cref{fig:barbara128}.}
\label{fig:non-linear top}
\end{figure}




\subsection{Inpainting with rw-LDMM}
\label{sec:comparison_inpainting}

We first compare rw-LDMM with LDMM in the same setup as in \cite{LDMM} for image inpainting: given the randomly subsampled original image with only a small portion (e.g. $5\%$ to $20\%$) of the pixels retained, we reconstruct the image from an initial guess that fills missing pixels with Gaussian random numbers. The mean and variance of the pixel values filled in the initialization match those of the retained pixels. In our numerical experiments, rw-LDMM outperforms LDMM whenever the same initialization is provided. For LDMM, we use the \verb|MATLAB| code and hyperparamters provided by the authors of \cite{LDMM}; for rw-LDMM, we experimented with both SVD and DCT basis as local basis, and reweigh only the leading $20\%$ functions in the local basis. We run both LDMM and rw-LDMM for $100$ iterations on images of size $256\times 256$, and the patch size is always fixed as $10\times 10$. Peak Signal-to-Noise Ratio (PSNR)\footnote{$\textrm{PSNR}(f,f') \doteq 20\log_{10}(\textrm{MAX}(f)) - 10\,\log_{10}(\textrm{MSE}(f,f'))$.} of the reconstructed images obtained after the $100$th iteration\footnote{The number of iteration is also a hyperparameter to be determined. We use $100$ iterations to make fair comparisons between our results and those in \cite{LDMM}. In case the reconstruction degenerates after too many iterations due to over-regularization, one may --- for the purpose of comparison only --- also look at the highest PSNR within a fix number of iterations for each algorithm. We include those comparisons in \texttt{Supplementary Materials} as well.} are used to measure the inpainting quality. \cref{fig:rwLDMMvsLDMM} compares the three algorithms for a cropped {\sc Barbara} image of size $256\times 256$; \cref{fig:PSNRvsIter} plots PSNR as a function of the number of iterations and indicates that rw-LDMM outperforms LDMM consistently for a wide range of iteration numbers. More numerical results and comparisons can be found in \texttt{Supplementary Materials}.

\begin{figure}[htbp]
\centering
\includegraphics[width=1.0\textwidth]{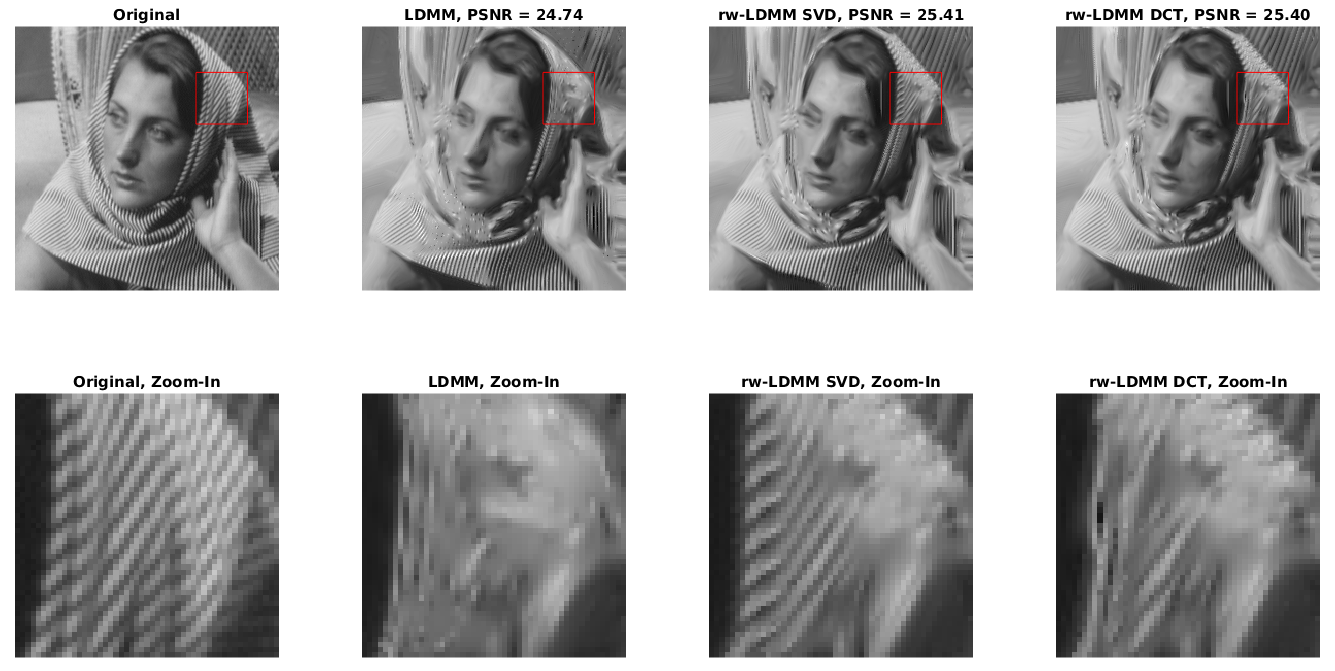}
\caption{Reconstructed $256\times 256$ {\sc Barbara} images from $10\%$ randomly subsampled pixels using LDMM and rw-LDMM. The same random initialization for missing pixels was used for LDMM and both SVD and DCT versions of rw-LDMM. \emph{Top:} Both rw-LDMM algorithms outperform LDMM in terms of PSNR. \emph{Bottom:} Zoom-in views of the $50\times 50$ blocks enclosed by red boxes on each reconstructed image illustrate better texture restoration by rw-LDMM.}
\label{fig:rwLDMMvsLDMM}
\end{figure}

\begin{figure}[htbp]
\centering
\includegraphics[width=0.75\textwidth]{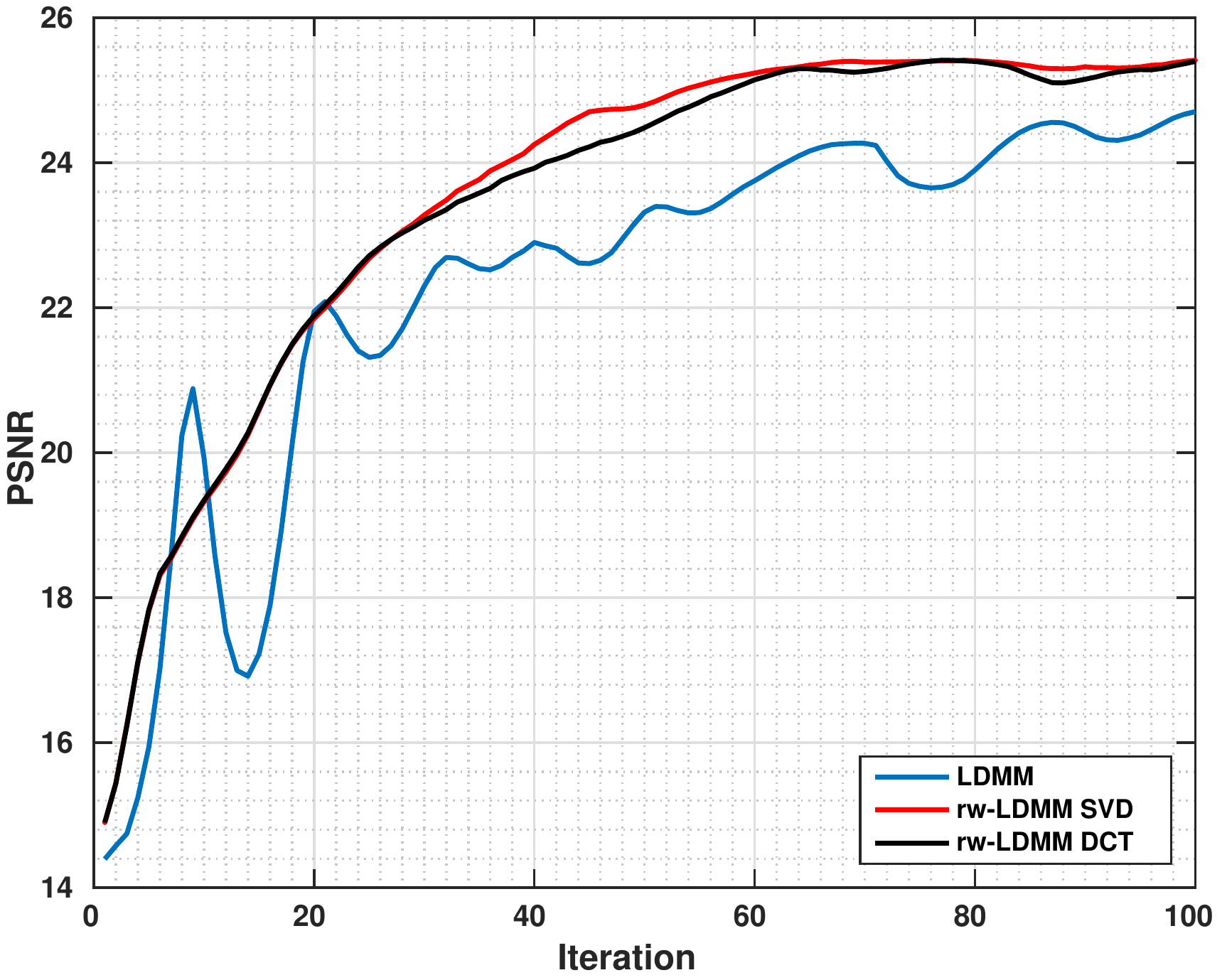}
\caption{PSNR of the reconstructed $256\times 256$ {\sc Barbara} image (see \cref{fig:rwLDMMvsLDMM}) at each iteration of LDMM, rw-LDMM with SVD, and rw-LDMM with DCT. After the $20$th iteration, both rw-LDMM algorithms always achieve higher PSNR than the original LDMM.}
\label{fig:PSNRvsIter}
\end{figure}

We also compare LDMM and rw-LDMM with ALOHA (Annihilating Filter-based Low-Rank Hankel Matrix) \cite{JY2015}, a recent patch-based inpainting algorithm using a low-rank block-Hankel structured matrix completion approach. For some test images with strong texture patterns (e.g. {\sc Barbara}, {\sc Fingerprint}, {\sc Checkerboard}, {\sc Swirl}), restoration from $10\%$ random subsamples by ALOHA reaches higher PSNR than LDMM and rw-LDMM; see \texttt{Supplementary Materials} for more details. However, we observe that the reconstruction by ALOHA sometimes contains artefacts that are not present in those obtained by rw-LDMM and LDMM, even though the ALOHA results can have higher PSNR (see e.g. \cref{fig:checkerboard_aloha} and \cref{fig:fingerprint_aloha}.)
Intuitively, this effect suggests different inpainting mechanisms underlying LDMM/rw-LDMM and ALOHA: LDMM and rw-LDMM, as indicated in \cite{LDMM}, ``spread out'' the retained subsamples to missing pixels, whereas ALOHA exploits the intrinsic (rotationally invariant) low-rank property of the block Hankel structure for each image patch. Numerical results with critically low subsample rate ($2\%$ and $5\%$) are in accordance with this intuition; see \cref{fig:stopsign_2percent}, \cref{fig:man_2percent}, as well as more examples in \texttt{Supplementary Materials}.

\begin{figure}[htbp]
\centering
\includegraphics[width=1.0\textwidth]{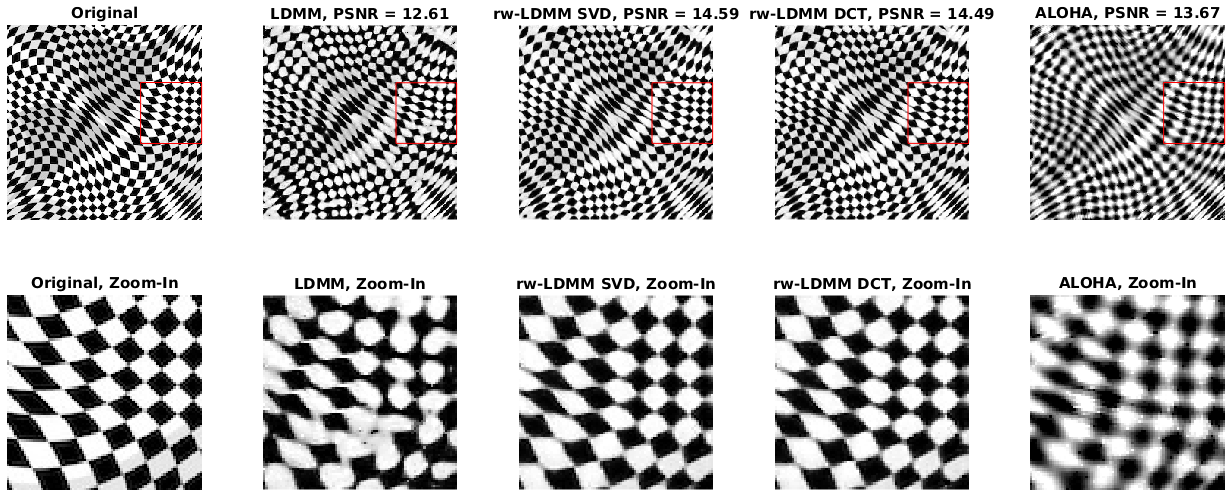}
\caption{Reconstructed $256\times 256$ {\sc Checkerboard} images from $10\%$ randomly subsampled pixels using LDMM, rw-LDMM, and ALOHA. \emph{Top:} Restored images. \emph{Bottom:} Zoom-in views of the $80\times 80$ blocks enclosed by red boxes. Compared with LDMM and ALOHA, the proposed rw-LDMM reconstructs images with higher PSNR and fewer visual artefacts.}
\label{fig:checkerboard_aloha}
\end{figure}

\begin{figure}[htbp]
\centering
\includegraphics[width=1.0\textwidth]{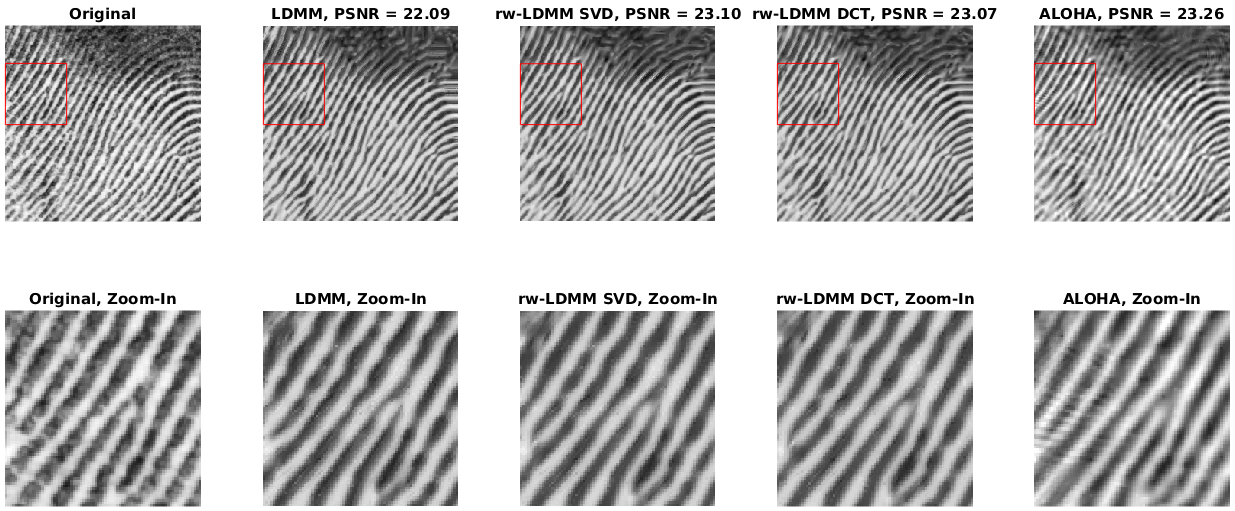}
\caption{Reconstructed $256\times 256$ {\sc Fingerprint} images from $10\%$ randomly subsampled pixels using LDMM, rw-LDMM, and ALOHA. \emph{Top:} Restored images. \emph{Bottom:} Zoom-in views of the $80\times 80$ blocks enclosed by red boxes. Compared with LDMM and ALOHA, the proposed rw-LDMM reconstructs images with comparable or higher PSNR and fewer visual artefacts.}
\label{fig:fingerprint_aloha}
\end{figure}

\begin{figure}[htbp]
\centering
\includegraphics[width=1.0\textwidth]{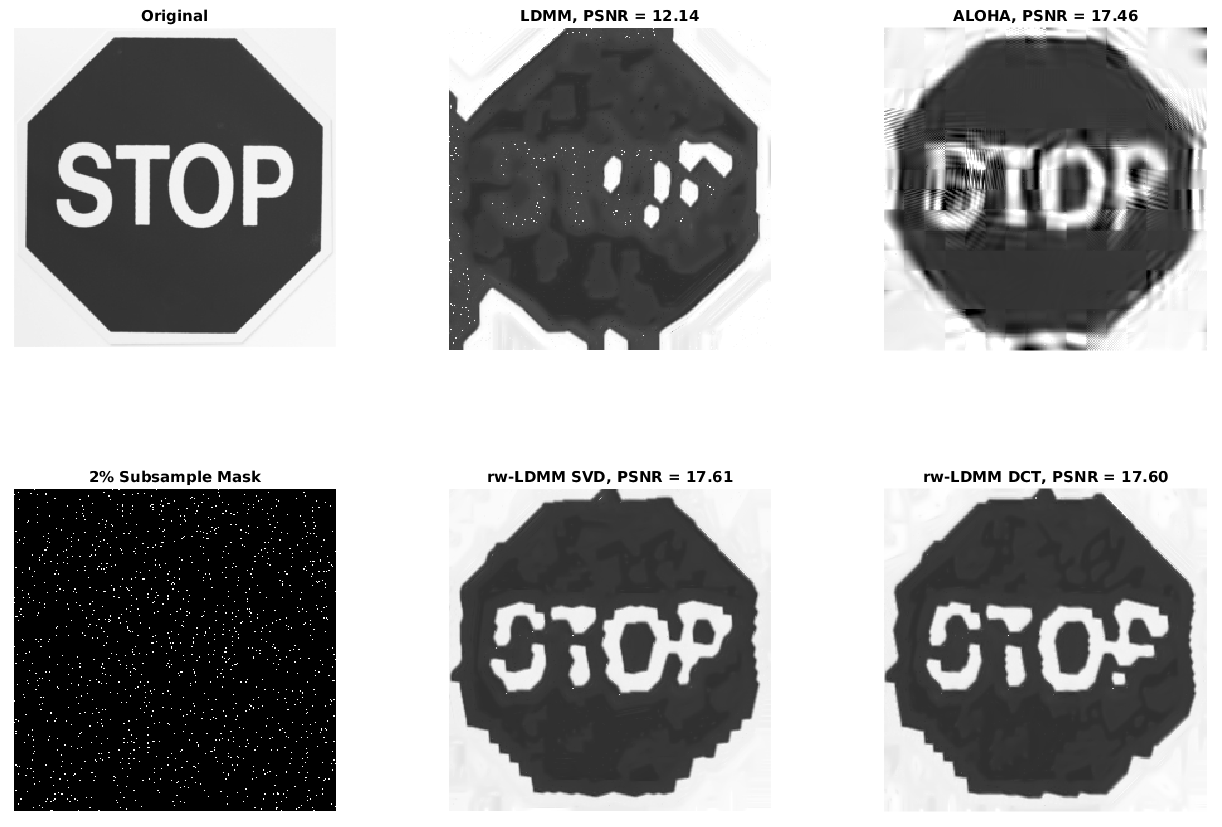}
\caption{Reconstructions of a $256\times 256$ traffic sign from $2\%$ randomly subsampled pixels using LDMM, rw-LDMM, and ALOHA. Both ALOHA and rw-LDMM restore legible letters even under such critically low subsample rate; rw-LDMM methods also achieve higher PSNR with fewer visual artefacts.}
\label{fig:stopsign_2percent}
\end{figure}

\begin{figure}[htbp]
\centering
\includegraphics[width=1.0\textwidth]{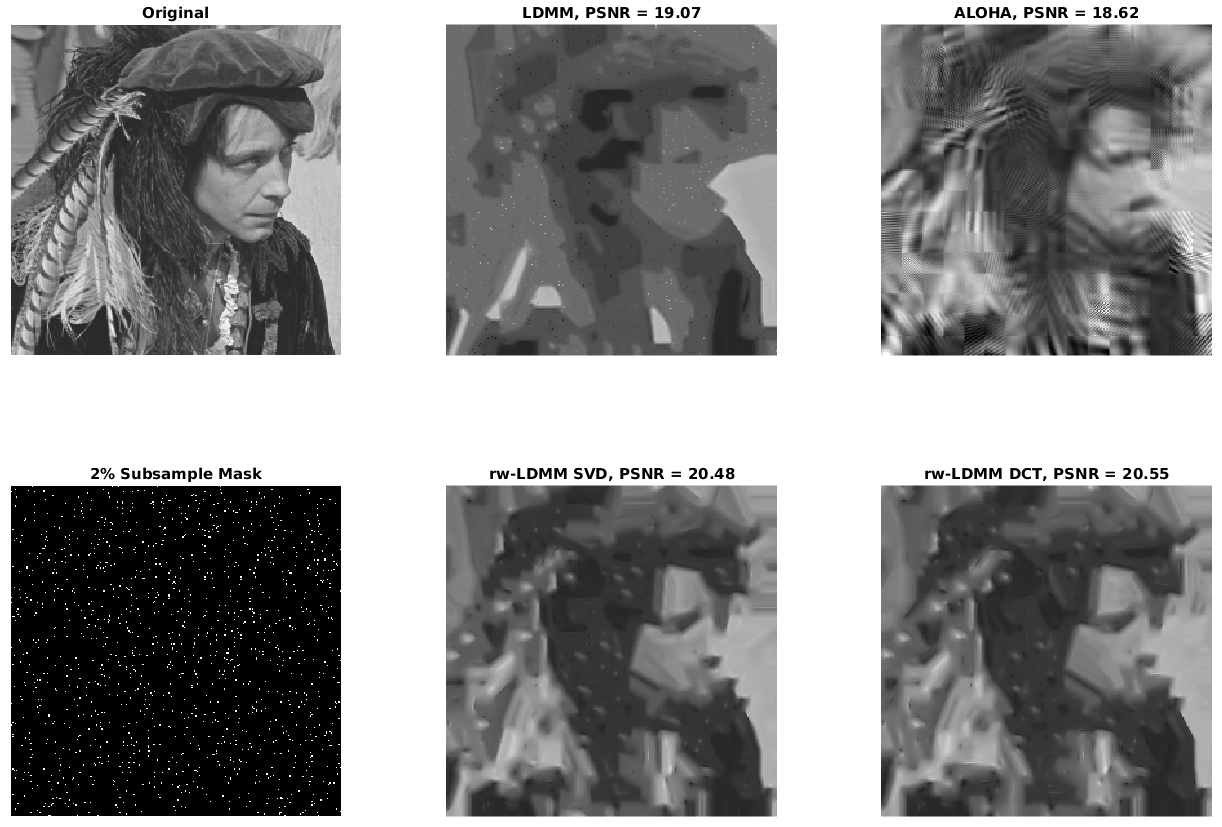}
\caption{Reconstructed $256\times 256$ {\sc Man} images from $2\%$ randomly subsampled pixels using LDMM, rw-LDMM, and ALOHA. The proposed rw-LDMM methods restore recognizable human shapes as well as color patterns on the hair and the hat decoration even under such critically low subsample rate.}
\label{fig:man_2percent}
\end{figure}



\section{Conclusion and future work}
\label{sec:conclusion}
In this paper, we present convolution framelets, a patch-based representation that combines local and nonlocal bases for image processing. We show the energy compaction property of these convolution framelets in a linear reconstruction framework motivated by nonlinear dimension reduction, i.e. the $L^2$-energy of a signal concentrates on the upper left block of the coefficient matrix with respect to convolution framelets. This energy concentration property is exploited to improve LDMM by incorporating ``near optimal'' local patch bases into the regularization mechanism, for the purpose of strengthening the energy concentration pattern. Numerical experiments suggest that the proposed reweighted LDMM algorithm performs better than the original LDMM in inpainting problems, especially for images containing high contrast non-regular textures. 

One direction we would like to explore is to compare the $\ell_2$-regularization with other regularization frameworks. In fact, our numerical experiments suggest that nonlinear approximation of signals with convolution framelets could outperform linear approximation, hence regularization techniques based on $\ell_1$- and $\ell_0$-norms have the potential to further improve the reconstruction performance. Furthermore, although we established an energy concentration guarantee in Section~\ref{sec:energy-conc-guar}, it remains unclear in concrete scenarios which local patch basis exactly attains the optimality condition in \cref{prop:energy-concentration-guarantee}. We made a first attempt in this direction for specific linear embedding in Section~\ref{sec:energy-conc-guar}, but similar results for nonlinear embeddings, as well as further extensions of the framework to unions of local embeddings (which we expect will also provide insights for other nonlocal transform-domain techniques, including BM3D), are also of great interest.

Another direction we intend to explore is the influence of the patch size $\ell$. Throughout this work, as well as in most patch-based image processing algorithms, the patch size is a hyperparamter to be chosen empirically and fixed; however, historically neuroscience experiments \cite{SPCB2007} and fractal image compression techniques \cite{BarnsleySloan1990Patent,Jacquin1992} provide evidence for the importance of perceiving patches of different sizes simultaneously in the same image. Since patch matrices corresponding to varying patch sizes of the same image are readily available, we can potentially combine convolution framelets across different scales to build multiresolution convolution framelets.


\appendix
\section{Proof of \cref{prop:frame}}
\label{sec:appendix}

\begin{lemma}\label{lem:v-conv}
Let $\tV\in\mathbb{R}^{l\times p},\, s.t.\,\tV\,\tV^{\top} = I_{\ell}$, then $\forall\, f\in\mathbb{R}^N,\, N\geq \ell$, \[f = \frac{1}{\ell}\, \sum_{i=1}^p\,f * \t{v}_i * \t{v}_i(-\cdot).\]
\end{lemma}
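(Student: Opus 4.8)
The plan is to reduce the identity to a pointwise statement about the double convolution and then let the Parseval condition $\tV\tV^{\top}=I_\ell$ do the work. First I would fix $n\in\{0,\dots,N-1\}$ and unwind $(f * \t{v}_i * \t{v}_i(-\cdot))[n]$ using only the definitions of circular convolution, zero-padding, and flip. Since $\t{v}_i$ is supported on its first $\ell$ coordinates, the inner convolution is the finite sum $(f * \t{v}_i)[m]=\sum_{k=0}^{\ell-1} f[m-k]\,\t{v}_i[k]$, and the flip satisfies $\t{v}_i(-\cdot)[n-m]=\t{v}_i^{0}[m-n]$, which vanishes unless $(m-n)\bmod N\in\{0,\dots,\ell-1\}$. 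Substituting $s=(m-n)\bmod N$ and using that $f * \t v_i$ is $N$-periodic, the outer convolution collapses to
\[
  (f * \t{v}_i * \t{v}_i(-\cdot))[n] \;=\; \sum_{s=0}^{\ell-1}\sum_{k=0}^{\ell-1} f[n+s-k]\,\t{v}_i[s]\,\t{v}_i[k].
\]

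Next I would sum over $i=1,\dots,p$ and interchange the (finite) summations, which turns the inner factor into $\sum_{i=1}^{p}\t{v}_i[s]\,\t{v}_i[k]$, exactly the $(s,k)$ entry of $\tV\tV^{\top}$; by hypothesis this equals $\delta_{sk}$. Hence the double sum reduces to $\sum_{s=0}^{\ell-1} f[n+s-s]=\ell\,f[n]$, and dividing by $\ell$ yields the claimed identity at coordinate $n$, for every $n$, i.e. as an identity of vectors in $\mathbb{R}^N$. An equivalent route, if one prefers transform methods, is to pass to the discrete Fourier transform: $f * \t{v}_i * \t{v}_i(-\cdot)$ has DFT $\widehat f\cdot|\widehat{\t v}_i|^{2}$ (the flip conjugates the transform of the zero-padded $\t v_i$), so the right-hand side has DFT $\tfrac1\ell\,\widehat f\sum_{i=1}^{p}|\widehat{\t v}_i[\xi]|^{2}$; writing $e_\xi\in\mathbb{C}^{\ell}$ for the vector of the first $\ell$ sampled exponentials at frequency $\xi$, one gets $\sum_{i=1}^{p}|\widehat{\t v}_i[\xi]|^{2}=e_\xi^{*}(\tV\tV^{\top})e_\xi=\|e_\xi\|^{2}=\ell$, so the DFT of the right-hand side is precisely $\widehat f$.

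I do not expect a genuine obstacle in either route; the only thing that needs care is the bookkeeping of the zero-padding and flip conventions, so that the support restrictions — and hence the index ranges in the finite double sum — are pinned down correctly. Once that is in place, the hypothesis $\tV\tV^{\top}=I_\ell$ immediately forces the collapse. This lemma is the ``local'' half of the tight-frame statement: combined with an analogous expansion of $f$ (or of each resulting convolution) against the nonlocal system, it feeds directly into the proof of \cref{prop:frame}.
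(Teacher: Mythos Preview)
Your proposal is correct and follows essentially the same approach as the paper: unwind the circular convolutions, use the zero-padding/support of $\t{v}_i$ to restrict the index range to $\{0,\dots,\ell-1\}$, then invoke $\tV\tV^{\top}=I_\ell$ so that $\sum_i \t v_i[s]\t v_i[k]=\delta_{sk}$ collapses the sum to $\ell f[n]$. The only cosmetic difference is that the paper first shows $\sum_{i=1}^p \t v_i * \t v_i(-\cdot)=\ell\,\delta$ and then convolves with $f$, whereas you carry $f$ through from the start; the Fourier-transform alternative you sketch is a valid second route not taken in the paper.
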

\begin{proof}[Proof of \cref{lem:v-conv}]
By definition \[\t{v}_i *\t{v}_i(-\cdot)[n] = \sum_{m=0}^{N-1}\t{v}_i[n-m]\t{v}_i[-m] = \sum_{m'=0}^{l-1} \t{v}_i[n+m']\t{v}_i[m'],\] since $\t{v}_i[m] = 0, \, \forall \ell\leq m\leq N,\, i = 1,\cdots,p$.
Therefore, 
\[\sum_{i=1}^p \t{v}_i*\t{v}_i(-\cdot)[n] = \sum_{i=1}^p\sum_{m=0}^{\ell-1}\t{v}_i[n+m]\t{v}_i[m],\]
and if we change the order of summation, we have $\sum_{i=1}^p \t{v}_i[m+n]\t{v}_i[m] = \delta(n)$, which follows from $\t{V}\t{V}^{\top} = I_{\ell}$. In sum, $\sum_{i=1}^p \t{v}_i*\t{v}_i(-\cdot)[n] = \ell\cdot \delta(n)$, hence $f = \frac{1}{\ell}\sum_{i=1}^p f* \t{v}_i *\t{v}_i(-\cdot)$. 
\end{proof}

\begin{proof}[Proof of \cref{prop:frame}]
By \cref{lem:v-conv},
\begin{align}\label{eq:expansion}
f & = \frac{1}{m}\sum_{i=1}^{m'}\, f*v^{\,S}_i*v^{\,S}_i(-\,\cdot)\notag= \frac{1}{m}\sum_{i=1}^{m'}\,\left(\sum_{j=1}^{n'}\, \langle f*v^{S}_i(-\cdot),\,v^L_j\rangle\, v^L_j\right) *v^{S}_i\notag\\
&= \sum_{i,j} \, \Big\langle f,\,\frac{1}{\sqrt{m}}\,v^L_j*v^S_i\Big\rangle\, \frac{1}{\sqrt{m}}\,v^L_j*v^S_i\notag \doteq \sum_{i,j} c_{ij}\, \psi_{ij},\quad \textrm{where }\psi_{ij} = \frac{1}{\sqrt{m}}\,v^L_j*v^S_i.
\end{align}
\end{proof}

\section{A Simplified Proof of the Dimension Identity \cref{eq:dimension-L2-integral}}
\label{sec:appendix-short}

\begin{proposition}
Assume a $d$-dimensional Riemannian manifold $\mathcal{M}$ is isometrically embedded into $\mathbb{R}^{\ell}$, with coordinate functions $\left\{ \alpha_j\mid 1\leq j\leq \ell \right\}$. Then at any point $x\in \mathcal{M}$
\begin{equation*}
    d = \dim(\mathcal{M}) = \sum_{j=1}^{\ell}|\nabla_\mathcal{M}\alpha_j(x)|^2,
\end{equation*}
where $\nabla_{\mathcal{M}}:C^{\infty}\left( \mathcal{M} \right)\rightarrow \mathfrak{X}\left(\mathcal{M}\right)$ is the gradient operator on $\mathcal{M}$.
\end{proposition}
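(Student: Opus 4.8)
The plan is to reduce the identity to an elementary computation with the orthogonal projection onto the tangent space. Fix a point $x\in\mathcal{M}$ and choose an orthonormal basis $\{u_1,\dots,u_d\}$ of $T_x\mathcal{M}$; since the embedding $\mathcal{M}\hookrightarrow\mathbb{R}^{\ell}$ is isometric, this basis is simultaneously orthonormal for the ambient Euclidean inner product. The coordinate function $\alpha_j$ is, by definition, the restriction to $\mathcal{M}$ of the ambient linear functional $y\mapsto\langle y,\hat{e}_j\rangle$, whose ambient differential at every point is the constant covector dual to the standard basis vector $\hat{e}_j\in\mathbb{R}^{\ell}$; hence $d\alpha_j(v)=\langle\hat{e}_j,v\rangle$ for all $v\in T_x\mathcal{M}$.

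From here I would argue purely linear-algebraically. By definition of the Riemannian gradient together with the isometry of the embedding, $\nabla_\mathcal{M}\alpha_j(x)$ is the unique tangent vector with $\langle\nabla_\mathcal{M}\alpha_j(x),v\rangle=\langle\hat{e}_j,v\rangle$ for every $v\in T_x\mathcal{M}$; this is exactly $P_x\hat{e}_j$, where $P_x\colon\mathbb{R}^{\ell}\to T_x\mathcal{M}$ is the orthogonal projection. Expanding $P_x\hat{e}_j=\sum_{k=1}^d\langle\hat{e}_j,u_k\rangle\,u_k$ gives $|\nabla_\mathcal{M}\alpha_j(x)|^2=\sum_{k=1}^d\langle\hat{e}_j,u_k\rangle^2$, and summing over $j$ and interchanging the two finite sums yields
\[\sum_{j=1}^{\ell}|\nabla_\mathcal{M}\alpha_j(x)|^2=\sum_{k=1}^d\sum_{j=1}^{\ell}\langle\hat{e}_j,u_k\rangle^2=\sum_{k=1}^d\|u_k\|^2=d,\]
using that $\{\hat{e}_j\}_{j=1}^{\ell}$ is an orthonormal basis of $\mathbb{R}^{\ell}$ and each $u_k$ is a unit vector. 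Equivalently one may observe $\sum_j|\nabla_\mathcal{M}\alpha_j(x)|^2=\sum_j(P_x)_{jj}=\operatorname{tr}(P_x)=\operatorname{rank}(P_x)=\dim T_x\mathcal{M}=d$; in particular the quantity is independent of the choice of orthonormal frame and of the point $x$, as asserted.

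The only delicate point — and the main, albeit mild, obstacle — is the first step of the middle paragraph: one must justify carefully that the intrinsic gradient computed with the induced (pulled-back) metric equals the tangential component of the ambient Euclidean gradient. This is precisely where the isometric-embedding hypothesis is used, and it is the standard fact that for an isometrically embedded submanifold the Riemannian gradient of a restricted function is the orthogonal projection of its Euclidean gradient onto the tangent space. I would state this explicitly and include the one-line verification via the defining property $\langle\nabla_\mathcal{M}\alpha_j(x),v\rangle=d\alpha_j(v)$ rather than invoke it as a black box, since everything afterwards is just the trace of the projection matrix.
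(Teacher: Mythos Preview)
Your proof is correct and essentially identical to the paper's own argument: both fix an orthonormal basis of $T_x\mathcal{M}$, identify $\nabla_{\mathcal{M}}\alpha_j(x)$ with the orthogonal projection of the standard basis vector $\hat{e}_j$ onto the tangent space, and then swap the order of summation to obtain $\sum_{k=1}^d\|u_k\|^2=d$. Your trace-of-projection remark and your explicit justification of the projection step are nice touches, but the core computation is the same.
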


\begin{proof}
  Let $\nabla: C^{\infty}\left( \mathbb{R}^{\ell} \right)\rightarrow \mathfrak{X}\left( \mathcal{M} \right)$ be the gradient operator on $\mathbb{R}^{\ell}$. For any $f\in C^{\infty}\left( M \right)$, if $f$ is the restriction to $\mathcal{M}$ of a smooth function $\bar{f}\in C^{\infty}\left( \mathbb{R}^{\ell} \right)$, then $\nabla_{\mathcal{M}}f \left( x \right)$ is the projection of $\nabla \bar{f}$ to $T_xM$, the tangent space of $\mathcal{M}$ at $x\in \mathcal{M}$. Now, fix an arbitrary point $x\in\mathcal{M}$ and let $E_1 \left( x \right),\cdots,E_d \left( x \right)$ be an orthonormal basis for $T_x\mathcal{M}$. We have for any $1\leq j\leq \ell$
\begin{equation*}
  \nabla_{\mathcal{M}}\alpha_j \left( x \right) = \sum_{k=1}^d \left\langle \nabla \alpha_j \left( x \right), E_k \left( x \right) \right\rangle E_k \left( x \right),
\end{equation*}
and thus
\begin{equation*}
  |\nabla_\mathcal{M}\alpha_j(x)|^2 = \sum_{k=1}^d \left|\left\langle \nabla \alpha_j \left( x \right), E_k \left( x \right) \right\rangle \right|^2.
\end{equation*}
Note that $\nabla \alpha_j$ is a constant vector in $\mathbb{R}^{\ell}$ with $1$ at the $j$th entry and $0$ elsewhere. Consequently, inner product $\left\langle \nabla \alpha_j \left( x \right), E_k \left( x \right) \right\rangle$ simply picks out the $j$th coordinate of $E_k \left( x \right)$. Therefore
\begin{equation*}
  \begin{aligned}
    \sum_{j=1}^{\ell}|\nabla_\mathcal{M}\alpha_j(x)|^2 &= \sum_{j=1}^{\ell}\sum_{k=1}^d \left|\left\langle \nabla \alpha_j \left( x \right), E_k \left( x \right) \right\rangle \right|^2 = \sum_{k=1}^d\left(\sum_{j=1}^{\ell} \left|\left\langle \nabla \alpha_j \left( x \right), E_k \left( x \right) \right\rangle \right|^2\right)\\
    &= \sum_{k=1}^d \left| E_k \left( x \right) \right|^2 = \sum_{k=1}^d 1 = d
  \end{aligned}
\end{equation*}
which completes the proof.
\end{proof}

\section{Proof of the optimality and the sparsity of SVD in MDS}
\label{sec: appendix-c}

\begin{proposition}\label{prop:MDS-opt}
Let \(HX = U_X\Sigma_XV_X^{\top}\) be the reduced singular value decomposition of the centered data matrix $X\in\mathbb{R}^{N\times\ell}$, where $H = I_N-\frac{1}{N}\vone_N\,\vone_N^{\top}$ is the centering matrix. The optimal $V$ for \cref{eq:min-V} is exactly $V_X$, and the corresponding matrix basis has the sparsest representation of $X$. The proof of this statement can be found in \cref{sec: appendix-c}.
\end{proposition}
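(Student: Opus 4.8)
The plan is to show that for the MDS embedding the minimization \cref{eq:min-V} is a \emph{loss-free} linear reconstruction, exactly of the type that produced \cref{eq:global-inverse} for linear embeddings, and then to read off both the optimal local basis and the sparsity of the resulting expansion directly from the singular value decomposition. First I would make the ingredients of \cref{prop:energy-concentration-guarantee} explicit for this embedding. Writing the reduced SVD $HX = U_X\Sigma_X V_X^{\top}$, the MDS Gram matrix (for the Euclidean metric, up to the standard sign convention) is $K = HXX^{\top}H = U_X\Sigma_X^2 U_X^{\top}$, so the MDS coordinate functions --- the leading $p$ eigenvectors of $K$ rescaled by the square roots of the corresponding eigenvalues --- are precisely the first $p$ columns of $U_X\Sigma_X$. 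Hence $\tX = \Phi_\embed C_\embed$ with $\Phi_\embed$ the first $p$ columns of $U_X$ and $C_\embed$ the leading $p\times p$ block of $\Sigma_X$, matching the factorization \cref{eq:min-Phi}.

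Next, since MDS centers the data, I would run the minimization \cref{eq:min-V} through the bias-corrected reconstruction of \cref{rmk:centering} with centering matrix $B = HX - X$, so that the objective becomes $\Vert \tX\,\t{R}\,\tV^{\top} - (X+B)\Vert_{\mathrm{F}}^2 = \Vert \tX\,\t{R}\,\tV^{\top} - HX\Vert_{\mathrm{F}}^2$; the task is now to reconstruct $HX$. Let $P = \Phi_\embed\Phi_\embed^{\top}$ be the orthogonal projector onto $\range(\Phi_\embed)$. Every feasible matrix $\tX\,\t{R}\,\tV^{\top}$ has its columns in $\range(\Phi_\embed)$, so $\Vert \tX\,\t{R}\,\tV^{\top} - HX\Vert_{\mathrm{F}}^2 = \Vert \tX\,\t{R}\,\tV^{\top} - P\,HX\Vert_{\mathrm{F}}^2 + \Vert (I-P)\,HX\Vert_{\mathrm{F}}^2$, and the minimum is attained exactly when $\tX\,\t{R}\,\tV^{\top} = P\,HX = \Phi_\embed C_\embed V_X^{\top}$, with $V_X$ here meaning its first $p$ columns. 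Since $C_\embed$ is invertible this is equivalent to $\t{R}\,\tV^{\top} = V_X^{\top}$, achieved by the feasible pair $\t{R} = I_p$, $\tV = V_X$, so $V_\embed = V_X$. The minimizer is unique up to column signs: any feasible minimizer forces $\t{R}$ invertible (its image must have rank $p$) and then $\t{R}\,\t{R}^{\top} = \t{R}\,\tV^{\top}\tV\,\t{R}^{\top} = V_X^{\top}V_X = I_p$, so the upper-triangular $\t{R}$ is a signed identity and $\tV = V_X\t{R}$.

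For the sparsity claim, with $\t{R}=I_p$ the reconstruction reduces to $HX = \Phi_\embed C_\embed V_X^{\top} = \sum_i (\Sigma_X)_{ii}\,\phi_i v_i^{\top}$, i.e. the decomposition \cref{eq:global-inverse} with a \emph{diagonal} coefficient block; completing $\Phi_\embed$ and $V_\embed$ by the remaining left and right singular vectors of $HX$ makes the full coefficient matrix $C = \Phi^{\top}(HX)V$ diagonal, with exactly $\rank(HX)$ nonzero entries. Since any coefficient matrix $C' = (\Phi')^{\top}(HX)V'$ relative to an orthonormal outer-product basis $\{\phi'_i(v'_j)^{\top}\}$ satisfies $\Vert C'\Vert_0 \ge \rank(C') = \rank(HX)$, the SVD-induced basis attains the global $\ell_0$ minimum and is therefore the sparsest representation of (the centered) $X$, which establishes the second assertion.

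I expect the routine part to be the Pythagorean/projection step; the obstacles are bookkeeping. The two delicate points are (i) keeping the centering matrix $B$ straight, so that "reconstruct $X$" genuinely becomes "reconstruct $HX$", and (ii) the degenerate configurations where $HX$ is rank-deficient or has a repeated singular value at the cutoff index $p$ --- there "$V_X$" must be read as the specific basis returned by the MDS routine and uniqueness holds only up to the corresponding ambiguity. I would simply assume the relevant singular values are positive and the cutoff is unambiguous, or state the conclusion modulo that ambiguity.
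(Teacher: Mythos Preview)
Your proof is correct and reaches the same conclusion as the paper's, but by a genuinely different route. The paper works \emph{inside} the machinery of \cref{prop:energy-concentration-guarantee}: it assumes $X=HX$, takes $p=\ell-1$, and invokes the equivalent formulation \cref{eq:min-tri} (maximize the upper-triangular energy of $C_{LT}$). The entries are computed explicitly as $C_{ij}=\sigma_{X,i}\,v_{X,i}^{\top}\tv_j$, and one then observes that the choice $\tv_j=v_{X,j}$ annihilates the entire off-diagonal, which simultaneously maximizes the upper-triangular sum and exhibits the diagonal $(\ell-1)$-sparse coefficient matrix; the rank lower bound $\|C\|_0\ge\rank(C)$ finishes the sparsity claim.

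You instead attack the original Frobenius objective \cref{eq:min-V} directly via orthogonal projection: every feasible $\tX\,\t R\,\tV^{\top}$ has columns in $\range(\Phi_\embed)$, so a Pythagorean split shows the infimum equals $\|(I-P)HX\|_{\mathrm{F}}^2$, and you verify it is attained at the feasible pair $(\t R,\tV)=(I_p,(V_X)_p)$. This is more self-contained --- it never appeals to \cref{eq:min-tri} --- keeps the embedding dimension $p$ general, and delivers the uniqueness-up-to-signs statement essentially for free via $\t R\t R^{\top}=I_p$. The small price is that for $p<\rank(HX)$ the sparsity claim requires you to \emph{choose} the extension of $V_\embed$ to be the remaining right singular vectors, whereas the paper's choice $p=\ell-1$ makes that extension vacuous. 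Your caveats about repeated singular values and the centering bookkeeping are appropriate; the paper simply suppresses them by the blanket assumption $X=HX$.
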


\begin{proof}
Without loss of generality, assume $X = HX$.
In MDS, $\Phi_\embed = U_X$ with $p = \ell-1$. The entries of the coefficient matrix $C=\Phi^{\top}X\t{V}$ can be explicitly computed as
\[C_{ij} =\phi_i^{\top} X \,\tv_j = u_{X,\,i}^{\top} X \,\tv_j = u_{X,\,i}^{\top} U_X\,\Sigma_X \,V_X^{\top}\,\,\tv_j = \sigma_{X,i}\, v_{X,\,i}^{\top}\,\tv_j,\]
where $u_{X,\,i},\,\tv_j$ are the columns of $U_X$ and $\tV$, respectively, and $\sigma_{X,i}$ is the $i$th diagonal entry of $\Sigma_X$. According to \cref{eq:min-tri},  the optimal $\t{V}$ should satisfy $v_{X,\,i}^{\top}\,\tv_j = 0$ for all $i > j$, which is achieved by setting $\tv_i = v_{X,\,i}$. Moreover, since $\rank(C) = \ell-1$, $C$ has at least $\left(\ell-1\right)$ non-zero entries; it follows from $v_{X,\,i}^{\top}v_{X,\,j} = \delta_{i,j}$ that $C=U_X^{\top}XV_X$ has exactly $\left( \ell-1 \right)$ non-zero entries and is thus the sparsest representation.
\end{proof}

\section*{Acknowledgments}
We thank the authors of LDMM \cite{LDMM} for providing us their code. The work of Yue M. Lu was supported in part by the NSF under grant CCF-1319140 and by ARO under grant W911NF-16-1-0265. The work of Rujie Yin was supported in part by the NSF under grant 1516988.

\bibliographystyle{siamplain}
\bibliography{main}

\begin{thebibliography}{10}

\bibitem{ABK2015}
{\sc Y.~Aflalo, H.~Brezis, and R.~Kimmel}, {\em {On the Optimality of Shape and
  Data Representation in the Spectral Domain}}, SIAM Journal on Imaging
  Sciences, 8 (2015), pp.~1141--1160.

\bibitem{AEB2006}
{\sc M.~Aharon, M.~Elad, and A.~M. Bruckstein}, {\em {On the Uniqueness of
  Overcomplete Dictionaries, and a Practical Way to Retrieve Them}}, Linear
  Algebra and Its Applications, 416 (2006), pp.~48--67.

\bibitem{DCT1974}
{\sc N.~Ahmed, T.~Natarajan, and K.~R. Rao}, {\em {Discrete Cosine Transform}},
  IEEE Transactions on Computers, C-23 (1974), pp.~90--93,
  \href{http://dx.doi.org/10.1109/T-C.1974.223784}
  {doi:10.1109/T-C.1974.223784}.

\bibitem{BarnsleySloan1990Patent}
{\sc M.~F. Barnsley and A.~D. Sloan}, {\em {Methods and Apparatus for Image
  Compression by Iterated Function System}}, July~10 1990.
\newblock US Patent 4,941,193.

\bibitem{BelkinNiyogi2003}
{\sc M.~Belkin and P.~Niyogi}, {\em {Laplacian Eigenmaps for Dimensionality
  Reduction and Data Representation}}, Neural Computation, 15 (2003),
  pp.~1373--1396.

\bibitem{BelkinNiyogi2007}
{\sc M.~Belkin and P.~Niyogi}, {\em {{C}onvergence of {L}aplacian
  {E}igenmaps}}, {A}dvances in {N}eural {I}nformation {P}rocessing {S}ystems,
  19 (2007), p.~129.

\bibitem{BCM2005CVPR}
{\sc A.~Buades, B.~Coll, and J.-M. Morel}, {\em {A Non-Local Algorithm for
  Image Denoising}}, in { IEEE Computer Society Conference on Computer Vision
  and Pattern Recognition(CVPR) }, vol.~2, IEEE, 2005, pp.~60--65.

\bibitem{BCM2005}
{\sc A.~Buades, B.~Coll, and J.-M. Morel}, {\em {A Review of Image Denoising
  Algorithms, with a New One}}, Multiscale Modeling \& Simulation, 4 (2005),
  pp.~490--530.

\bibitem{CIdSZ2008}
{\sc G.~Carlsson, T.~Ishkhanov, V.~{De Silva}, and A.~Zomorodian}, {\em {On the
  Local Behavior of Spaces of Natural Images}}, International Journal of
  Computer Vision, 76 (2008), pp.~1--12.

\bibitem{ChatterjeeMilanfar2009}
{\sc P.~Chatterjee and P.~Milanfar}, {\em {Clustering-Based Denoising with
  Locally Learned Dictionaries}}, Image Processing, IEEE Transactions on, 18
  (2009), pp.~1438--1451.

\bibitem{ChatterjeeMilanfar2012}
{\sc P.~Chatterjee and P.~Milanfar}, {\em {Patch-Based Near-Optimal Image
  Denoising}}, Image Processing, IEEE Transactions on, 21 (2012),
  pp.~1635--1649.

\bibitem{CohenDAles1997}
{\sc A.~Cohen and J.-P. D'Ales}, {\em {Nonlinear Approximation of Random
  Functions}}, SIAM Journal on Applied Mathematics, 57 (1997), pp.~518--540.

\bibitem{CoifmanLafon2006}
{\sc R.~R. Coifman and S.~Lafon}, {\em {Diffusion Maps}}, Applied and
  computational harmonic analysis, 21 (2006), pp.~5--30.

\bibitem{CoifmanLafonLMNWZ2005PNAS1}
{\sc R.~R. Coifman, S.~Lafon, A.~B. Lee, M.~Maggioni, B.~Nadler, F.~Warner, and
  S.~W. Zucker}, {\em {{G}eometric {D}iffusions as a {T}ool for {H}armonic
  {A}nalysis and {S}tructure {D}efinition of {D}ata: {D}iffusion {M}aps}},
  {P}roceedings of the {N}ational {A}cademy of {S}ciences of the {U}nited
  {S}tates of {A}merica, 102 (2005), pp.~7426--7431,
  \href{http://dx.doi.org/10.1073/pnas.0500334102}
  {doi:10.1073/pnas.0500334102}.

\bibitem{CoifmanLafonLMNWZ2005PNAS2}
{\sc R.~R. Coifman, S.~Lafon, A.~B. Lee, M.~Maggioni, B.~Nadler, F.~Warner, and
  S.~W. Zucker}, {\em {{G}eometric {D}iffusions as a {T}ool for {H}armonic
  {A}nalysis and {S}tructure {D}efinition of {D}ata: {M}ultiscale {M}ethods}},
  {P}roceedings of the {N}ational {A}cademy of {S}ciences of the {U}nited
  {S}tates of {A}merica, 102 (2005), pp.~7432--7437,
  \href{http://dx.doi.org/10.1073/pnas.0500896102}
  {doi:10.1073/pnas.0500896102}.

\bibitem{BM3D2007}
{\sc K.~Dabov, A.~Foi, V.~Katkovnik, and K.~Egiazarian}, {\em {Image Denoising
  by Sparse 3-D Transform-Domain Collaborative Filtering}}, Image Processing,
  IEEE Transactions on, 16 (2007), pp.~2080--2095.

\bibitem{dabov2009bm3d}
{\sc K.~Dabov, A.~Foi, V.~Katkovnik, and K.~Egiazarian}, {\em {{BM3D} image
  denoising with shape-adaptive principal component analysis}}, in
  {SPARS'09-Signal Processing with Adaptive Sparse Structured Representations},
  2009.

\bibitem{TenLectures1992}
{\sc I.~Daubechies}, {\em {Ten Lectures on Wavelets}}, Society for Industrial
  and Applied Mathematics, Philadelphia, PA, USA, 1992.

\bibitem{DJKP1995}
{\sc D.~L. Donoho, I.~M. Johnstone, G.~Kerkyacharian, and D.~Picard}, {\em
  {Wavelet Shrinkage: Asymptopia?}}, Journal of the Royal Statistical Society,
  Ser. B,  (1995), pp.~371--394.

\bibitem{DonohoJohnstone1994}
{\sc D.~L. Donoho and J.~M. Johnstone}, {\em {Ideal Spatial Adaptation by
  Wavelet Shrinkage}}, Biometrika, 81 (1994), pp.~425--455,
  \href{http://dx.doi.org/10.1093/biomet/81.3.425}
  {doi:10.1093/biomet/81.3.425}.

\bibitem{EladAharon2006}
{\sc M.~Elad and M.~Aharon}, {\em {Image Denoising via Sparse and Redundant
  Representations over Learned Dictionaries}}, Image Processing, IEEE
  Transactions on, 15 (2006), pp.~3736--3745.

\bibitem{EASH1999}
{\sc K.~Engan, S.~O. Aase, and J.~{Hakon Husoy}}, {\em {Method of Optimal
  Directions for Frame Design}}, in {Acoustics, Speech, and Signal Processing,
  1999. Proceedings., 1999 IEEE International Conference on}, vol.~5, IEEE,
  1999, pp.~2443--2446.

\bibitem{GilboaOsher2007}
{\sc G.~Gilboa and S.~Osher}, {\em {Nonlocal Linear Image Regularization and
  Supervised Segmentation}}, Multiscale Modeling \& Simulation, 6 (2007),
  pp.~595--630.

\bibitem{GilboaOsher2008}
{\sc G.~Gilboa and S.~Osher}, {\em {Nonlocal Operators with Applications to
  Image Processing}}, Multiscale Modeling \& Simulation, 7 (2008),
  pp.~1005--1028.

\bibitem{Autoencoder2006}
{\sc G.~E. Hinton and R.~R. Salakhutdinov}, {\em {Reducing the Dimensionality
  of Data with Neural Networks}}, Science, 313 (2006), pp.~504--507,
  \href{http://dx.doi.org/10.1126/science.1127647}
  {doi:10.1126/science.1127647}.

\bibitem{IzbickiLee2015}
{\sc R.~Izbicki and A.~B. Lee}, {\em {Nonparametric Conditional Density
  Estimation in a High-Dimensional Regression Setting}}, Journal of
  Computational and Graphical Statistics, 0 (0), pp.~0--00,
  \href{http://dx.doi.org/10.1080/10618600.2015.1094393}
  {doi:10.1080/10618600.2015.1094393}.

\bibitem{Jacquin1992}
{\sc A.~E. Jacquin}, {\em {Image coding based on a fractal theory of iterated
  contractive image transformations}}, IEEE Transactions on Image Processing, 1
  (1992), pp.~18--30, \href{http://dx.doi.org/10.1109/83.128028}
  {doi:10.1109/83.128028}.

\bibitem{JY2015}
{\sc K.~H. Jin and J.~C. Ye}, {\em {Annihilating Filter-Based Low-Rank Hankel
  Matrix Approach for Image Inpainting}}, Image Processing, IEEE Transactions
  on, 24 (2015), pp.~3498--3511.

\bibitem{JY2016}
{\sc A.~Joseph and B.~Yu}, {\em {Impact of Regularization on Spectral
  Clustering}}, The Annals of Statistics, 44 (2016), pp.~1765--1791.

\bibitem{kheradmand2014general}
{\sc A.~Kheradmand and P.~Milanfar}, {\em {A General Framework for Regularized,
  Similarity-based Image Restoration}}, Image Processing, IEEE Transactions on,
  23 (2014), pp.~5136--5151.

\bibitem{KMRELS2003}
{\sc K.~Kreutz-Delgado, J.~F. Murray, B.~D. Rao, K.~Engan, T.~S. Lee, and T.~J.
  Sejnowski}, {\em {Dictionary Learning Algorithms for Sparse Representation}},
  Neural computation, 15 (2003), pp.~349--396.

\bibitem{KRB2000}
{\sc K.~Kreutz-Delgado and B.~D. Rao}, {\em {FOCUSS-based Dictionary Learning
  Algorithms}}, in {International Symposium on Optical Science and Technology},
  International Society for Optics and Photonics, 2000, pp.~459--473.

\bibitem{LafonThesis2004}
{\sc S.~S. Lafon}, {\em {{D}iffusion {M}aps and {G}eometric {H}armonics}}, PhD
  thesis, Yale University, 2004.

\bibitem{LeeIzbicki2016SpectralSeries}
{\sc A.~B. Lee and R.~Izbicki}, {\em {A Spectral Series Approach to
  High-Dimensional Nonparametric Regression}}, Electronic Journal of
  Statistics, 10 (2016), pp.~423--463.

\bibitem{LPM2003}
{\sc A.~B. Lee, K.~S. Pedersen, and D.~Mumford}, {\em {The Nonlinear Statistics
  of High-Contrast Patches in Natural Images}}, International Journal of
  Computer Vision, 54 (2003), pp.~83--103.

\bibitem{LeeVerleysen2007}
{\sc J.~A. Lee and M.~Verleysen}, {\em {Nonlinear dimensionality reduction}},
  Springer Science \& Business Media, 2007.

\bibitem{LGBB2005}
{\sc S.~Lesage, R.~Gribonval, F.~Bimbot, and L.~Benaroya}, {\em {Learning
  Unions of Orthonormal Bases with Thresholded Singular Value Decomposition}},
  in {Acoustics, Speech, and Signal Processing, 2005. Proceedings.(ICASSP'05).
  IEEE International Conference on}, vol.~5, IEEE, 2005, pp.~v--293.

\bibitem{LiShiSun2014PIM}
{\sc Z.~Li, Z.~Shi, and J.~Sun}, {\em {Point Integral Method for Solving
  Poisson-Type Equations on Manifolds from Point Clouds with Convergence
  Guarantees}}, arXiv:1409.2623,  (2014).

\bibitem{LittleMaggioniRosasco2016}
{\sc A.~V. Little, M.~Maggioni, and L.~Rosasco}, {\em {Multiscale Geometric
  Methods for Data Sets I: Multiscale SVD, Noise and Curvature}}, Applied and
  Computational Harmonic Analysis,  (2016).

\bibitem{tSNE2008}
{\sc L.~v.~d. Maaten and G.~Hinton}, {\em {Visualizing Data Using t-SNE}},
  Journal of Machine Learning Research, 9 (2008), pp.~2579--2605.

\bibitem{NLSM2009}
{\sc J.~Mairal, F.~Bach, J.~Ponce, G.~Sapiro, and A.~Zisserman}, {\em
  {Non-Local Sparse Models for Image Restoration}}, in {IEEE 12th International
  Conference on Computer Vision }, IEEE, 2009, pp.~2272--2279.

\bibitem{Mallat2008WaveletTour}
{\sc S.~Mallat}, {\em {A Wavelet Tour of Signal Processing: The Sparse Way}},
  Academic press, 2008.

\bibitem{OlshausenField1997}
{\sc B.~A. Olshausen and D.~J. Field}, {\em {Sparse coding with an overcomplete
  basis set: A strategy employed by V1?}}, Vision research, 37 (1997),
  pp.~3311--3325.

\bibitem{LDMM}
{\sc S.~Osher, Z.~Shi, and W.~Zhu}, {\em {Low Dimensional Manifold Model for
  Image Processing}}, tech. report, UCLA, Tech. Rep. CAM report 16-04, 2016.

\bibitem{Pearson1901PCA}
{\sc K.~Pearson}, {\em {On Lines and Planes of Closest Fit to Systems of Point
  in Space}}, Philosophical Magazine, 2 (1901), pp.~559--572.

\bibitem{PereaCarlsson2014}
{\sc J.~A. Perea and G.~Carlsson}, {\em {A Klein-Bottle-Based Dictionary for
  Texture Representation}}, International Journal of Computer Vision, 107
  (2014), pp.~75--97.

\bibitem{Peyre2008}
{\sc G.~Peyr{\'e}}, {\em {Image Processing with Nonlocal Spectral Bases}},
  Multiscale Modeling \& Simulation, 7 (2008), pp.~703--730.

\bibitem{Peyre2011Review}
{\sc G.~Peyr{\'e}}, {\em {A Review of Adaptive Image Representations}}, IEEE
  Journal of Selected Topics in Signal Processing, 5 (2011), pp.~896--911.

\bibitem{QiWu2015}
{\sc X.~Qi}, {\em {Vector Nonlocal Mean Filter}}, PhD thesis, University of
  Toronto, 2015.

\bibitem{RCY2011}
{\sc K.~Rohe, S.~Chatterjee, and B.~Yu}, {\em {Spectral Clustering and the
  High-Dimensional Stochastic Blockmodel}}, The Annals of Statistics,  (2011),
  pp.~1878--1915.

\bibitem{lorenzoIS16}
{\sc L.~Rosasco}, {\em {Data Representation: From Signal Processing to Machine
  Learning}}.
\newblock Mini-Tutorial, SIAM Conference on Image Science, Albuquerque, New
  Mexico, USA, 2016,
  \url{http://meetings.siam.org/sess/dsp_programsess.cfm?SESSIONCODE=22699&_ga=1.165936247.87958178.1459945433}.

\bibitem{LLE2000}
{\sc S.~T. Roweis and L.~K. Saul}, {\em {{N}onlinear {D}imensionality
  {R}eduction by {L}ocally {L}inear {E}mbedding}}, {S}cience, 290 (2000),
  pp.~2323--2326, \href{http://dx.doi.org/10.1126/science.290.5500.2323}
  {doi:10.1126/science.290.5500.2323}.

\bibitem{ROF1992}
{\sc L.~I. Rudin, S.~Osher, and E.~Fatemi}, {\em {Nonlinear Total Variation
  Based Noise Removal Algorithms}}, Physica D: Nonlinear Phenomena, 60 (1992),
  pp.~259--268.

\bibitem{SWY2015}
{\sc G.~Schiebinger, M.~J. Wainwright, and B.~Yu}, {\em {The Geometry of
  Kernelized Spectral Clustering}}, The Annals of Statistics, 43 (2015),
  pp.~819--846.

\bibitem{ScholkopfSmola2002}
{\sc B.~Sch{\"o}lkopf and A.~J. Smola}, {\em {Learning with Kernels: Support
  Vector Machines, Regularization, Optimization, and Beyond}}, MIT press, 2002.

\bibitem{MDS1962}
{\sc R.~N. Shepard}, {\em {The Analysis of Proximities: Multidimensional
  Scaling with an Unknown Distance Function. I.}}, Psychometrika, 27 (1962),
  pp.~125--140.

\bibitem{ShiSun2013PIMDirichelt}
{\sc Z.~Shi and J.~Sun}, {\em {Convergence of the Point Integral Method for the
  Poisson Equation on Manifolds II: the Dirichlet Boundary}}, arXiv:1312.4424,
  (2013).

\bibitem{ShiSun2014PIMNeumann}
{\sc Z.~Shi and J.~Sun}, {\em {Convergence of the Point Integral Method for the
  Poisson Equation on Manifolds I: the Neumann Boundary}}, arXiv:1403.2141,
  (2014).

\bibitem{SSN2009}
{\sc A.~Singer, Y.~Shkolnisky, and B.~Nadler}, {\em {Diffusion Interpretation
  of Nonlocal Neighborhood Filters for Signal Denoising}}, SIAM Journal on
  Imaging Sciences, 2 (2009), pp.~118--139.

\bibitem{SingerWu2013}
{\sc A.~Singer and H.-t. Wu}, {\em {{S}pectral {C}onvergence of the
  {C}onnection {L}aplacian from {R}andom {S}amples}}, ar{X}iv preprint
  ar{X}iv:1306.1587,  (2013).

\bibitem{JPEG20002001}
{\sc A.~Skodras, C.~Christopoulos, and T.~Ebrahimi}, {\em {The JPEG 2000 Still
  Image Compression Standard}}, IEEE Signal Processing Magazine, 18 (2001),
  pp.~36--58.

\bibitem{SPCB2007}
{\sc S.~Smale, T.~Poggio, A.~Caponnetto, and J.~Bouvrie}, {\em {Derived
  Distance: Towards a Mathematical Theory of Visual Cortex}}, Artificial
  Intelligence,  (2007).

\bibitem{SMC2008}
{\sc A.~D. Szlam, M.~Maggioni, and R.~R. Coifman}, {\em {Regularization on
  Graphs with Function-Adapted Diffusion Processes}}, The Journal of Machine
  Learning Research, 9 (2008), pp.~1711--1739.

\bibitem{talebi2014global}
{\sc H.~Talebi and P.~Milanfar}, {\em {Global Image Denoising}}, Image
  Processing, IEEE Transactions on, 23 (2014), pp.~755--768.

\bibitem{ISOMAP2000}
{\sc J.~B. Tenenbaum, V.~d. Silva, and J.~C. Langford}, {\em {{A} {G}lobal
  {G}eometric {F}ramework for {N}onlinear {D}imensionality {R}eduction}},
  {S}cience, 290 (2000), pp.~2319--2323,
  \href{http://dx.doi.org/10.1126/science.290.5500.2319}
  {doi:10.1126/science.290.5500.2319}.

\bibitem{MDS1952}
{\sc W.~S. Torgerson}, {\em {Multidimensional Scaling: I. Theory and Method}},
  Psychometrika, 17 (1952), pp.~401--419.

\bibitem{JPEG1992}
{\sc G.~K. Wallace}, {\em {The JPEG Still Picture Compression Standard}}, IEEE
  Transactions on Consumer Electronics, 38 (1992), pp.~xviii--xxxiv.

\end{thebibliography}

\newpage

\setcounter{section}{0}
\renewcommand{\thesection}{SM\arabic{section}}
\renewcommand{\appendixname}{}

\begin{center}
\large \textbf{SUPPLEMENTARY MATERIALS}
\end{center}
\vspace{0.2in}

\section{Comparing LDMM, reweighted LDMM, and ALOHA for inpainting}
\label{sec:comp-ldmm-rewe}

In the following tables, we compare the inpainting performance of LDMM and reweighted LDMM with ALOHA \cite{JY2015} on $9$ test images of size $256\times 256$. For ALOHA, we use hyperparameters provided by the authors\footnote{We use the code from one of the authors' website \texttt{http://bispl.weebly.com/aloha-inpainting.html}.}; for LDMM and reweighted LDMM, we use the same parameters as described in Section~\ref{sec:numerical}. For columns of LDMM and reweighted LDMM, the row titled \textbf{symm. GL} indicates whether or not the graph diffusion Laplacian used for generating nonlocal basis in convolution framelets is symmetrized. The difference between symmetrized and un-symmetrized graph Laplacians are as follows: Recall from Section~\ref{sec:local-nonlocal-approximation} of the main text that $L=I-D^{-1/2}WD^{-1/2}$ where
$$W_{ij}=\exp \left( -\left\| F_i-F_j \right\|^2/\epsilon \right).$$
In the original implementation of LDMM, the value $\epsilon$ is chosen adaptively with respect to the rows of $W$ by setting
$$W_{ij}=\exp \left( -\left\| F_i-F_j \right\|^2/\epsilon_i \right)$$
where $\epsilon_i$ is the distance between $F_i$ and its $20$th nearest neighbor; the resulting weight matrix $W$ is generally non-symmetric. In all numerical experiments we also compare the results between this non-symmetric construction with its symmetrized counterpart setting
$$W_{ij}=\exp \left( -\left\| F_i-F_j \right\|^2/\sqrt{\epsilon_i\epsilon_j} \right)$$
where $\epsilon_i,\epsilon_j$ are the distances between $F_i,F_j$ and their $20$th nearest neighbors, respectively. This additional symmetrization step is well-known for practitioners of diffusion maps \cite{CoifmanLafon2006,CoifmanLafonLMNWZ2005PNAS1,CoifmanLafonLMNWZ2005PNAS2} and spectral clustering \cite{RCY2011,SWY2015,JY2016} algorithms. We include results using both symmetrized and un-symmetrized graph diffusion Laplacians since numerical experiments illustrates that in some circumstances the symmetrized Laplacian leads to improvement both visually and in PSNR.

\subsection{Final PSNR in LDMM and rw-LDMM}
\label{sec:final-psnr-ldmm}

In \cref{table:psnr-2-percent} through \cref{table:psnr-50-percent}, the PSNR in the columns of LDMM and reweighted LDMM are computed from the final image obtained in the $100$th iteration, as opposed to the maximum PSNR among the first $100$ iterations in Section~\ref{sec:maximum-psnr-ldmm}.

\begin{table}[htbp]
\centering
\caption{PSNR of images reconstructed from $\mathbf{2\%}$ subsamples using various inpainting algorithms. The highest PSNR for each image are highlighted with boldface. In this table the PSNR for LDMM and rw-LDMM are computed for images obtained after the $100$th iteration.}
\label{table:psnr-2-percent}
\begin{tabular}{|l|c|c|c|c|c|c|c|}
\hline
\textbf{Method}		&	ALOHA	&	\multicolumn{2}{c|}{LDMM}	&	\multicolumn{2}{c|}{rw-LDMM SVD}	&	\multicolumn{2}{c|}{rw-LDMM DCT}	\\	\hline
\textbf{symm. GL}	&	NA	&	No	&	Yes	&	No	&	Yes	&	No	&	Yes	 \\ \hline
{\sc Barbara}		&	18.68	&	18.91	&	19.43	&	20.05	&	16.10	&	\textbf{20.12} &	16.06	 \\ \hline
{\sc Boat}		&	18.83	&	19.09	&	19.35	&	\textbf{19.78}	&	16.76	&	19.76	&	16.78	 \\ \hline
{\sc Checkerboard}	&	\textbf{7.81}	&	6.24	&	5.44	&	6.34	&	6.75	&	6.37	&	6.45	 \\ \hline
{\sc Couple}		&	18.37	&	18.61	&	18.68	&	19.74	&	17.01	&	\textbf{19.77}	&	16.75	 \\ \hline
{\sc Fingerprint}	&	\textbf{15.76}	&	14.82	&	13.68	&	15.20	&	14.04	&	14.93	&	14.10	 \\ \hline
{\sc Hill}		&	22.71	&	23.31	&	23.85	&	25.14	&	18.03	&	\textbf{25.16}	&	17.89	 \\ \hline
{\sc House}		&	20.62	&	21.12	&	\textbf{21.80}	&	21.66	&	15.47	&	21.65	&	15.46	 \\ \hline
{\sc Man}		&	18.62	&	19.07	&	19.82	&	20.48	&	18.91	&	\textbf{20.55}	&	19.00	 \\ \hline
{\sc Swirl}		&	14.55	&	14.83	&	13.72	&	15.05	&	14.27	&	\textbf{15.14}	&	14.52	 \\ \hline
\end{tabular}
\end{table}

\begin{table}[htbp]
\centering
\caption{PSNR of images reconstructed from $\mathbf{5\%}$ subsamples using various inpainting algorithms. The highest PSNR for each image are highlighted with boldface. In this table the PSNR for LDMM and rw-LDMM are computed for images obtained after the $100$th iteration.}
\label{table:psnr-5-percent}
\begin{tabular}{|l|c|c|c|c|c|c|c|}
\hline
\textbf{Method}		&	ALOHA	&	\multicolumn{2}{c|}{LDMM}	&	\multicolumn{2}{c|}{rw-LDMM SVD}	&	\multicolumn{2}{c|}{rw-LDMM DCT}	\\	\hline
\textbf{symm. GL}	&	NA	&	No	&	Yes	&	No	&	Yes	&	No	&	Yes	 \\ \hline
{\sc Barbara}		&	\textbf{22.86}	&	21.70	&	21.85	&	22.01	&	21.47	&	21.97	&	21.40	 \\ \hline
{\sc Boat}		&	20.90	&	21.20	&	21.39	&	\textbf{21.81}	&	21.23	&	21.80	&	21.22	 \\ \hline
{\sc Checkerboard}	&	\textbf{11.57}	&	5.71	&	7.59	&	8.23	&	8.78	&	8.12	&	8.78	 \\ \hline
{\sc Couple}		&	20.99	&	21.59	&	21.44	&	\textbf{22.10}	&	20.32	&	22.08	&	20.36	 \\ \hline
{\sc Fingerprint}	&	\textbf{20.59}	&	16.50	&	19.72	&	20.57	&	20.58	&	20.23	&	20.45	 \\ \hline
{\sc Hill}		&	26.34	&	27.05	&	26.70	&	\textbf{27.57}	&	26.63	&	27.52	&	26.04	 \\ \hline
{\sc House}		&	24.50	&	25.11	&	25.07	&	\textbf{26.03}	&	21.85	&	\textbf{26.03}	&	22.07	 \\ \hline
{\sc Man}		&	21.07	&	22.07	&	21.66	&	\textbf{22.48}	&	22.03	&	\textbf{22.48}	&	21.98	 \\ \hline
{\sc Swirl}		&	17.78	&	15.95	&	16.07	&	\textbf{16.52}	&	16.42	&	16.51	&	16.52	 \\ \hline
\end{tabular}
\end{table}

\begin{table}[htbp]
\centering
\caption{PSNR of images reconstructed from $\mathbf{10\%}$ subsamples using various inpainting algorithms. The highest PSNR for each image are highlighted with boldface. In this table the PSNR for LDMM and rw-LDMM are computed for images obtained after the $100$th iteration.}
\label{table:psnr-10-percent}
\begin{tabular}{|l|c|c|c|c|c|c|c|}
\hline
\textbf{Method}		&	ALOHA	&	\multicolumn{2}{c|}{LDMM}	&	\multicolumn{2}{c|}{rw-LDMM SVD}	&	\multicolumn{2}{c|}{rw-LDMM DCT}	\\	\hline
\textbf{symm. GL}	&	NA	&	No	&	Yes	&	No	&	Yes	&	No	&	Yes	 \\ \hline
{\sc Barbara}		&	\textbf{26.25}	&	24.75	&	24.78	&	25.61	&	25.34	&	25.71	&	25.59	 \\ \hline
{\sc Boat}		&	\textbf{23.75}	&	23.21	&	23.08	&	23.66	&	23.33	&	23.63	&	23.31	 \\ \hline
{\sc Checkerboard}	&	13.67	&	12.18	&	12.37	&	13.74	&	14.38	&	13.75	&	\textbf{14.29}	 \\ \hline
{\sc Couple}		&	23.40	&	23.78	&	23.04	&	24.24	&	23.62	&	\textbf{24.27}	&	23.65	 \\ \hline
{\sc Fingerprint}	&	\textbf{23.26}	&	21.93	&	21.79	&	22.60	&	22.30	&	22.52	&	22.20	 \\ \hline
{\sc Hill}		&	28.62	&	28.71	&	28.26	&	29.06	&	28.54	&	\textbf{29.07}	&	28.37	 \\ \hline
{\sc House}		&	28.90	&	29.30	&	28.19	&	29.83	&	28.83	&	\textbf{29.93}	&	28.93	 \\ \hline
{\sc Man}		&	23.59	&	24.21	&	23.84	&	24.61	&	23.98	&	\textbf{24.62}	&	23.87	 \\ \hline
{\sc Swirl}		&	21.24	&	19.02	&	19.31	&	\textbf{21.60}	&	20.83	&	21.00	&	21.07	 \\ \hline
\end{tabular}
\end{table}

\begin{table}[htbp]
\centering
\caption{PSNR of images reconstructed from $\mathbf{15\%}$ subsamples using various inpainting algorithms. The highest PSNR for each image are highlighted with boldface. In this table the PSNR for LDMM and rw-LDMM are computed for images obtained after the $100$th iteration.}
\label{table:psnr-15-percent}
\begin{tabular}{|l|c|c|c|c|c|c|c|}
\hline
\textbf{Method}		&	ALOHA	&	\multicolumn{2}{c|}{LDMM}	&	\multicolumn{2}{c|}{rw-LDMM SVD}	&	\multicolumn{2}{c|}{rw-LDMM DCT}	\\	\hline
\textbf{symm. GL}	&	NA	&	No	&	Yes	&	No	&	Yes	&	No	&	Yes	 \\ \hline
{\sc Barbara}		&	\textbf{28.41}	&	26.37	&	26.38	&	26.88	&	26.46	&	26.88	&	26.31	 \\ \hline
{\sc Boat}		&	\textbf{25.11}	&	24.63	&	24.10	&	24.83	&	24.57	&	24.86	&	24.62	 \\ \hline
{\sc Checkerboard}	&	14.88	&	16.03	&	\textbf{16.74}	&	16.23	&	16.64	&	16.31	&	16.39	 \\ \hline
{\sc Couple}		&	25.12	&	25.51	&	24.49	&	\textbf{25.65}	&	25.25	&	25.59	&	25.22	 \\ \hline
{\sc Fingerprint}	&	\textbf{24.87}	&	23.27	&	23.14	&	23.57	&	23.35	&	23.52	&	23.69	 \\ \hline
{\sc Hill}		&	30.09	&	29.98	&	29.06	&	30.14	&	29.26	&	\textbf{30.15}	&	29.24	 \\ \hline
{\sc House}		&	31.07	&	31.29	&	30.53	&	31.38	&	30.41	&	\textbf{31.39}	&	30.50	 \\ \hline
{\sc Man}		&	24.75	&	25.72	&	24.77	&	25.84	&	25.03	&	\textbf{25.92}	&	25.18	 \\ \hline
{\sc Swirl}		&	23.62	&	23.99	&	23.73	&	24.56	&	\textbf{25.12}	&	24.85	&	24.99	 \\ \hline
\end{tabular}
\end{table}

\begin{table}[htbp]
\centering
\caption{PSNR of images reconstructed from $\mathbf{20\%}$ subsamples using various inpainting algorithms. The highest PSNR for each image are highlighted with boldface. In this table the PSNR for LDMM and rw-LDMM are computed for images obtained after the $100$th iteration.}
\label{table:psnr-20-percent}
\begin{tabular}{|l|c|c|c|c|c|c|c|}
\hline
\textbf{Method}		&	ALOHA	&	\multicolumn{2}{c|}{LDMM}	&	\multicolumn{2}{c|}{rw-LDMM SVD}	&	\multicolumn{2}{c|}{rw-LDMM DCT}	\\	\hline
\textbf{symm. GL}	&	NA	&	No	&	Yes	&	No	&	Yes	&	No	&	Yes	 \\ \hline
{\sc Barbara}		&	\textbf{30.37}	&	28.75	&	27.89	&	29.32	&	28.44	&	29.36	&	28.27	 \\ \hline
{\sc Boat}		&	\textbf{26.71}	&	25.94	&	25.57	&	26.33	&	25.85	&	26.32	&	25.81	 \\ \hline
{\sc Checkerboard}	&	16.40	&	18.82	&	18.00	&	\textbf{19.03}	&	18.29	&	18.97	&	18.29	 \\ \hline
{\sc Couple}		&	26.83	&	26.90	&	25.92	&	27.04	&	26.05	&	\textbf{27.09}	&	26.23	 \\ \hline
{\sc Fingerprint}	&	\textbf{26.98}	&	24.76	&	24.23	&	24.87	&	24.82	&	25.00	&	24.75	 \\ \hline
{\sc Hill}		&	\textbf{31.50}	&	31.12	&	30.06	&	31.09	&	29.92	&	31.18	&	30.36	 \\ \hline
{\sc House}		&	\textbf{33.08}	&	32.34	&	31.13	&	32.99	&	30.90	&	33.02	&	30.33	 \\ \hline
{\sc Man}		&	26.22	&	27.03	&	26.16	&	\textbf{27.22}	&	26.17	&	\textbf{27.22}	&	26.14	 \\ \hline
{\sc Swirl}		&	25.34	&	27.35	&	26.60	&	\textbf{28.52}	&	27.62	&	28.40	&	27.65	 \\ \hline
\end{tabular}
\end{table}

\begin{table}[htbp]
\centering
\caption{PSNR of images reconstructed from $\mathbf{50\%}$ subsamples using various inpainting algorithms. The highest PSNR for each image are highlighted with boldface. In this table the PSNR for LDMM and rw-LDMM are computed for images obtained after the $100$th iteration.}
\label{table:psnr-50-percent}
\begin{tabular}{|l|c|c|c|c|c|c|c|}
\hline
\textbf{Method}		&	ALOHA	&	\multicolumn{2}{c|}{LDMM}	&	\multicolumn{2}{c|}{rw-LDMM SVD}	&	\multicolumn{2}{c|}{rw-LDMM DCT}	\\	\hline
\textbf{symm. GL}	&	NA	&	No	&	Yes	&	No	&	Yes	&	No	&	Yes	 \\ \hline
{\sc Barbara}		&	\textbf{38.13}	&	36.10	&	28.62	&	36.14	&	27.01	&	36.20	&	29.16	 \\ \hline
{\sc Boat}		&	\textbf{33.17}	&	32.78	&	28.59	&	32.77	&	28.22	&	32.75	&	28.93	 \\ \hline
{\sc Checkerboard}	&	22.56	&	\textbf{28.11}	&	27.41	&	27.45	&	26.80	&	27.46	&	26.80	 \\ \hline
{\sc Couple}		&	32.12	&	32.29	&	28.41	&	32.19	&	28.51	&	\textbf{32.32}	&	28.84	 \\ \hline
{\sc Fingerprint}	&	\textbf{34.71}	&	27.84	&	27.10	&	27.94	&	27.04	&	27.75	&	26.90	 \\ \hline
{\sc Hill}		&	\textbf{35.70}	&	35.06	&	32.95	&	35.14	&	32.94	&	35.18	&	32.78	 \\ \hline
{\sc House}		&	\textbf{39.94}	&	38.69	&	35.96	&	38.67	&	34.57	&	38.72	&	34.74	 \\ \hline
{\sc Man}		&	30.98	&	32.30	&	29.61	&	\textbf{32.34}	&	29.87	&	32.31	&	29.51	 \\ \hline
{\sc Swirl}		&	33.77	&	\textbf{35.39}	&	32.75	&	35.35	&	32.85	&	\textbf{35.39}	&	32.71	 \\ \hline
\end{tabular}
\end{table}

\newpage

\subsection{Maximum PSNR in LDMM and rw-LDMM}
\label{sec:maximum-psnr-ldmm}

In \cref{table:max-psnr-2-percent} through \cref{table:max-psnr-50-percent}, the PSNR in the columns of LDMM and reweighted LDMM are the maximum PSNR that occurred before the $100$th iteration. Since in practice the ground truth image is not given, no criterion is readily available for us to terminate the algorithm before reaching the $100$th iteration, nor is there a rule for picking one reconstructed image from all reconstructions resulted from the first $100$ iterations; the comparisons presented in this subsection, rather than performance evaluations of practical inpainting algorithms as in Section~\ref{sec:final-psnr-ldmm}, are only proof-of-concepts to illustrate that important information are captured in LDMM and reweighted LDMM that could potentially be used to further improve the performance of these inpainting algorithms.

\begin{table}[htbp]
\centering
\caption{PSNR of images reconstructed from $\mathbf{2\%}$ subsamples using various inpainting algorithms. The highest PSNR for each image are highlighted with boldface. In this table the PSNR for LDMM and rw-LDMM are the maximum PSNR within the first $100$th iteration.}
\label{table:max-psnr-2-percent}
\begin{tabular}{|l|c|c|c|c|c|c|c|}
\hline
\textbf{Method}		&	ALOHA	&	\multicolumn{2}{c|}{LDMM}	&	\multicolumn{2}{c|}{rw-LDMM SVD}	&	\multicolumn{2}{c|}{rw-LDMM DCT}	\\	\hline
\textbf{symm. GL}	&	NA	&	No	&	Yes	&	No	&	Yes	&	No	&	Yes	 \\ \hline
{\sc Barbara}		&	18.68	&	19.50	&	19.43	&	20.05	&	16.10	&	\textbf{20.12}	&	16.06	 \\ \hline
{\sc Boat}		&	18.83	&	19.09	&	19.43	&	\textbf{19.79}	&	16.76	&	\textbf{19.79}	&	16.78	 \\ \hline
{\sc Checkerboard}	&	7.81	&	7.69	&	7.69	&	\textbf{7.84}	&	7.79	&	\textbf{7.84}	&	7.79	 \\ \hline
{\sc Couple}		&	18.37	&	19.16	&	18.68	&	\textbf{19.78}	&	17.01	&	\textbf{19.78}	&	16.75	 \\ \hline
{\sc Fingerprint}	&	\textbf{15.76}	&	14.87	&	14.76	&	15.20	&	14.04	&	15.03	&	14.10	 \\ \hline
{\sc Hill}		&	22.71	&	23.36	&	23.85	&	25.14	&	18.03	&	\textbf{25.16}	&	17.89	 \\ \hline
{\sc House}		&	20.62	&	21.12	&	\textbf{21.80}	&	21.66	&	15.47	&	21.66	&	15.46	 \\ \hline
{\sc Man}		&	18.62	&	19.69	&	19.85	&	20.60	&	18.91	&	\textbf{20.61}	&	19.00	 \\ \hline
{\sc Swirl}		&	14.55	&	14.91	&	13.72	&	15.08	&	14.27	&	\textbf{15.14}	&	14.52	 \\ \hline
\end{tabular}
\end{table}

\begin{table}[htbp]
\centering
\caption{PSNR of images reconstructed from $\mathbf{5\%}$ subsamples using various inpainting algorithms. The highest PSNR for each image are highlighted with boldface. In this table the PSNR for LDMM and rw-LDMM are the maximum PSNR within the first $100$th iteration.}
\label{table:max-psnr-5-percent}
\begin{tabular}{|l|c|c|c|c|c|c|c|}
\hline
\textbf{Method}		&	ALOHA	&	\multicolumn{2}{c|}{LDMM}	&	\multicolumn{2}{c|}{rw-LDMM SVD}	&	\multicolumn{2}{c|}{rw-LDMM DCT}	\\	\hline
\textbf{symm. GL}	&	NA	&	No	&	Yes	&	No	&	Yes	&	No	&	Yes	 \\ \hline
{\sc Barbara}		&	\textbf{22.86}	&	21.70	&	22.14	&	22.04	&	21.47	&	22.05	&	21.44	 \\ \hline
{\sc Boat}		&	20.90	&	21.40	&	21.62	&	\textbf{21.91}	&	21.25	&	21.87	&	21.30	 \\ \hline
{\sc Checkerboard}	&	\textbf{11.57}	&	7.88	&	7.88	&	8.88	&	9.11	&	8.87	&	9.12	 \\ \hline
{\sc Couple}		&	20.99	&	21.75	&	21.71	&	22.18	&	20.32	&	\textbf{22.19}	&	20.36	 \\ \hline
{\sc Fingerprint}	&	20.59	&	17.27	&	20.16	&	20.57	&	\textbf{20.92}	&	20.36	&	20.88	 \\ \hline
{\sc Hill}		&	26.34	&	27.08	&	26.93	&	\textbf{27.71}	&	26.74	&	27.61	&	26.15	 \\ \hline
{\sc House}		&	24.50	&	25.11	&	25.55	&	\textbf{26.09}	&	21.85	&	26.03	&	22.07	 \\ \hline
{\sc Man}		&	21.07	&	22.18	&	21.90	&	22.58	&	22.21	&	\textbf{22.63}	&	22.24	 \\ \hline
{\sc Swirl}		&	\textbf{17.78}	&	16.32	&	16.07	&	16.65	&	16.60	&	16.58	&	16.62	 \\ \hline
\end{tabular}
\end{table}

\begin{table}[htbp]
\centering
\caption{PSNR of images reconstructed from $\mathbf{10\%}$ subsamples using various inpainting algorithms. The highest PSNR for each image are highlighted with boldface. In this table the PSNR for LDMM and rw-LDMM are the maximum PSNR within the first $100$th iteration.}
\label{table:max-psnr-10-percent}
\begin{tabular}{|l|c|c|c|c|c|c|c|}
\hline
\textbf{Method}		&	ALOHA	&	\multicolumn{2}{c|}{LDMM}	&	\multicolumn{2}{c|}{rw-LDMM SVD}	&	\multicolumn{2}{c|}{rw-LDMM DCT}	\\	\hline
\textbf{symm. GL}	&	NA	&	No	&	Yes	&	No	&	Yes	&	No	&	Yes	 \\ \hline
{\sc Barbara}		&	\textbf{26.25}	&	24.75	&	25.04	&	25.61	&	25.77	&	25.71	&	25.83	 \\ \hline
{\sc Boat}		&	\textbf{23.75}	&	23.31	&	23.40	&	23.66	&	23.49	&	23.65	&	23.55	 \\ \hline
{\sc Checkerboard}	&	13.67	&	12.61	&	12.41	&	14.07	&	\textbf{14.59}	&	14.08	&	14.49	 \\ \hline
{\sc Couple}		&	23.40	&	24.05	&	24.06	&	24.26	&	23.65	&	\textbf{24.29}	&	23.65	 \\ \hline
{\sc Fingerprint}	&	\textbf{23.26}	&	22.09	&	22.31	&	22.91	&	23.10	&	22.77	&	23.07	 \\ \hline
{\sc Hill}		&	28.62	&	29.01	&	28.87	&	29.39	&	28.85	&	\textbf{29.40}	&	28.93	 \\ \hline
{\sc House}		&	28.90	&	29.63	&	29.10	&	30.09	&	29.46	&	\textbf{30.10}	&	29.45	 \\ \hline
{\sc Man}		&	23.59	&	24.41	&	24.40	&	\textbf{24.66}	&	24.71	&	24.65	&	24.70	 \\ \hline
{\sc Swirl}		&	21.24	&	20.24	&	19.81	&	\textbf{21.60}	&	21.17	&	21.00	&	21.10	 \\ \hline
\end{tabular}
\end{table}

\begin{table}[htbp]
\centering
\caption{PSNR of images reconstructed from $\mathbf{15\%}$ subsamples using various inpainting algorithms. The highest PSNR for each image are highlighted with boldface. In this table the PSNR for LDMM and rw-LDMM are the maximum PSNR within the first $100$th iteration.}
\label{table:max-psnr-15-percent}
\begin{tabular}{|l|c|c|c|c|c|c|c|}
\hline
\textbf{Method}		&	ALOHA	&	\multicolumn{2}{c|}{LDMM}	&	\multicolumn{2}{c|}{rw-LDMM SVD}	&	\multicolumn{2}{c|}{rw-LDMM DCT}	\\	\hline
\textbf{symm. GL}	&	NA	&	No	&	Yes	&	No	&	Yes	&	No	&	Yes	 \\ \hline
{\sc Barbara}		&	\textbf{28.41}	&	26.40	&	27.21	&	26.88	&	27.25	&	26.88	&	27.27	 \\ \hline
{\sc Boat}		&	\textbf{25.11}	&	24.76	&	24.76	&	24.97	&	25.02	&	24.96	&	24.97	 \\ \hline
{\sc Checkerboard}	&	14.88	&	16.37	&	16.92	&	16.96	&	17.16	&	17.00	&	\textbf{17.22}	 \\ \hline
{\sc Couple}		&	25.12	&	25.57	&	25.68	&	25.97	&	25.58	&	\textbf{25.99}	&	25.49	 \\ \hline
{\sc Fingerprint}	&	\textbf{24.87}	&	23.98	&	24.24	&	24.59	&	24.60	&	24.61	&	24.58	 \\ \hline
{\sc Hill}		&	30.09	&	30.34	&	30.16	&	\textbf{30.67}	&	29.79	&	\textbf{30.67}	&	30.02	 \\ \hline
{\sc House}		&	31.07	&	31.33	&	31.67	&	31.61	&	31.85	&	31.61	&	\textbf{31.87}	 \\ \hline
{\sc Man}		&	24.75	&	25.89	&	25.70	&	\textbf{26.11}	&	26.03	&	26.10	&	26.05	 \\ \hline
{\sc Swirl}		&	23.62	&	24.16	&	24.37	&	24.63	&	25.31	&	24.95	&	\textbf{25.34}	 \\ \hline
\end{tabular}
\end{table}

\begin{table}[htbp]
\centering
\caption{PSNR of images reconstructed from $\mathbf{20\%}$ subsamples using various inpainting algorithms. The highest PSNR for each image are highlighted with boldface. In this table the PSNR for LDMM and rw-LDMM are the maximum PSNR within the first $100$th iteration.}
\label{table:max-psnr-20-percent}
\begin{tabular}{|l|c|c|c|c|c|c|c|}
\hline
\textbf{Method}		&	ALOHA	&	\multicolumn{2}{c|}{LDMM}	&	\multicolumn{2}{c|}{rw-LDMM SVD}	&	\multicolumn{2}{c|}{rw-LDMM DCT}	\\	\hline
\textbf{symm. GL}	&	NA	&	No	&	Yes	&	No	&	Yes	&	No	&	Yes	 \\ \hline
{\sc Barbara}		&	\textbf{30.37}	&	28.93	&	29.60	&	29.45	&	29.84	&	29.40	&	29.78	 \\ \hline
{\sc Boat}		&	26.71	&	26.17	&	26.56	&	26.49	&	26.76	&	26.47	&	\textbf{26.79}	 \\ \hline
{\sc Checkerboard}	&	16.40	&	19.94	&	20.49	&	20.58	&	20.73	&	20.55	&	\textbf{20.76}	 \\ \hline
{\sc Couple}		&	26.83	&	27.24	&	27.26	&	\textbf{27.60}	&	26.76	&	\textbf{27.60}	&	26.76	 \\ \hline
{\sc Fingerprint}	&	\textbf{26.98}	&	25.82	&	25.83	&	26.35	&	26.33	&	26.37	&	26.33	 \\ \hline
{\sc Hill}		&	31.50	&	32.03	&	31.84	&	\textbf{32.28}	&	31.45	&	32.25	&	31.58	 \\ \hline
{\sc House}		&	33.08	&	33.20	&	33.41	&	33.73	&	33.73	&	33.69	&	\textbf{33.80}	 \\ \hline
{\sc Man}		&	26.22	&	27.35	&	27.06	&	\textbf{27.56}	&	27.28	&	\textbf{27.56}	&	27.27	 \\ \hline
{\sc Swirl}		&	25.34	&	27.44	&	27.18	&	\textbf{28.66}	&	28.31	&	28.46	&	28.24	 \\ \hline
\end{tabular}
\end{table}

\begin{table}[htbp]
\centering
\caption{PSNR of images reconstructed from $\mathbf{50\%}$ subsamples using various inpainting algorithms. The highest PSNR for each image are highlighted with boldface. In this table the PSNR for LDMM and rw-LDMM are the maximum PSNR within the first $100$th iteration.}
\label{table:max-psnr-50-percent}
\begin{tabular}{|l|c|c|c|c|c|c|c|}
\hline
\textbf{Method}		&	ALOHA	&	\multicolumn{2}{c|}{LDMM}	&	\multicolumn{2}{c|}{rw-LDMM SVD}	&	\multicolumn{2}{c|}{rw-LDMM DCT}	\\	\hline
\textbf{symm. GL}	&	NA	&	No	&	Yes	&	No	&	Yes	&	No	&	Yes	 \\ \hline
{\sc Barbara}		&	\textbf{38.13}	&	37.28	&	37.37	&	37.38	&	37.45	&	37.37	&	37.47	 \\ \hline
{\sc Boat}		&	\textbf{33.17}	&	33.06	&	33.04	&	33.10	&	33.07	&	33.09	&	33.07	 \\ \hline
{\sc Checkerboard}	&	22.56	&	29.11	&	\textbf{29.14}	&	29.13	&	29.13	&	29.11	&	\textbf{29.14}	 \\ \hline
{\sc Couple}		&	32.12	&	32.95	&	32.95	&	\textbf{33.00}	&	32.99	&	32.98	&	32.99	 \\ \hline
{\sc Fingerprint}	&	\textbf{34.71}	&	33.00	&	32.99	&	33.04	&	33.02	&	33.04	&	33.02	 \\ \hline
{\sc Hill}		&	35.70	&	36.61	&	36.59	&	\textbf{36.62}	&	36.61	&	36.61	&	36.61	 \\ \hline
{\sc House}		&	39.94	&	39.81	&	40.17	&	39.88	&	40.23	&	39.86	&	\textbf{40.26}	 \\ \hline
{\sc Man}		&	30.98	&	32.74	&	32.71	&	\textbf{32.77}	&	32.71	&	32.75	&	32.72	 \\ \hline
{\sc Swirl}		&	33.77	&	36.36	&	36.32	&	36.41	&	\textbf{36.42}	&	36.38	&	36.40	 \\ \hline
\end{tabular}
\end{table}

\newpage

\subsection{Comparing Inpainting Results of LDMM, rw-LDMM, and ALOHA}
\label{sec:figures-comparisons}

In this section, we compare the final reconstructed images (after the $100$th iteration) of LDMM, rw-LDMM, and ALOHA. Since all PSNR results are already summarized in Section~\ref{sec:comp-ldmm-rewe}, we only present a subset of the reconstruction results for images {\sc Couple} (\cref{fig:couple_10percent_collage}, \cref{fig:couple_15percent_collage}, \cref{fig:couple_20percent_collage}, \cref{fig:couple_50percent_collage}), {\sc Hill} (\cref{fig:hill_10percent_collage}, \cref{fig:hill_15percent_collage}, \cref{fig:hill_20percent_collage}, \cref{fig:hill_50percent_collage}), and {\sc Swirl} (\cref{fig:swirl_10percent_collage}, \cref{fig:swirl_15percent_collage}, \cref{fig:swirl_20percent_collage}, \cref{fig:swirl_50percent_collage}) under subsample rate $10\%$, $15\%$, $20\%$, and $50\%$, to save space.

\begin{figure}[htbp]
\includegraphics[width = \textwidth]{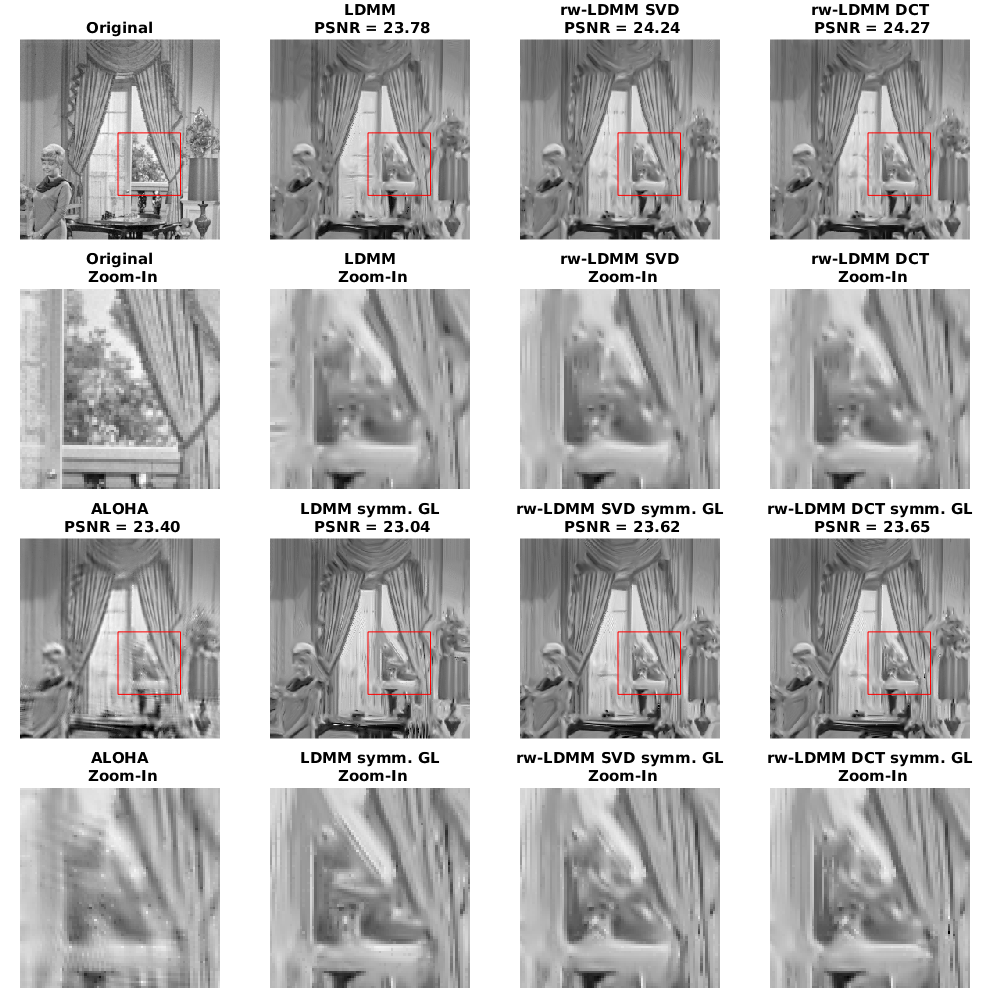}
\caption{Comparing reconstructed $256\times 256$ {\sc couple} images from $10\%$ subsamples using LDMM, rw-LDMM, and ALOHA.}
\label{fig:couple_10percent_collage}
\end{figure}

\begin{figure}[htbp]
\includegraphics[width = \textwidth]{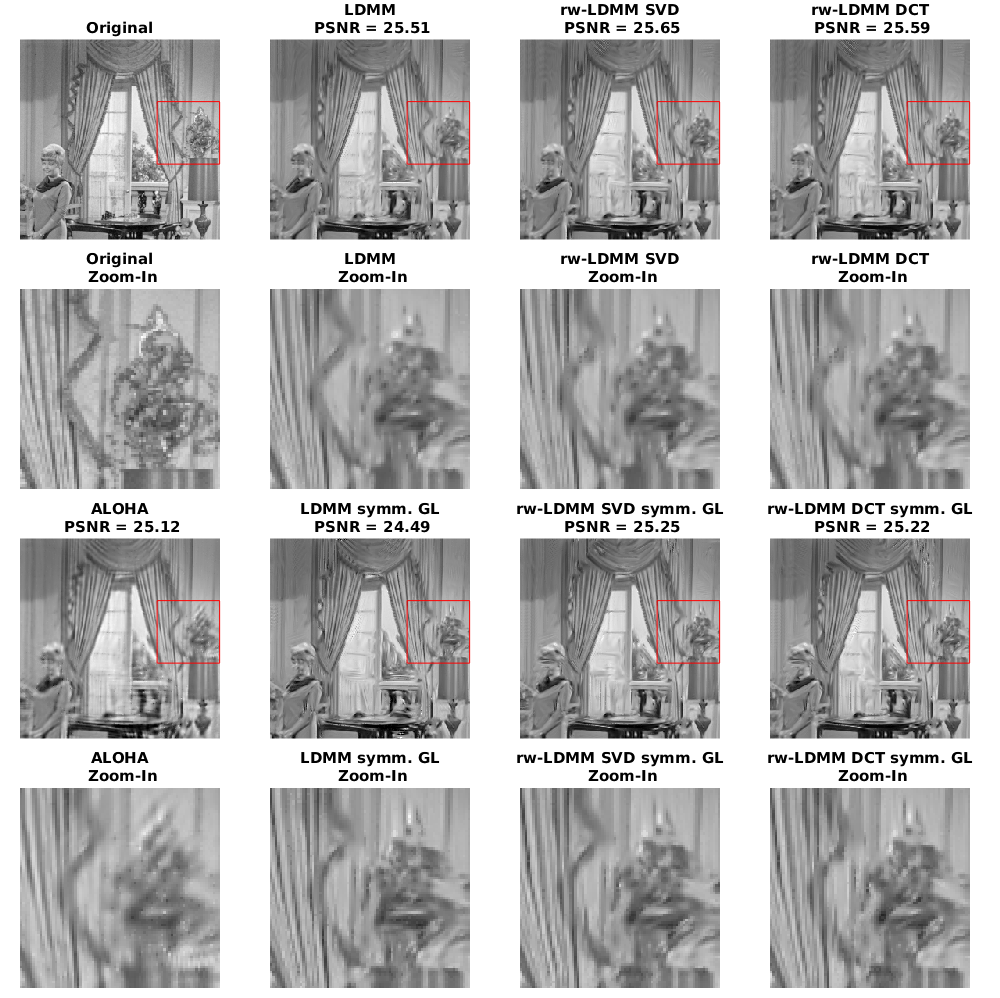}
\caption{Comparing reconstructed $256\times 256$ {\sc couple} images from $15\%$ subsamples using LDMM, rw-LDMM, and ALOHA.}
\label{fig:couple_15percent_collage}
\end{figure}

\begin{figure}[htbp]
\includegraphics[width = \textwidth]{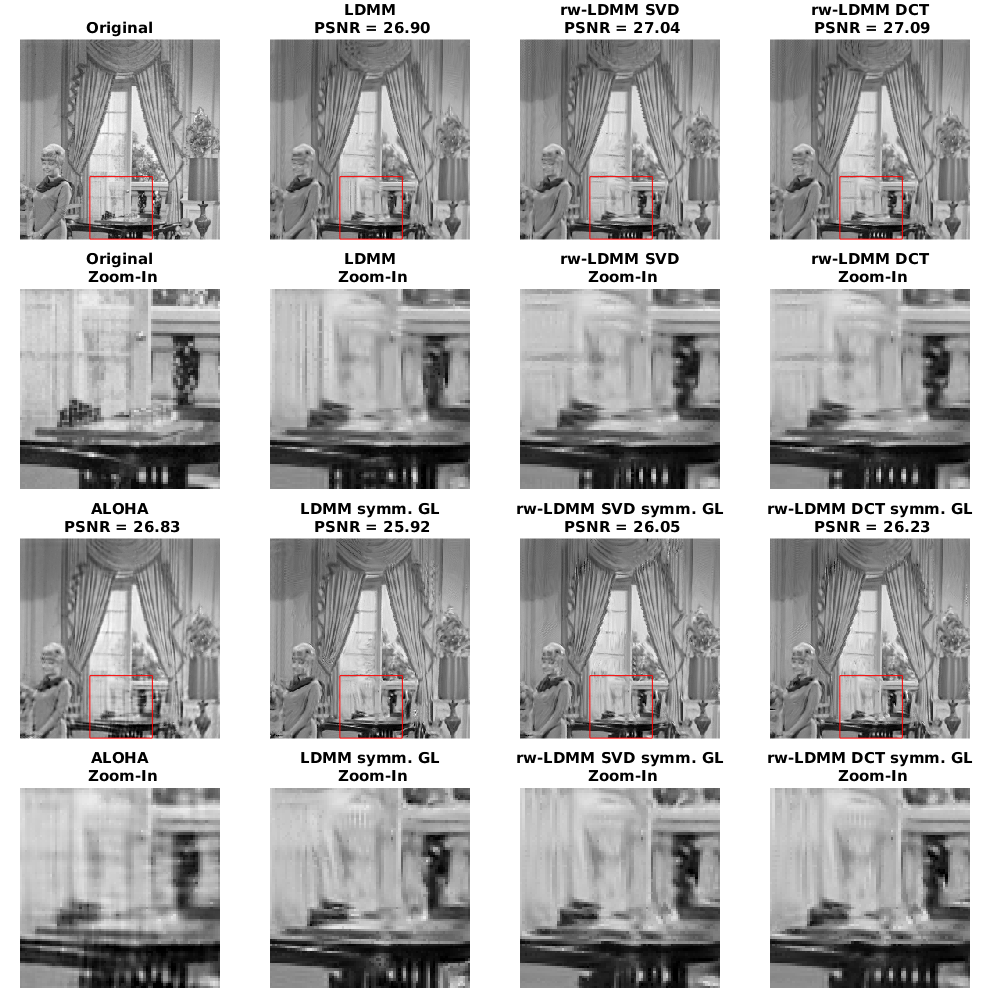}
\caption{Comparing reconstructed $256\times 256$ {\sc couple} images from $20\%$ subsamples using LDMM, rw-LDMM, and ALOHA.}
\label{fig:couple_20percent_collage}
\end{figure}

\begin{figure}[htbp]
\includegraphics[width = \textwidth]{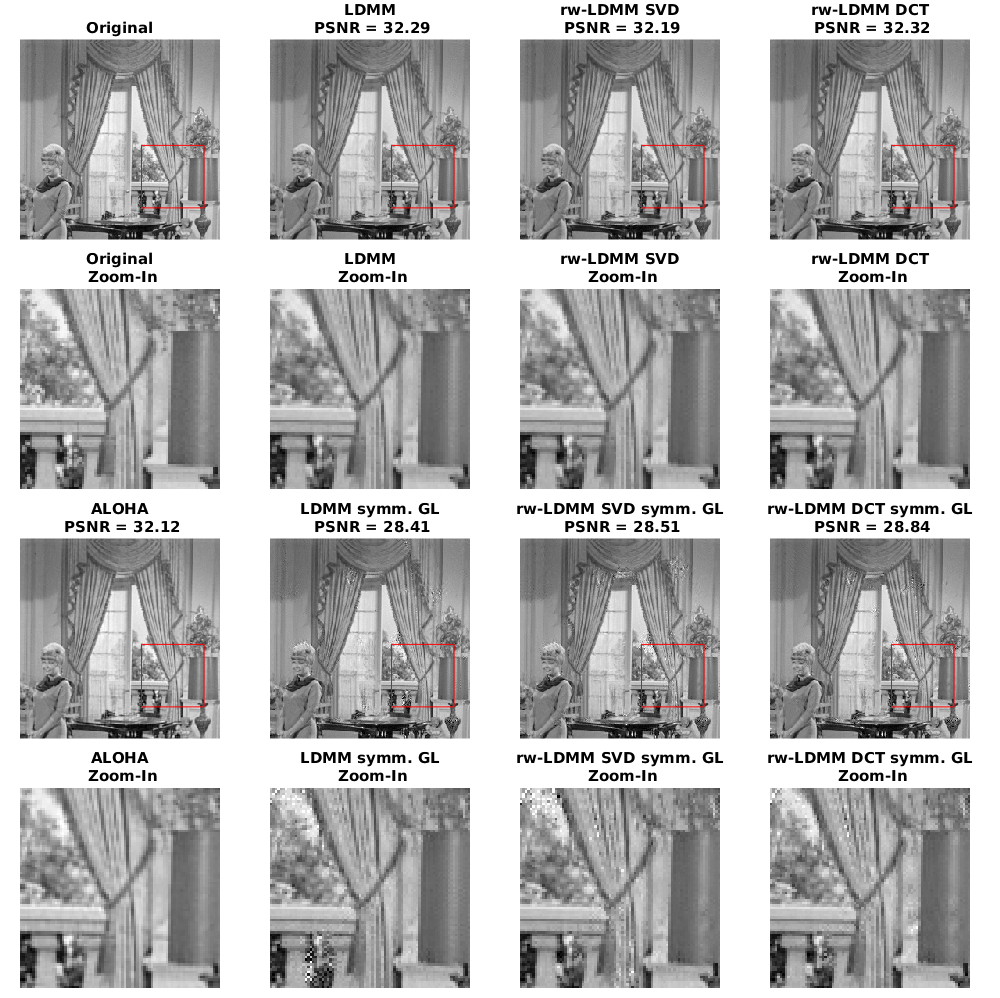}
\caption{Comparing reconstructed $256\times 256$ {\sc couple} images from $50\%$ subsamples using LDMM, rw-LDMM, and ALOHA.}
\label{fig:couple_50percent_collage}
\end{figure}

\begin{figure}[htbp]
\includegraphics[width = \textwidth]{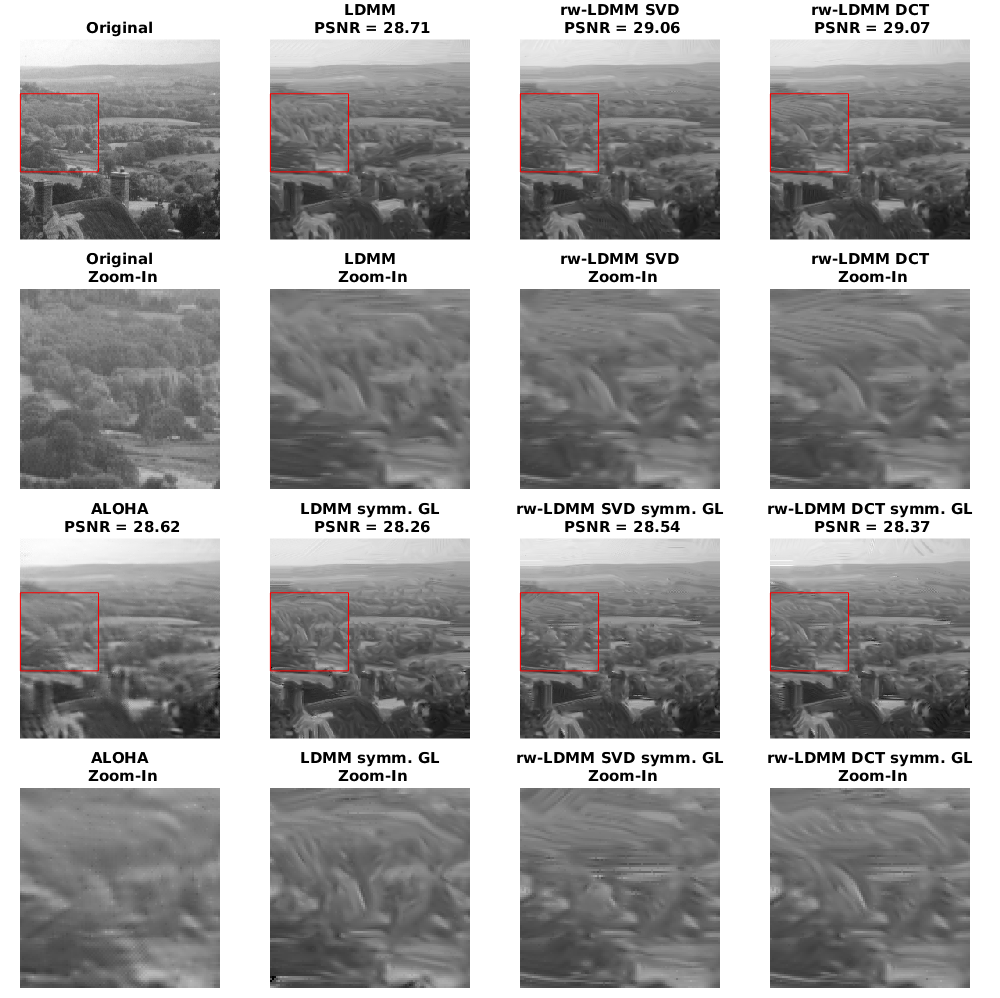}
\caption{Comparing reconstructed $256\times 256$ {\sc hill} images from $10\%$ subsamples using LDMM, rw-LDMM, and ALOHA.}
\label{fig:hill_10percent_collage}
\end{figure}

\begin{figure}[htbp]
\includegraphics[width = \textwidth]{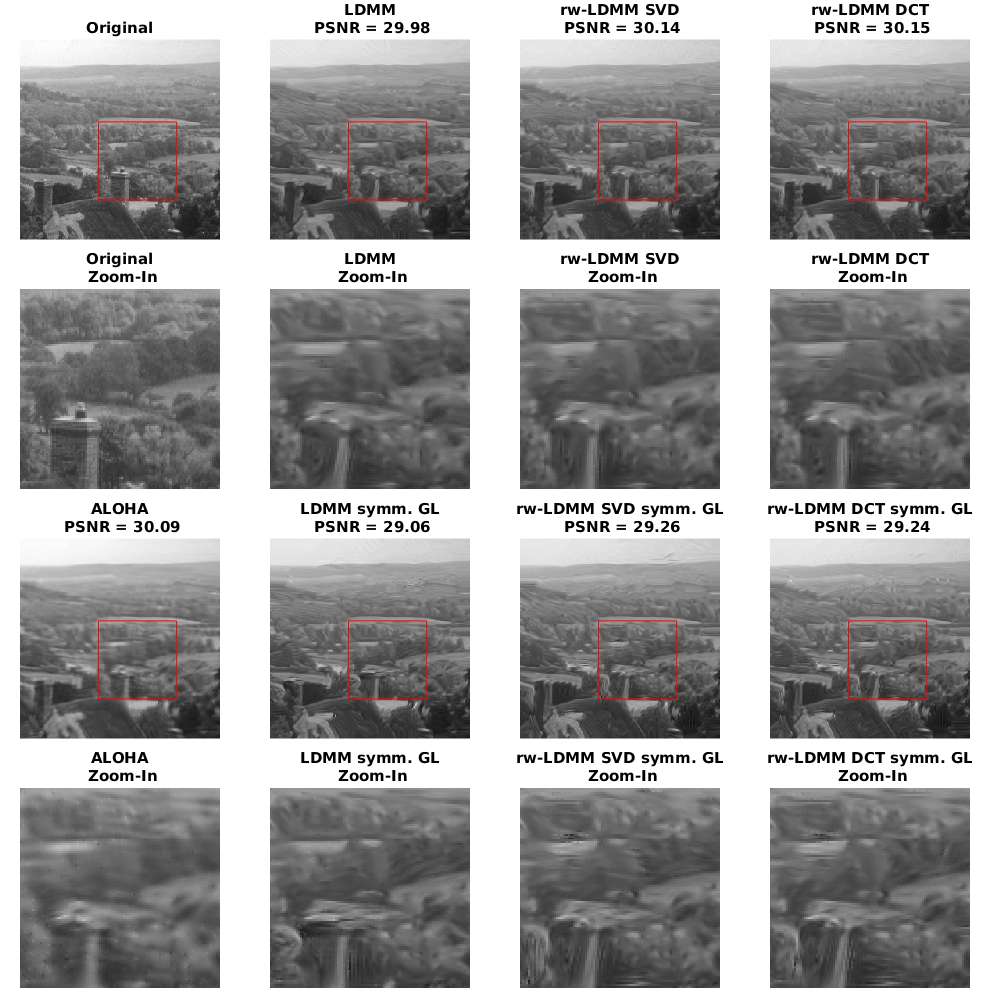}
\caption{Comparing reconstructed $256\times 256$ {\sc hill} images from $15\%$ subsamples using LDMM, rw-LDMM, and ALOHA.}
\label{fig:hill_15percent_collage}
\end{figure}

\begin{figure}[htbp]
\includegraphics[width = \textwidth]{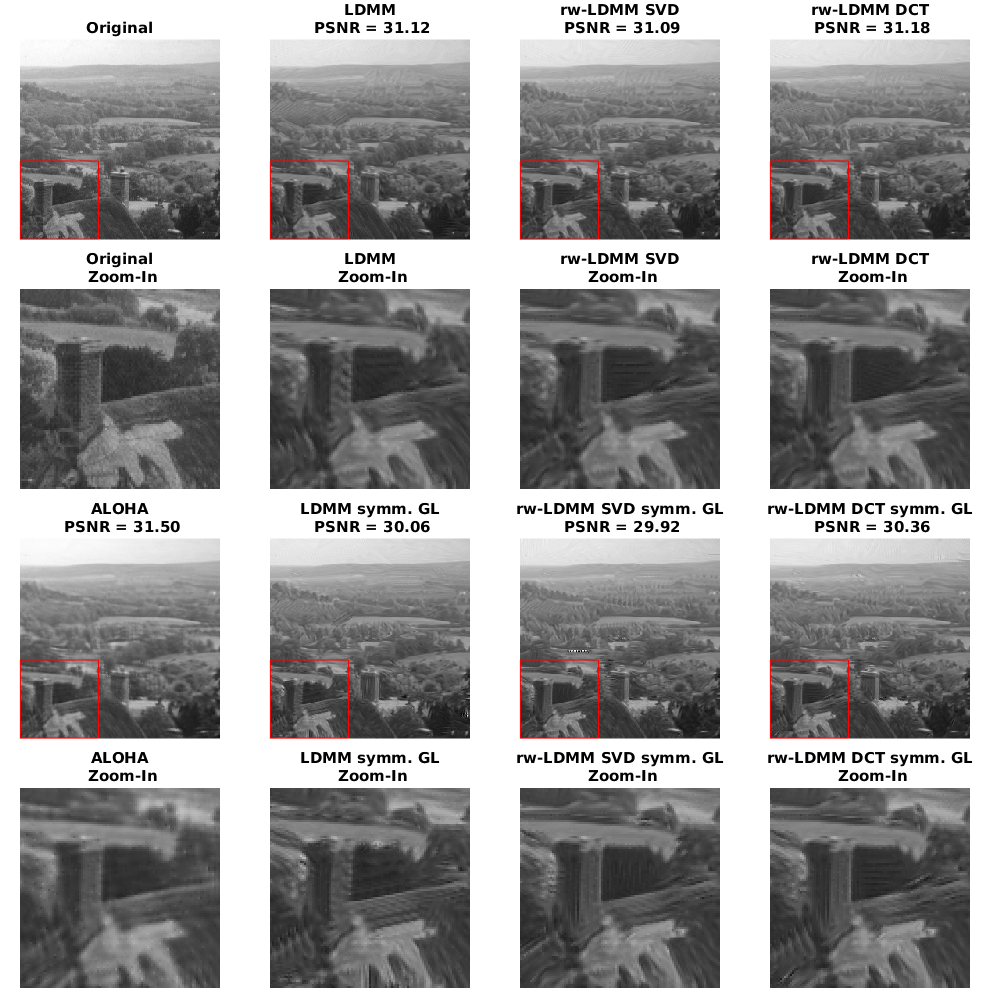}
\caption{Comparing reconstructed $256\times 256$ {\sc hill} images from $20\%$ subsamples using LDMM, rw-LDMM, and ALOHA.}
\label{fig:hill_20percent_collage}
\end{figure}

\begin{figure}[htbp]
\includegraphics[width = \textwidth]{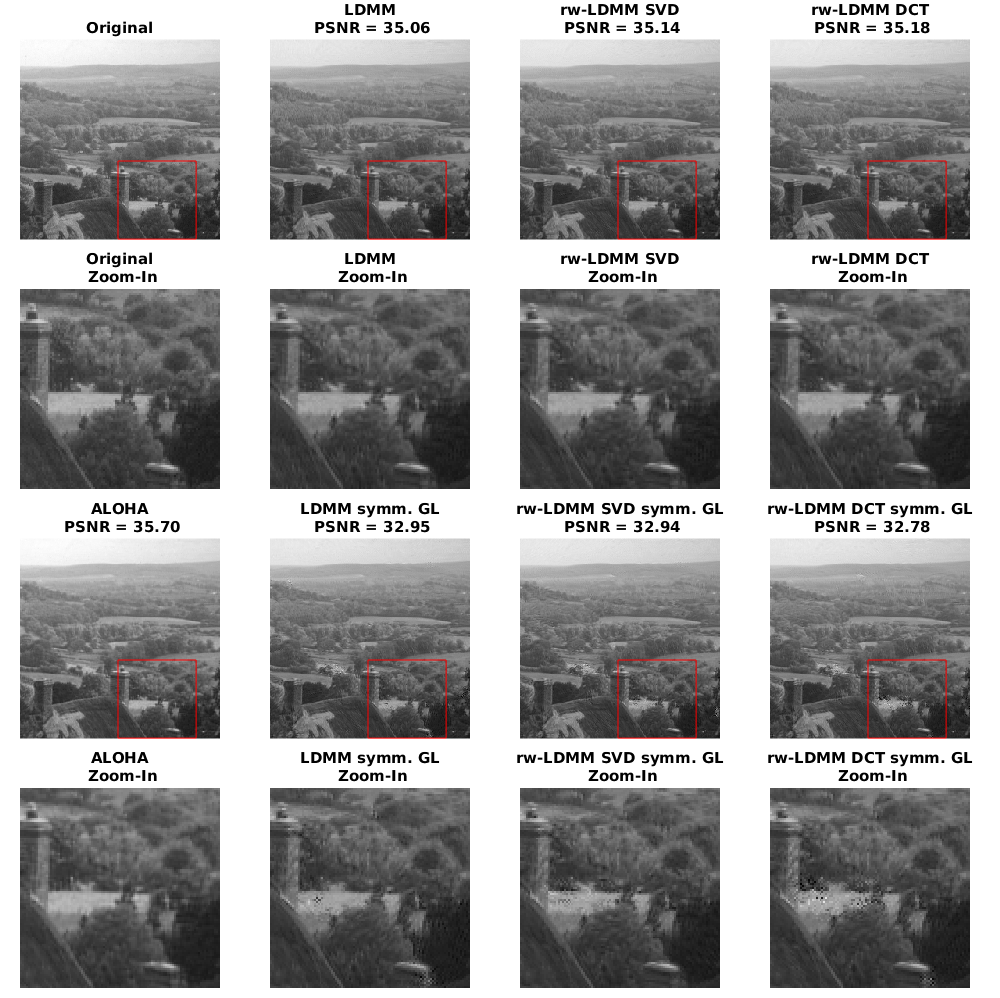}
\caption{Comparing reconstructed $256\times 256$ {\sc hill} images from $50\%$ subsamples using LDMM, rw-LDMM, and ALOHA.}
\label{fig:hill_50percent_collage}
\end{figure}

\begin{figure}[htbp]
\includegraphics[width = \textwidth]{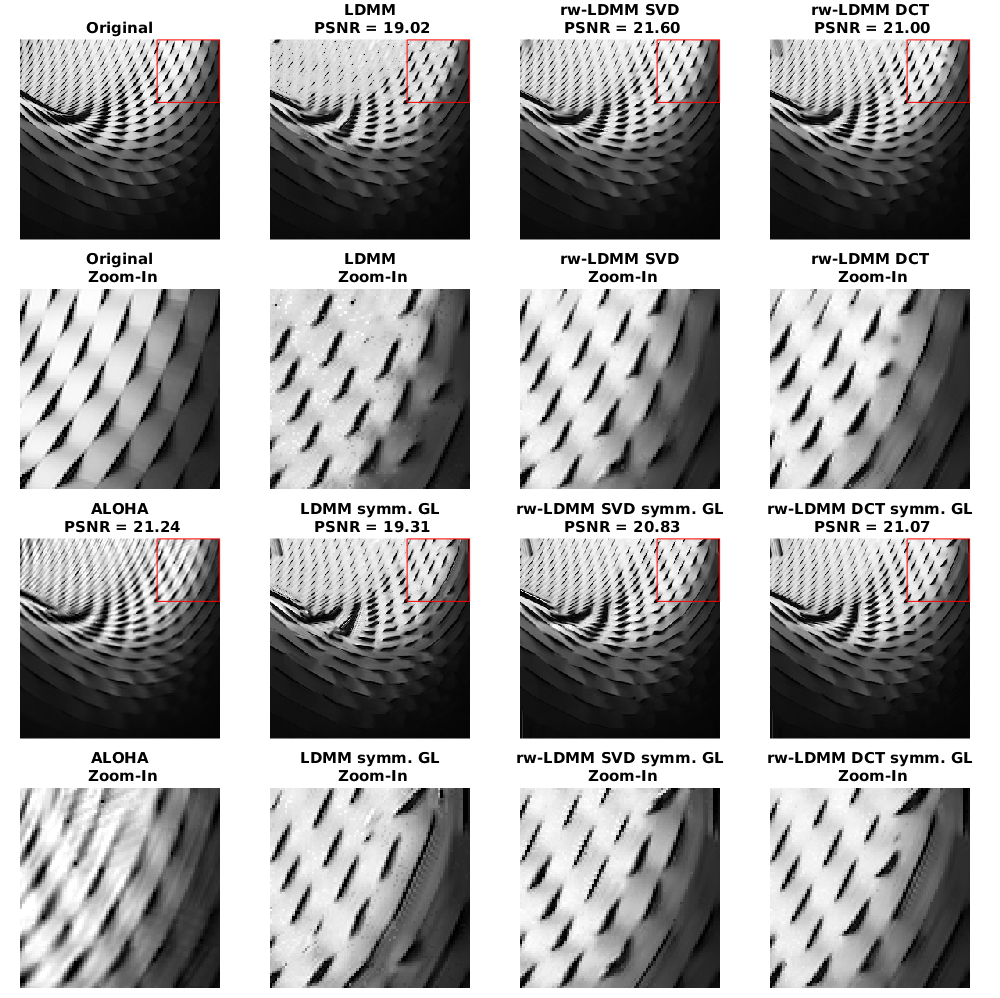}
\caption{Comparing reconstructed $256\times 256$ {\sc Swirl} images from $10\%$ subsamples using LDMM, rw-LDMM, and ALOHA.}
\label{fig:swirl_10percent_collage}
\end{figure}

\begin{figure}[htbp]
\includegraphics[width = \textwidth]{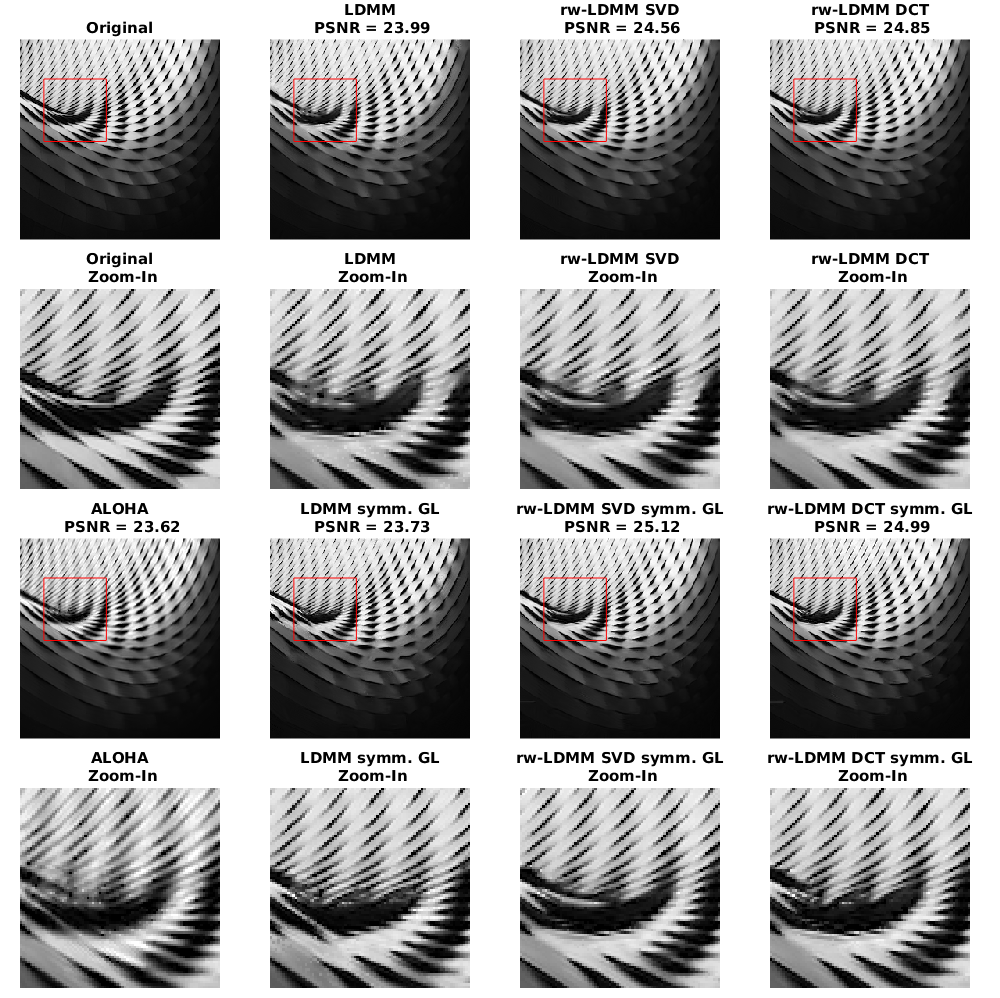}
\caption{Comparing reconstructed $256\times 256$ {\sc Swirl} images from $15\%$ subsamples using LDMM, rw-LDMM, and ALOHA.}
\label{fig:swirl_15percent_collage}
\end{figure}

\begin{figure}[htbp]
\includegraphics[width = \textwidth]{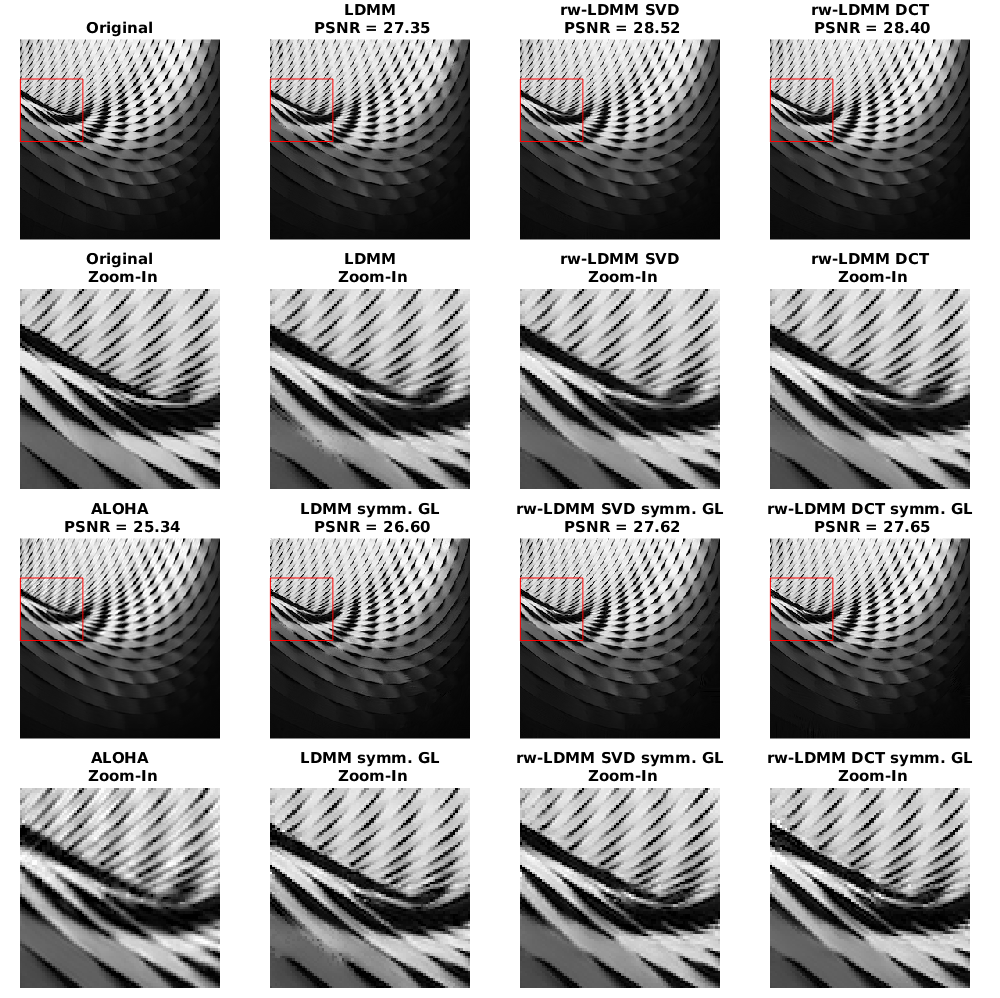}
\caption{Comparing reconstructed $256\times 256$ {\sc Swirl} images from $20\%$ subsamples using LDMM, rw-LDMM, and ALOHA.}
\label{fig:swirl_20percent_collage}
\end{figure}

\begin{figure}[htbp]
\includegraphics[width = \textwidth]{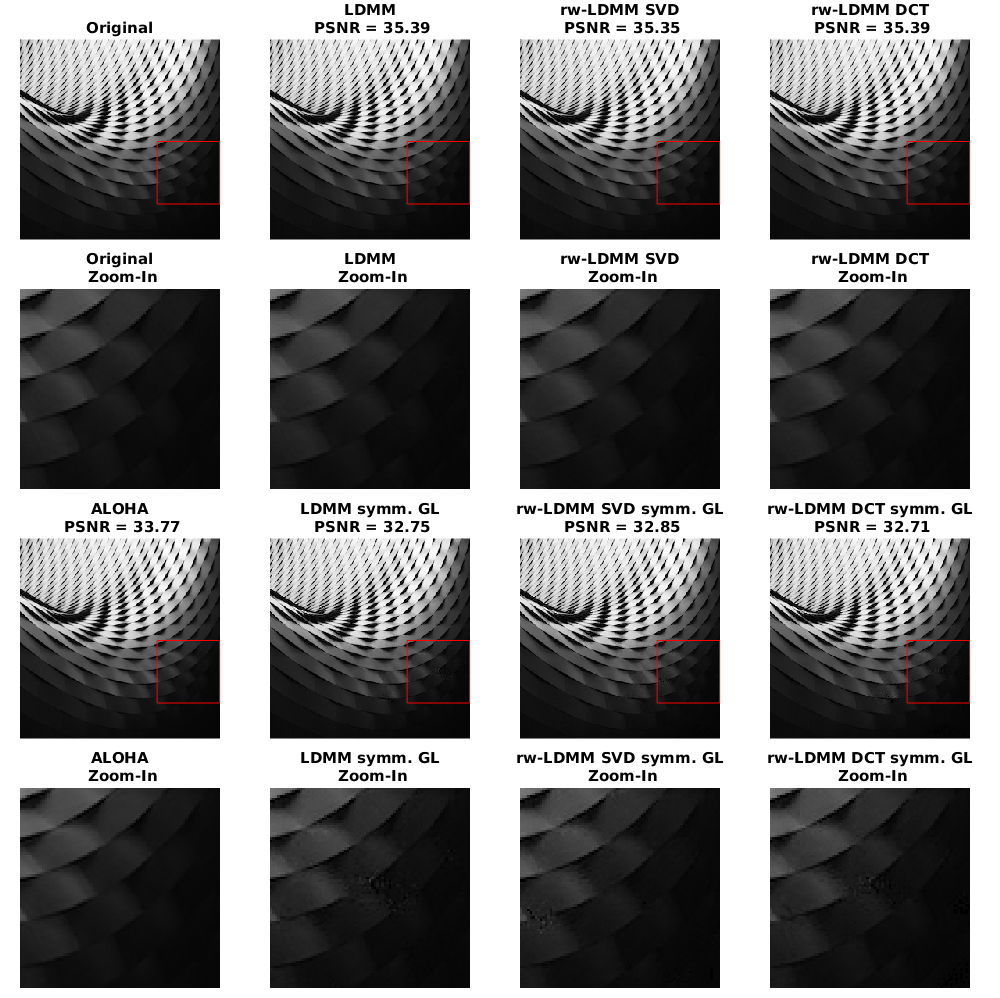}
\caption{Comparing reconstructed $256\times 256$ {\sc Swirl} images from $50\%$ subsamples using LDMM, rw-LDMM, and ALOHA.}
\label{fig:swirl_50percent_collage}
\end{figure}


\section{PSNR vs. Number of Iterations in LDMM and rw-LDMM}
\label{sec:psnr-vs-num_of_iterations}

In this section, we plot the PSNR values as a function of the number of iterations in LDMM and rw-LDMM (SVD and DCT) for $8$ test images with subsample rate $10\%$. Both symmetrized and un-symmetrized graph diffusion Laplacians are used. The goal of this experiment is to validate whether the proposed rw-LDMM algorithm outperforms the original LDMM in \cite{LDMM} consistently for different numbers of total iterations. The numerical results suggest that both rw-LDMM SVD and rw-LDMM DCT often yield higher PSNR than the original LDMM within the first $100$ iterations for the images tested. Note that the PSNR values shown in \cref{fig:psnr_curves} may be slightly different from the results in Section~\ref{sec:comp-ldmm-rewe} because they come from a different random initialization.

\begin{figure}[htbp]
\includegraphics[width = 0.5\textwidth]{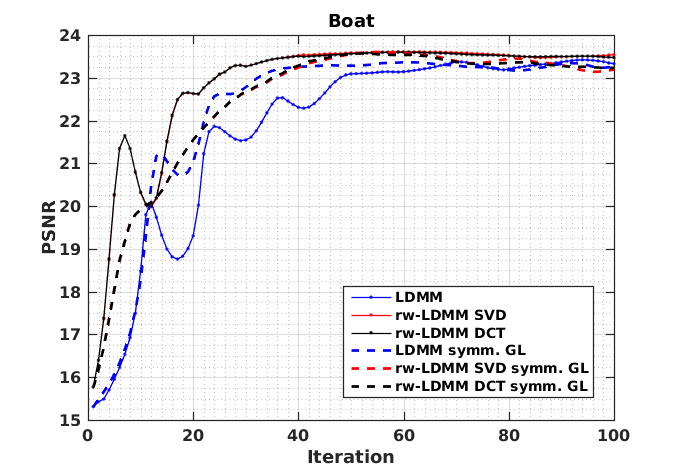}
\includegraphics[width = 0.5\textwidth]{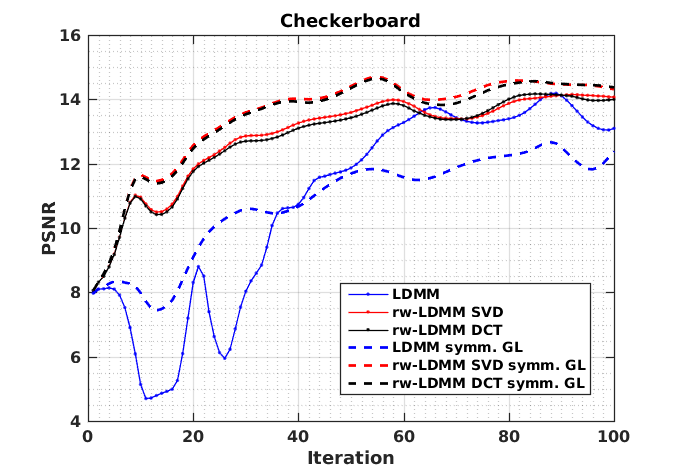}
\includegraphics[width = 0.5\textwidth]{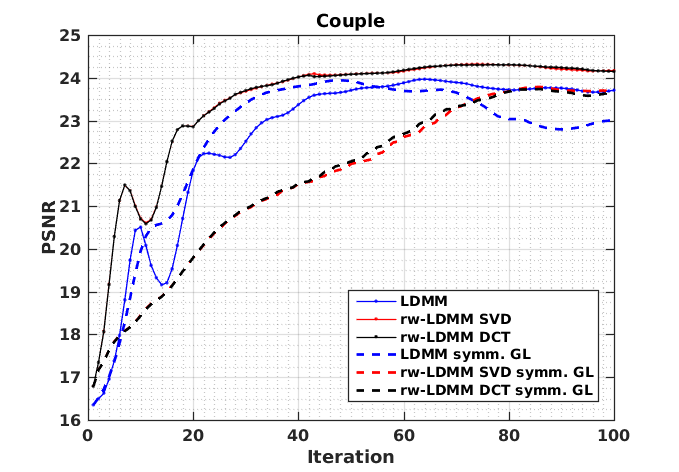}
\includegraphics[width = 0.5\textwidth]{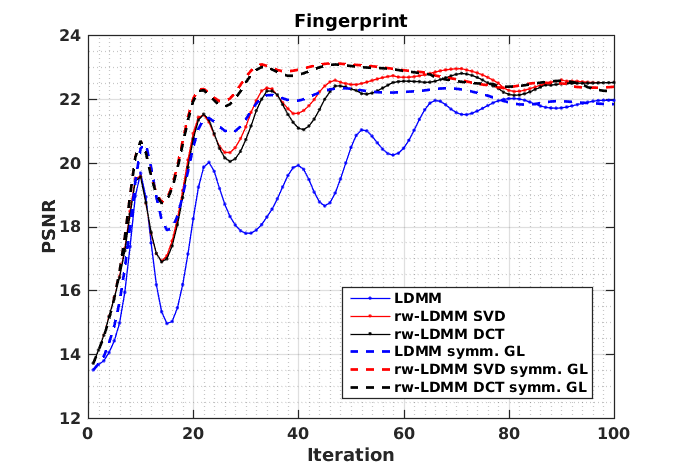}
\includegraphics[width = 0.5\textwidth]{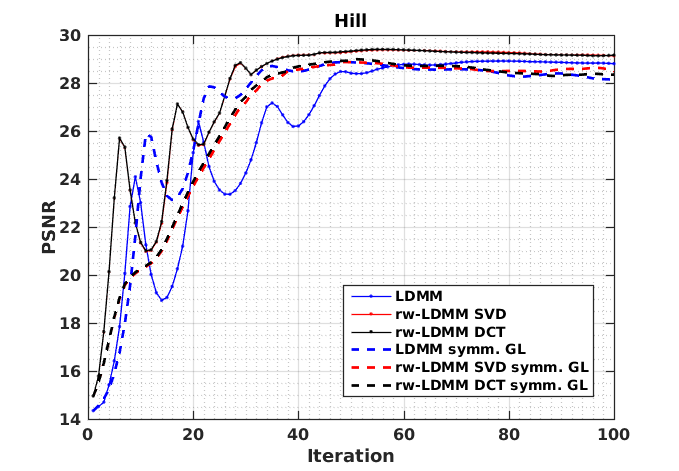}
\includegraphics[width = 0.5\textwidth]{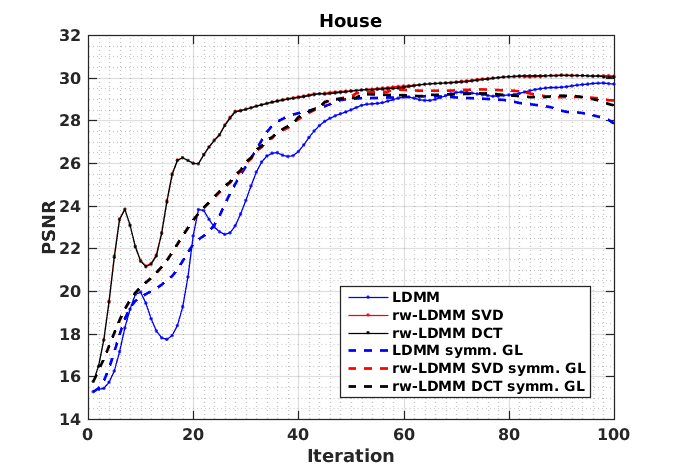}
\includegraphics[width = 0.5\textwidth]{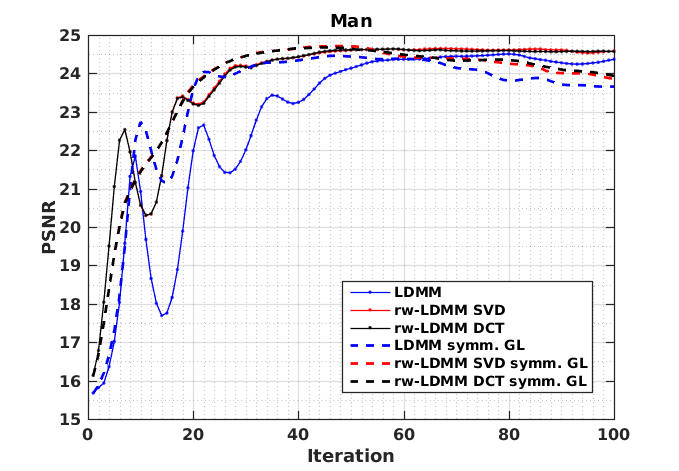}
\includegraphics[width = 0.5\textwidth]{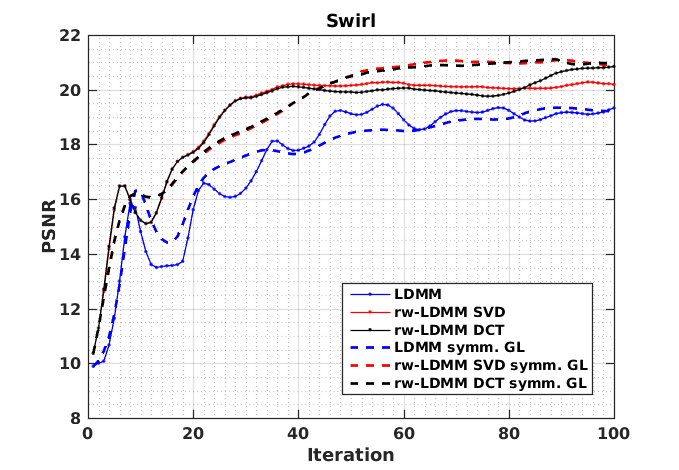}
\caption{Curves depicting the change of PSNR values as the number of iterations increases for $8$ test images. The initial subsample rates are all set to $10\%$. It is worth noting from these figures that rw-LDMM algorithms outperform the original LDMM (both in terms of final and maximum PSNR)  consistently for a wide range of numbers of iterations. Also, in some of these figures the PSNR curves for rw-LDMM SVD (solid blue lines with dots) and rw-LDMM DCT (solid red lines with dots) stay very closely to each other, indicating that the methodology of rw-LDMM is robust to these choices of local basis.}
\label{fig:psnr_curves}
\end{figure}

\end{document}